\definecolor{r}{rgb}{0.0, 0.0, 0.0}
\def\A{\mathcal{A}} 
\def\L{\mathcal{L}} 
\def\M{\mathcal{M}} 
\def\N{\mathcal{N}} 
\def\G{\mathcal{G}} 
\def\R{\mathcal{R}} 
\def\P{\mathcal{P}}
\def\Q{\mathcal{Q}}
\def\I{\mathcal{I}}
\def\vs{\vspace{0.2cm}} 
\newtheorem{theorem}{Theorem}[]
\newtheorem{lemma}[theorem]{Lemma}
\newtheorem{proposition}[theorem]{Proposition}
\DeclareMathOperator*{\argmin}{\arg\!\min}
\journal{Arxiv}
\begin{document}
\begin{frontmatter}

\title{The Mori-Zwanzig formulation of deep learning}

\author[ucsc]{Daniele Venturi\corref{correspondingAuthor}}
\cortext[correspondingAuthor]{Corresponding author}
\ead{venturi@ucsc.edu}

\author[penn]{Xiantao Li} 
\ead{Xiantao.Li@psu.edu}
\address[ucsc]{Department of Applied Mathematics, UC Santa Cruz, Santa Cruz, CA 95064}
\address[penn]{Department of Mathematics, Pennsylvania State University, State College, PA 16801}

\begin{abstract}
We develop a new formulation of deep learning based on the Mori-Zwanzig (MZ) formalism of irreversible statistical mechanics. The new formulation is built upon the well-known duality between deep neural networks and discrete dynamical systems, and it allows us to directly propagate quantities of interest (conditional expectations and probability density functions) forward and backward through the network by means of exact linear operator equations. Such new equations can be used as a starting point to develop new effective parameterizations of deep neural networks, and provide a new framework to study deep-learning via operator theoretic methods. The proposed MZ formulation of deep learning naturally introduces a new concept, i.e., the memory of the neural network, which plays a fundamental role in low-dimensional modeling and parameterization. By using the theory of contraction mappings, we develop sufficient conditions for the memory of the neural network to decay with the number of layers. This allows us to rigorously transform deep networks into shallow ones, e.g., by reducing the number of neurons per layer (using projection operators), or by reducing the total number of layers (using the decay property of the memory operator).
\end{abstract}

\end{frontmatter}

\section{Introduction}
It has been recently shown that new insights on deep learning 
can be obtained by regarding the process of training a 
deep neural network as a discretization of an optimal control  
problem involving  nonlinear differential equations 
\cite{Weinan2019,Weinan2017,he2016deep}.
One attractive feature of this formulation is that it allows 
us to use tools from dynamical system theory such as 
the Pontryagin maximum principle or the Hamilton-Jacobi-Bellman 
equation to study deep learning from a rigorous mathematical 
perspective \cite{Li2018,he2016identity,lu2017beyond}. 
For instance, it has been recently shown that 
by idealizing deep residual networks as continuous-time
dynamical systems it is possible to derive sufficient conditions 
for universal approximation in $L^p$, 
which can also be understood as an approximation theory 
that leverages flow maps generated by dynamical systems \cite{Li2022}.

In the spirit of modeling a deep neural network as a 
flow of a discrete dynamical system, in this paper we 
develop a new formulation of deep learning based on the 
Mori-Zwanzig (MZ) formalism. The MZ formalism was 
originally developed in statistical mechanics  
\cite{Mori,zwanzig1961memory} to formally integrate 
under-resolved phase variables in 
nonlinear dynamical systems by means of a 
projection operator. One of the main features 
of such formulation is that it allows us to 
systematically derive exact evolution equations for 
quantities of interest, e.g., macroscopic observables, 
based on microscopic equations of motion \cite{ciccotti1981derivation,hijon2010mori,IzVo06,chorin2000optimal,
chen2014computation,VenturiBook,dominy2017duality,
zhu2018faber,zhu2019generalized}. 

In the context of deep learning, the MZ formalism 
can be used to reduce the total number of degrees 
of freedom of the neural network, e.g., by reducing 
the number of neurons per layer (using projection operators), 
or by transforming deep networks into shallows networks, 
e.g., by approximating the MZ memory operator.
%
Computing the solution of the MZ equation for 
deep learning is not an easy task. One of the 
main challenges is the approximation of the memory 
term and the fluctuation (noise) term, 
which encode the interaction between 
the so-called orthogonal dynamics and the dynamics 
of the quantity of interest. In the context of neural networks, 
the orthogonal dynamics is essentially a discrete high-dimensional 
flow governed by a difference equation that is hard to solve. 
Despite these difficulties, the MZ equation of deep 
learning is formally exact, and can be used as a 
starting point to build useful 
approximations and parameterizations that target the 
output function directly. Moreover, it provides a new 
framework to study deep-learning via operator theoretic 
approaches. For example, the analysis of the 
memory term in the MZ formulation may shed light on the 
behaviour of recent neural network architectures such as 
the long short-term memory (LSTM) network
\cite{sherstinsky2020fundamentals,harlim2021machine}.  

This paper is organized as follows. In section \ref{sec:modeling} 
we briefly review the formulation of deep learning as a control 
problem involving a discrete stochastic dynamical system. 
In section \ref{sec:compositiontransfer} we introduce the 
composition and transfer operators associated with the neural 
network. Such operators are the discrete analogues of the 
stochastic Koopman \cite{Mezic2019,zhu2020} and 
Frobenius-Perron operators in classical continuous-time 
nonlinear dynamics. In the neural network setting the 
composition and transfer operators are integral operators 
with kernel given by the conditional transition density 
between one layer and the next.
In section \ref{sec:trainingParadigms} we discuss {\color{r}different 
training paradigms for stochastic neural networks, i.e., the 
classical ``training over weights'' paradigm, 
and a novel ``training over noise'' paradigm. Training over noise 
can be seen as an instance of transfer learning in which 
we optimize for the PDF of the noise to re-purpose a previously 
trained neural network to another task, without changing 
the neural network weights and biases.}
In section \ref{sec:MZequations} we present
the MZ formulation of deep learning and derive the operator equations 
at the basis of our theory. In section \ref{sec:Projections_1} 
we introduce a particular class of projection operators, 
i.e., Mori's projections \cite{zhu2019generalized} and study 
their properties. {\color{r} In section \ref{sec:fadingMemory} 
we develop the analysis of the MZ equation}, and derive sufficient 
conditions under which the MZ memory term decays with the 
number of layers. This allows us to approximate the MZ 
memory term with just a few terms and re-parameterize 
the network accordingly. The main findings are 
summarized in section \ref{sec:summary}. We also include two appendices 
in which we establish theoretical results concerning the composition and 
transfer operators for neural networks with additive random 
perturbations, {\color{r}and prove the Markovian property  
of neural networks driven by discrete random processes 
characterized by statistically independent random 
vectors.}

\section{Modeling neural networks as discrete stochastic dynamical systems}
\label{sec:modeling}
We model a neural network with $L$ layers as a discrete 
stochastic dynamical system of the form 
{\color{r}
\begin{equation}
\bm X_{n+1} =\bm H_n(\bm X_n,\bm w_n,\bm \xi_n), \qquad {n=0,1, \cdots, L-1}.
\label{discrete_dyn}
\end{equation}
Here, the index $n$ labels a specific layer in the network,
$\bm H_n$ is the transition function of the $(n+1)$ layer,
$\bm X_0\in \mathbb{R}^d$ is the network input, 
$\bm X_{n}\in \mathbb{R}^{d_{n}}$ is 
the output of the $n$-th layer\footnote{\color{r}The 
dimension of the vectors $\bm X_n$ 
and $\bm X_{n+1}$ can vary from layer 
to layer, e.g., in encoding or decoding neural networks \cite{kingma2013auto}.},
$\{\bm \xi_0,\ldots,\bm\xi_{L-1}\}$ are random 
vectors, and $\bm w_n\in\mathbb{R}^{q_n}$ are 
parameters characterizing  the $(n+1)$ layer. We allow the 
input $\bm X_0$ to be random.
Furthermore, we assume that the random 
vectors $\{\bm \xi_{0},\ldots,\bm \xi_{L-1}\}$ are 
statistically independent, and that $\bm \xi_n$ is 
independent of past and current states, i.e., 
$\{\bm X_0,\ldots,\bm X_{n}\}$. In this 
assumption, the neural network model \eqref{discrete_dyn} 
defines a Markov process $\{\bm X_n\}$ 
(see \ref{app:NN_characterization}). Further assumptions about
the mapping $\bm H_n$ and its relation to the noise process 
will be stated in subsequent sections. 
\begin{figure}
\centerline{\includegraphics[scale=0.42]{./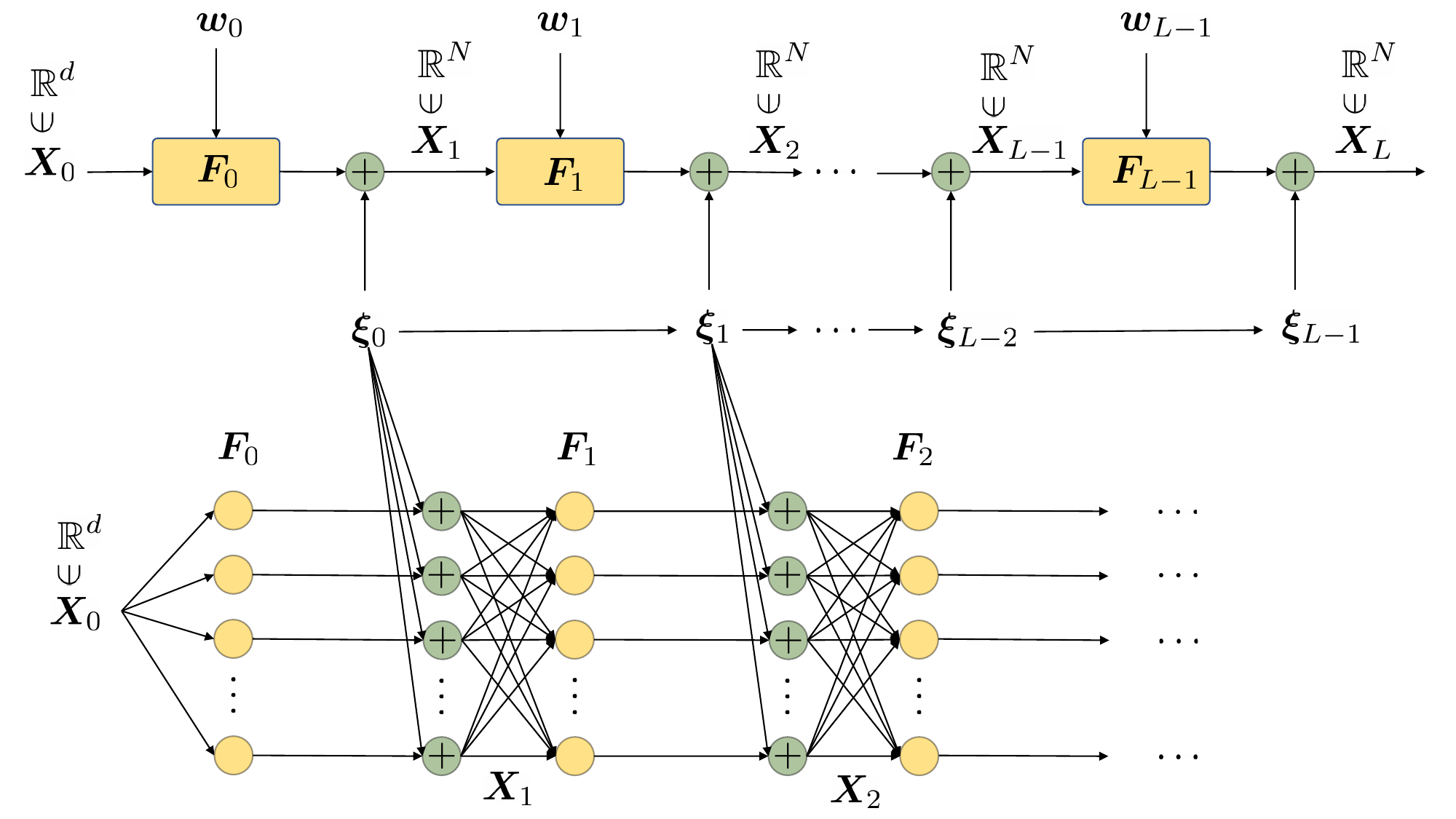}}
\caption{\color{r}Sketch of a stochastic neural network model of the 
form \eqref{discrete_sys_10}, with $L$ layers and $N$ neurons 
per layer. We assume that the random vectors 
$\{\bm \xi_{0},\ldots,\bm \xi_{L-1}\}$ are 
statistically independent, and that $\bm \xi_n$ is 
independent of past and current states, i.e., 
$\{\bm X_0,\ldots,\bm X_{n}\}$. With these  
assumptions,  $\{\bm X_0,\ldots,\bm X_L\}$ is a 
Markov process (see \ref{app:NN_characterization}).}
\label{fig:net2}
\end{figure}
The general formulation \eqref{discrete_dyn} includes 
the following important classes of neural networks: 
 
\begin{enumerate}
\item {\em Neural networks perturbed by additive 
random noise} (Figure \ref{fig:net2}). 
These models are of the form
\begin{equation}
\bm X_{n+1} = \bm F_n(\bm X_n,\bm w_n)+\bm \xi_n, \qquad 
n=0,\ldots, L-1.
\label{discrete_sys_10}
\end{equation} 
The mapping $\bm F_n$ is often defined as a composition of a 
layer-dependent affine transformation with 
an {activation function} $\varphi$, i.e., 
\begin{equation}
\bm F_n(\bm X_n,\bm w_n) = \varphi(\bm W_n \bm X_n+\bm b_n) \qquad 
\bm w_n = \{\bm W_n,\bm b_n\},
\label{activationFunction}
\end{equation} 
where $\bm W_n$ is a $d_{n+1}\times d_n$ weight 
matrix, and $\bm b_n\in\mathbb{R}^{d_{n+1}}$ is a bias 
vector.\\

\item {\em Neural networks perturbed by multiplicative random noise} (Figure \ref{fig:net1}). These models are of the form
\begin{equation}
\bm X_{n+1} =\bm F_n(\bm X_n,\bm w_n)+
\bm M_n(\bm X_n)\bm \xi_n, \qquad n=0,\ldots, L-1,
\label{discrete_sys}
\end{equation} 
where $\bm M_n(\bm X_n)$ is a matrix depending on $\bm X_n$.\\

\begin{figure}[t]
\centerline{\includegraphics[scale=0.42]{./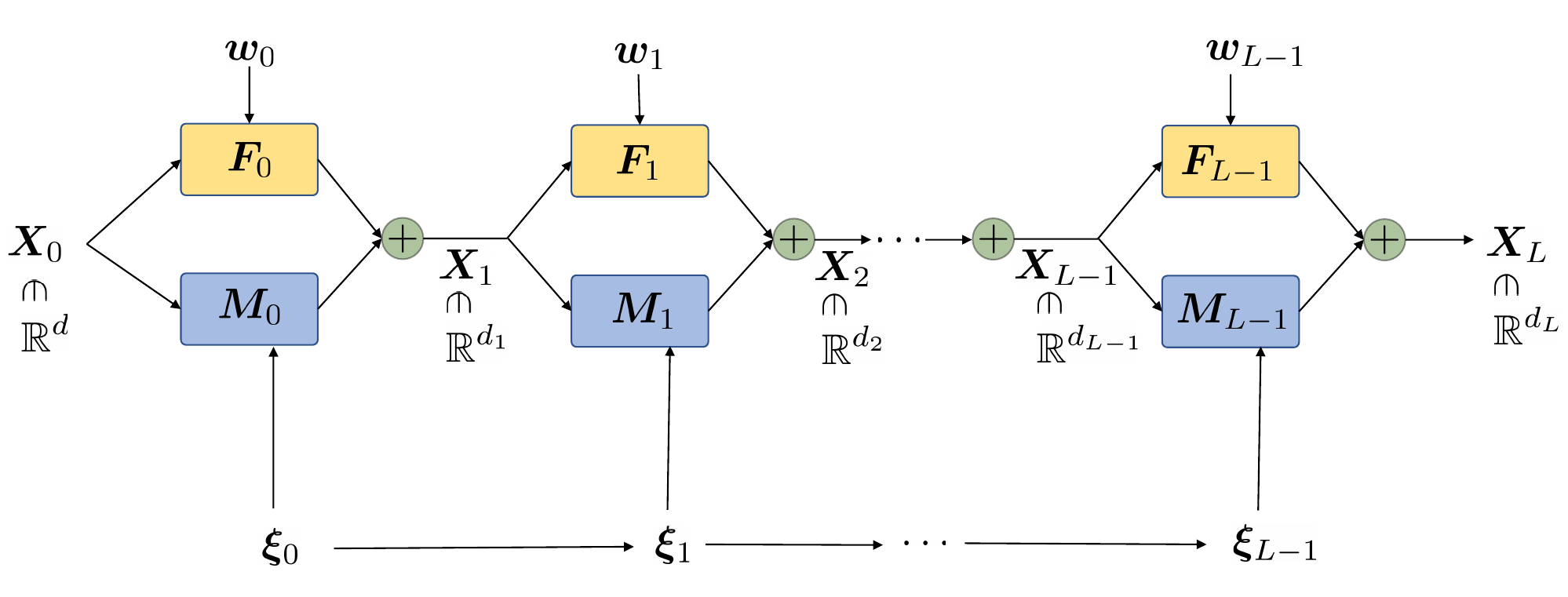}}
\caption{\color{r}Sketch of the stochastic neural network 
model \eqref{discrete_sys}. We assume that the random vectors 
$\{\bm \xi_{0},\ldots,\bm \xi_{L-1}\}$ are 
statistically independent, and that $\bm \xi_n$ is 
independent of past and current states, i.e., 
$\{\bm X_0,\ldots,\bm X_{n}\}$. In this 
assumption, the neural network model \eqref{discrete_sys} 
defines a Markov process $\{\bm X_n\}$. Note that the dimension 
of the vectors $\bm X_n$ can vary from layer to layer, 
e.g., in encoding or decoding neural networks.}
\label{fig:net1}
\end{figure}

\item {\em Neural networks with random weights and biases} \cite{gonon2020risk,yu2021simple}. These models are 
of the form 
\begin{equation}
\bm X_{n+1}= \varphi\left({\bm {Z}}_n \bm X_n+{\bm {z}}_n\right),
 \qquad n=0,\ldots, L-1
\label{NNrandomweights}
\end{equation}
where ${\bm {Z}}_n $ are random weight matrices, 
and ${\bm {z}}_n$ are random bias vectors.
The pairs $\{{\bm {Z}}_n, {\bm {z}}_n\}$ 
and $\{{\bm {Z}}_j, {\bm {z}}_j\}$ are assumed to be 
statistically independent for $n\neq j$. Moreover, 
$\{{\bm {Z}}_j, {\bm {z}}_j\}$ are independent of the 
neural network states $\{\bm X_0,\ldots,\bm X_{j}\}$ 
for all $j=0,\ldots, L-1$.
\end{enumerate}
}
\noindent
{\color{r}In this article, we will focus our attention 
primarily on neural network models with additive random 
noise, i.e., models of the form \eqref{discrete_sys_10}. 
The functional setting for these models is extensively 
discussed in \ref{sec:functionalsetting}.} 
{\color{r} The neural network output is usually written as 
\begin{equation}
q_L(\bm x)=\bm \alpha\cdot 
\mathbb{E}\left[\bm X_L|\bm X_0=\bm x\right],
\label{out}
\end{equation}
where $\bm \alpha$ is a vector of output weights, and 
$\mathbb{E}\left[\bm X_L|\bm X_0=\bm x\right]$ is the 
expectation of the random vector $\bm X_L$ 
conditional to $\bm X_0=\bm x$. In the absence 
of noise, \eqref{out} reduces to the well-known 
function composition rule 
\begin{equation}
q_L(\bm x) = \bm \alpha\cdot \bm F_{L-1}(\bm F_{L-2}(\cdots \bm F_1(\bm F_0(\bm x, \bm w_0),\bm w_1),\cdots,\bm w_{L-2}),\bm w_{L-1}).
\label{compR}
\end{equation}
The neural network parameters 
$\{\bm \alpha,\bm w_0,\ldots,\bm w_{L-1}\}$ 
appearing in \eqref{out} or \eqref{compR} are usually  
determined by minimizing a dissimilarity 
measure between $q_L(\bm x)$ and a given target 
function $f(\bm x)$ (supervised learning). 
By adding random noise to the neural network, e.g., 
in the form of additive noise or by randomizing 
weights and biases, we are essentially 
adding an infinite number of degrees of freedom 
to the system, which can be leveraged for training
and transfer learning (see section \ref{sec:trainingParadigms}).}

\section{Composition and transfer operators for neural networks}
\label{sec:compositiontransfer}

In this section we derive the composition and transfer 
operators associated with the neural network model \eqref{discrete_dyn}, 
which map, respectively, the conditional expectation
$\mathbb{E}\left\{\bm u(\bm X_L)|\bm X_n=\bm x\right\}$ 
(where $\bm u(\cdot)$ is a user-defined measurable function) 
and $p_n(\bm x)$ (the probability density of $\bm X_n$)  
forward and backward across the network. To this end, we 
assume that {\color{r} the random vectors 
$\{\bm \xi_0,\ldots,\bm \xi_{L-1}\}$ in \eqref{discrete_dyn}
are statistically independent, and that $\bm \xi_n$ is 
independent of past and current states, i.e., 
$\{\bm X_0,\ldots,\bm X_{n}\}$,
With these assumptions, $\{\bm X_n\}$ in \eqref{discrete_dyn} is a 
discrete Markov process (see \ref{app:NN_characterization}). 
Hence, the joint probability density function (PDF) 
of the random vectors $\{\bm X_0,\ldots,\bm X_L\}$, 
i.e., joint PDF of the state of the entire neural network, 
can be factored\footnote{\label{fot3}In equation \eqref{jointPDFtransition}
we used the shorthand notation $p_{i|j}(\bm x_i|\bm x_j)$ to denote the conditional probability density function 
of the random vector $\bm X_i$ given $\bm X_j=\bm x_j$. 
With this notation we have that the conditional 
probability density of $\bm X_i$ given $\bm X_i=\bm y$ is 
$p_{i|i}(\bm x|\bm y)=\delta(\bm x_i-\bm x_j)$, where $\delta(\cdot)$ 
is the Dirac delta function.} as}
\begin{equation}
p(\bm x_0,\ldots,\bm x_L)=p_{L|L-1}(\bm x_L|\bm x_{L-1})
p_{L-1|L-2}(\bm x_{L-1}|\bm x_{L-2})\cdots p_{1|0}(\bm x_1|\bm x_0) 
p_0(\bm x_0).
\label{jointPDFtransition}
\end{equation}
By using the identity {\color{r} (Bayes' theorem)}
\begin{equation}
p(\bm x_{k+1},\bm x_k)=p_{k+1|k}(\bm x_{k+1}|\bm x_k)
p_{k}(\bm x_k)= p_{k|k+1}(\bm x_{k}|\bm x_{k+1})
p_{k+1}(\bm x_{k+1}) 
\end{equation}
we see that the chain of transition 
probabilities \eqref{jointPDFtransition} can be 
reverted, yielding
\begin{equation}
p(\bm x_0,\ldots,\bm x_L)=p_{0|1}(\bm x_0|\bm x_{1})
p_{1|2}(\bm x_{1}|\bm x_{2})\cdots
p_{L-1|L}(\bm x_{L-1}|\bm x_L) 
p_L(\bm x_L).
\label{jointPDFtransitionInv}
\end{equation}
From these expressions, it follows that 
\begin{equation}
p_{n|q}(\bm x|\bm y)=\int p_{n|j}(\bm x|\bm z) 
p_{j|q}(\bm z|\bm y)d\bm z,
\label{trasition}
\end{equation}
for all indices $n$, $j$ and $q$ in $\{0,\ldots,L\}$, 
excluding $n=j=q$. 
The transition probability equation \eqref{trasition} 
is known as {\em discrete Chapman-Kolmogorov equation} and it
allows us to define the transfer operator 
mapping the PDF $p_n(\bm x_n)$ into 
$p_{n+1}(\bm x_{n+1})$, together 
with the composition operator for the 
conditional expectation 
$\mathbb{E}\{\bm u(\bm x_L)|\bm X_n=\bm x_{n}\}$. 
As we shall see hereafter, the discrete composition 
and transfer operators are adjoint to one another.

\subsection{Transfer operator}

Let us denote by $p_q(\bm x)$ the PDF of 
$\bm X_q$, i.e., the output of the $q$-th 
neural network layer. We first define the operator that maps 
$p_q(\bm x)$ into $p_n(\bm x)$. 
By integrating the joint probability density of $\bm X_n$ 
and $\bm X_q$, i.e., $p_{n|q}(\bm x|\bm y)p_q(\bm y)$
with respect to $\bm y$ we immediately obtain 
\begin{equation}
p_n(\bm x)=\int p_{n|q}(\bm x|\bm y)
p_q(\bm y)d\bm y.
\end{equation}
At this point, it is convenient to define the linear operator
\begin{equation}
\N(n,q) f(\bm x) = \int p_{n|q}(\bm x|\bm y)
f(\bm y)d\bm y.
\label{N}
\end{equation}
$\N(n,q)$ is known as {\em transfer} (or Frobenius-Perron) operator \cite{dominy2017duality}. From a 
mathematical viewpoint $\N(n,q)$ is a integral operator with 
kernel $p_{n|q}(\bm x,\bm y)$,  i.e., the transition 
density integrated ``from the right''. It follows from 
the Chapman-Kolmogorov identity \eqref{trasition} 
that the set of integral operators $\{\N(n,q)\}$ 
satisfies
\begin{equation}
\N(n,q) = \N(n,j)\N(j,q),\qquad \N(j,j)=\I,\qquad 
\forall n,j,q \in \{0,\ldots,L\},
\end{equation}
{\color{r} where $\I$ is the identity operator.}
The operator $\N$ allows us to map the one-layer PDF, 
e.g., the PDF of $\bm X_q$, either forward or backward 
across the neural network (see Figure \ref{fig:net3}). 
As an example, consider a network with four layers 
and states $\bm X_0$ (input), $\bm X_1$, $\bm X_2$, $\bm X_3$, 
and $\bm X_4$ (output). Then Eq. \eqref{N} implies that,
\begin{align}
p_2(\bm x)
=\underbrace{\N(2,1)\N(1,0)}_{\N(2,0)} p_0(\bm x)=
\underbrace{\N(2,3)\N(3,4)}_{\N(2,4)} p_4(\bm x).\nonumber
\end{align}
In summary, we have 
\begin{equation}
p_n(\bm x) = \N(n,q)p_q(\bm x)\qquad 
\forall n,q\in\{0,\ldots,L\}, 
\end{equation}
where 
\begin{equation}
\N(n,q)p_q(\bm x) = 
\int p_{n|q}(\bm x|\bm y)p_{q}(\bm y)d\bm y.
\label{Ndef}
\end{equation}
We emphasize that modeling the PDF dynamics via neural 
networks has been studied extensively in machine 
learning, e.g., in the theory of normalizing 
flows for density estimation or variational inference \cite{rezende2015variational,kobyzev2020normalizing,tabak2010density}.

\subsection{Composition operator}
\label{sec:compositionOperator}
For any  measurable deterministic function 
$\bm u(\bm x)$, the expectation of 
$\bm u(\bm X_j)$ conditional to $\bm X_n=\bm x$ is 
defined as   
\begin{equation}
\mathbb{E}\left\{\bm u(\bm X_j)|\bm X_n=\bm x\right\} = 
\int  \bm u (\bm y) p_{j|n}(\bm y|\bm x)d\bm y.
\label{conditionalMoment1}
\end{equation}
A substitution of \eqref{trasition} into 
\eqref{conditionalMoment1} yields
\begin{equation}
\mathbb{E}\left\{\bm u(\bm X_j)|\bm X_n=\bm x\right\} = \int  \mathbb{E}\left\{\bm u(\bm X_j)|\bm X_q=\bm y\right\} p_{q|n}(\bm y|\bm x)d\bm y, 
\label{M0}
\end{equation}
which holds for all $j,n,q\in \{0,\ldots, L-1\}$. At this point, 
it is convenient to define the integral operator
\begin{equation}
\M(n,q)f(\bm x)= \int  f(\bm y) p_{q|n}(\bm y|\bm x)d\bm y,
\label{M}
\end{equation}
which is known as {\em composition} \cite{dominy2017duality} or ``stochastic 
Koopman'' \cite{Mezic2019,zhu2020} operator. {\color{r}The operator \eqref{M} is also related to the Kolmogorov backward equation \cite{pavliotis2014stochastic} }.
Thanks to the Chapman-Kolmogorov identity \eqref{trasition}, the
operators $\M(q,j)$ satisfy
\begin{equation}
\M(n,q) = \M(n,j)\M(j,q),\qquad \M(j,j)=\I,\qquad \forall n,j,q \in \{0,\ldots,L\},
\label{groupM}
\end{equation}
{\color{r} where $\I$ is the identity operator}.
Equation \eqref{groupM} allows us to map the 
conditional expectation  \eqref{conditionalMoment1}  
of any measurable phase space function $\bm u(\bm X_j)$ 
forward or backward through the network. 
As an example, consider again a neural network with four 
layers and states $\{\bm X_0,\dots,\bm X_4\}$. 
We have
\begin{align}
\mathbb{E}\{\bm u(\bm X_j)|\bm X_2=\bm x\} 
=&\M(2,3)\M(3,4) 
\mathbb{E}\{\bm u(\bm X_j)|\bm X_4=\bm x\} 
\nonumber\\
=& \M(2,1)\M(1,0)  
\mathbb{E}\{\bm u(\bm X_j)|\bm X_0=\bm x\}.
\label{p99}
\end{align}
Equation \eqref{p99} holds for every  $j\in\{0,..,4\}$. Of particular 
interest in the machine-learning context is the 
conditional expectation of $\bm u(\bm X_L)$ 
(network output) given $\bm X_0=\bm x$ (network input), 
which can be computed as 
\begin{align}
\mathbb{E}\{\bm u(\bm X_L)|\bm X_0=\bm x\} 
&=\M(0,L) \bm u(\bm x), \nonumber\\
&=\M(0,1)\M(1,2)\cdots \M(L-1,L) \bm u(\bm x), 
\label{composition1} 
\end{align}
i.e., by propagating 
$\bm u(\bm x)=\mathbb{E}\{\bm u(\bm X_L)|
\bm X_L=\bm x\}$ 
{\em backward} through the neural network using 
single layer operators $\M(i-1,i)$. 
Similarly, we can compute, e.g., 
$\mathbb{E}\{\bm u(\bm X_0)|\bm X_L=\bm x\}$ as 
\begin{equation}
\mathbb{E}\{\bm u(\bm X_0)|\bm X_L=\bm x\} =
\M(L,0) \bm u(\bm x).
\end{equation}
For subsequent analysis, it is convenient to define 
\begin{equation}
\bm q_n(\bm x) = 
\mathbb{E}\{\bm u(\bm X_L)|\bm X_{L-n}=\bm x\}.
\label{qdef}
\end{equation}
In this way, if  
$\mathbb{E}\{\bm u(\bm X_L)|\bm X_n=\bm x\}$
is propagated {\em backward} through the network 
by $\M(n-1,n)$, then $\bm q_n(x)$ is propagated 
{\em forward} by the operator 
\begin{align}
\G(n,q)=&\M(L-n,L-q).
\label{G}
\end{align} 
In fact, equations \eqref{qdef}-\eqref{G} allow 
us to write  \eqref{composition1} in the equivalent form 
\begin{align}
\bm q_L(\bm x) 
= &\G(L,L-1)\bm q_{L-1}(\bm x)\nonumber\\
=& \G(L,L-1)\cdots \G(1,0)\bm q_{0}(\bm x),
\label{38}
\end{align}
i.e., as a forward propagation problem (see Figure \ref{fig:net3}). 
Note that we can write \eqref{38} 
(or \eqref{composition1}) explicitly 
in terms of iterated integrals involving single-layer transition 
densities as 
\begin{align}
\bm q_L(\bm x) = &\int \bm u(\bm y) 
p_{0|L}(\bm y|\bm x)d\bm y\nonumber\\ 
= &\int \bm u(\bm y) 
\left(\int\cdots \int p_{L|L-1}(\bm y|\bm x_{L-1})
\cdots p_{2|1}(\bm x_2|\bm x_1)  p_{1|0}(\bm x_1|\bm x) d\bm x_{L-1}\cdots d\bm x_{1}
\right)d\bm y.
\label{39}
\end{align}

%

\begin{figure}
\centerline{\includegraphics[scale=0.45]{./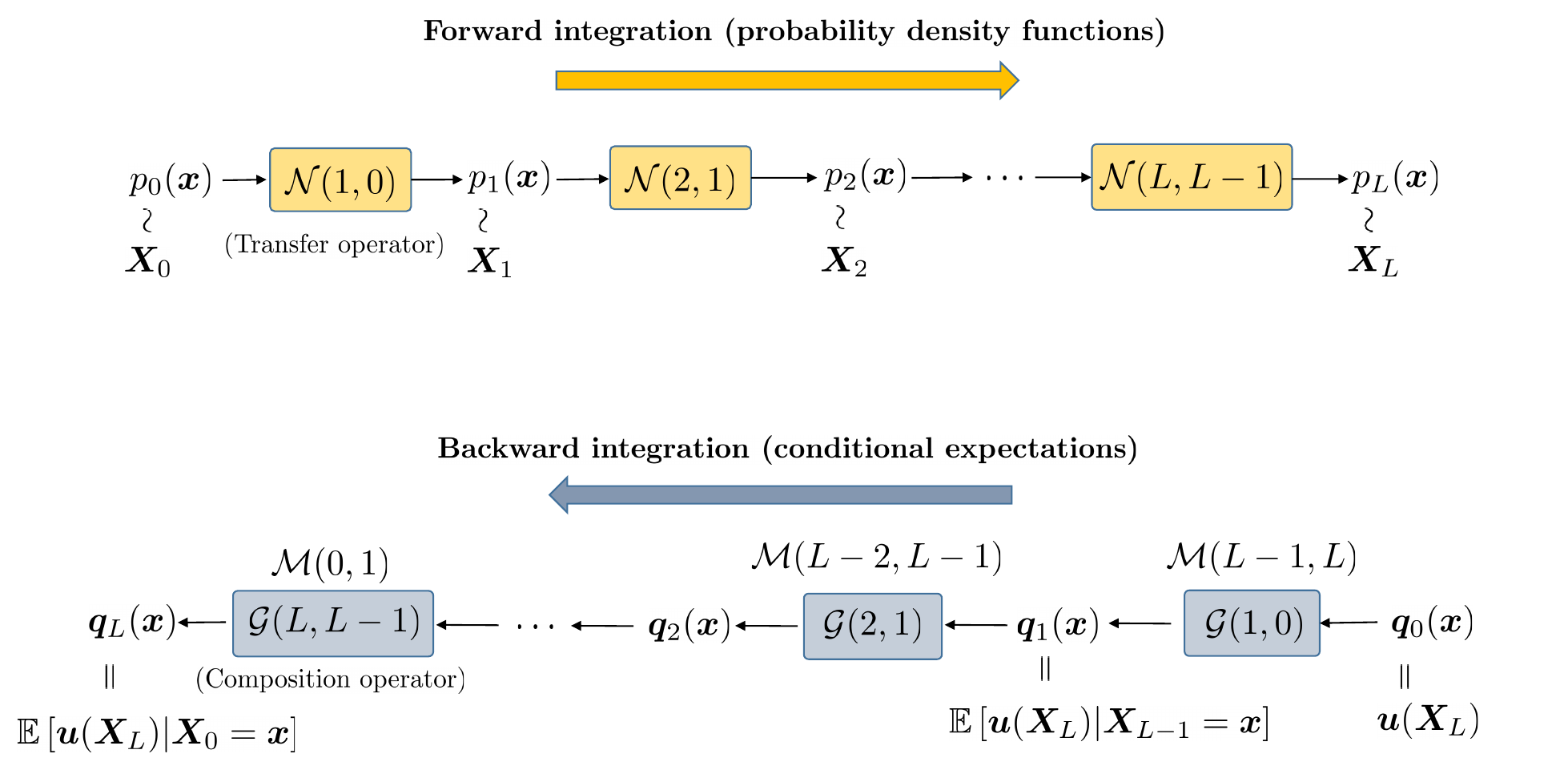}}
\caption{Sketch of the forward/backward integration process for probability density functions (PDFs) and conditional expectations. The transfer operator $\N(n+1,n)$ maps {\color{r}the PDF of $\bm X_n$ into the PDF of $\bm X_{n+1}$ }forward through the neural network. On the other hand, the composition operator $\M$ maps the conditional expectation $\mathbb{E}\left[\bm u(\bm X_L)|\bm X_{n+1}=\bm x\right]$ backwards to $\mathbb{E}\left[\bm u(\bm X_L)|\bm X_{n}=\bm x\right]$. By defining the operator $\G(n,m)=\M(L-n,L-m)$ we can transform the backward propagation problem for $\mathbb{E}\left[\bm u(\bm X_L)|\bm X_{n}=\bm x\right]$ into a forward propagation 
problem for $\bm q_n(\bm x)=\mathbb{E}\left[\bm u(\bm X_L)|\bm X_{L-n}=\bm x\right]$.}
\label{fig:net3}
\end{figure}

\subsection{Relation between composition and transfer operators}
\label{sec:MandN}
The integral operators $\M$ and $\N$ defined 
in \eqref{M} and \eqref{N} involve 
the same kernel function, i.e., the multi-layer 
transition density $p_{q|n}(\bm x,\bm y)$. 
In particular, $\M(n,q)$ integrates $p_{q|n}$  ``from the left'', 
while $\N(q,n)$ integrates it ``from the right''.
It is easy to show that $\M(n,q)$ and $\N(q,n)$ 
are adjoint to each other relative to the standard 
inner product in $L^2$ (see \cite{dominy2017duality} 
for the continuous-time case).  In fact, 
\begin{align}
\mathbb{E}\{\bm u(\bm X_k)\}
=&\int \mathbb{E}\{\bm u(\bm X_k)|\bm X_q=\bm x\} 
p_q(\bm x)d\bm x\nonumber\\
=&\int \left[\M(q,j) \mathbb{E}\{\bm u(\bm X_k)|\bm X_j=\bm x\}\right] p_q(\bm x)d \bm x\nonumber\\
=&\int \mathbb{E}\{\bm u(\bm X_k)|\bm X_j=\bm x\} 
\N(j,q) p_q(\bm x)d\bm x.
\end{align}  
Therefore 
\begin{equation}
\M(q,j)^*=\N(j,q) \qquad \forall q,j\in\{0,\ldots,L\}, 
\label{adjointrelation}
\end{equation}
where $\M(q,j)^*$ denotes the operator adjoint of $\M(q,j)$
with respect to the $L^2$ inner product. By 
invoking the definition \eqref{G}, we can also 
write \eqref{adjointrelation} as 
\begin{equation}\label{eq: duality-gn}
\G(L-q,L-j)^*=\N(j,q), \qquad \forall j,q\in\{0,\ldots,L\}.
\end{equation}
In \ref{sec:functionalsetting} we show that if the cumulative distribution 
function of each random vector $\bm \xi_n$ in the noise process
has partial derivatives that are Lipschitz  continuous 
in $\mathscr{R}(\bm \xi_n)$ (range of $\bm \xi_n$), then the composition 
and transfer operators defined in Eqs. \eqref{M} and \ref{N} 
are { bounded} in $L^2$ (see Proposition \ref{prop:L2boundeness} and Proposition \ref{lemma:operatorbounds}).
Moreover, is possible to choose the probability density 
of $\bm \xi_n$ such that the single layer composition 
and transfer operators become strict {contractions}. 
This property will be used {\color{r}in section \ref{sec:fadingMemory}} to 
prove that the memory of a stochastic neural network 
driven by particular types of noise decays with the number of layers.\\

{\subsection{Multi-layer conditional transition density}}
\label{sec:transition_density}
We have seen that the composition and the 
transfer operators $\M$ and $\N$ defined in Eqs. 
\eqref{M} and  \eqref{N}, allow us 
to push forward and backward conditional expectations and 
probability densities across the neural 
network. Moreover, such operators are adjoint
to one another {\color{r}(section \ref{sec:MandN})}, 
and also have the same kernel, i.e., the transition density 
$p_{n|q}(\bm x_n|\bm x_q)$. 
In this section, we derive analytical formulas for 
the one-layer transition density 
$p_{n+1|n}(\bm x_{n+1}|\bm x_n)$ 
{\color{r} corresponding to the neural network models 
we discussed in section \ref{sec:modeling}.}
The multi-layer transition density 
$p_{n|q}(\bm x_n|\bm x_q)$ 
is then obtained by composing one-layer 
transition densities as follows
\begin{equation}
p_{n|q}(\bm x_n|\bm x_q) =\int \cdots \int
p_{n|n-1}(\bm x_n|\bm x_{n-1})\cdots 
p_{q+1|q}(\bm x_{q+1}|\bm x_{q})
d \bm x_{n-1}\cdots d \bm x_{q+1}.
\end{equation} 
{\color{r}
We first consider the general class of stochastic 
neural network models defined by equation \eqref{discrete_dyn}. 
By the definition of conditional probability density,  we have 
\begin{align}
p_{n+1|n}(\bm x_{n+1}|\bm x_{n}) = \int 
p_{\bm X_{n+1}|\bm X_n,\bm \xi_n}(\bm x_{n+1}|\bm x_n,\bm \xi_n)p_{\bm \xi_n|\bm X_n}(\bm \xi_{n}|\bm x_{n})d\bm \xi_n. 
\end{align}
By assumption, $p_{\bm \xi_n|\bm X_n}(\bm \xi_{n}|\bm x_{n})=\rho_n(\bm \xi_n)$ (the random vector $\bm \xi_n$ is independent of $\bm X_n$) and therefore
\begin{align}
p_{n+1|n}(\bm x_{n+1}|\bm x_{n}) = 
\int \delta \left( \bm x_{n+1}- \bm H_n\left(\bm x_n,\bm w_n,\bm \xi_n\right)\right)\rho_n(\bm \xi_{n})d\bm \xi_n, 
\label{transitiongeneral}
\end{align}
where we denoted by $\delta(\cdot)$ the Dirac delta function, and 
set $\rho_n(\bm \xi_{n})=p_{\bm \xi_n}(\bm \xi_n)$. The delta 
function arises because if $\bm x_n$ and $\bm \xi_n$ are known 
then $\bm x_{n+1}$ is obtained by a purely deterministic 
relationship, i.e, Eq. \eqref{discrete_dyn}.

The general expression \eqref{transitiongeneral} can be simplified 
for particular classes of stochastic neural network models. For example, 
if the neural network has purely additive noise as 
in equation \eqref{discrete_sys_10}, 
then by using elementary properties of the delta function we obtain
\begin{align}
p_{n+1|n}(\bm x_{n+1}|\bm x_{n}) = &
\int \delta \left( \bm x_{n+1}- \bm F_n\left(\bm x_n,\bm w_n\right)-\bm \xi_n\right)\rho_n(\bm \xi_{n})d\bm \xi_n\nonumber\\
=&\rho_n\left(\bm x_{n+1}- \bm F_n\left(\bm x_n,\bm w_n\right)\right).
\label{TDadditive}
\end{align}
Note that such transition density depends on the PDF 
of random vector $\bm \xi_n$ (i.e., $\rho_n$), the 
one-layer transition function $\bm F_n$, and the parameters  $\bm w_n$. 
Similarly, one-layer transition density associated with the 
stochastic neural network model \eqref{discrete_sys} can 
be computed by substituting $\bm H_n(\bm x_n,\bm w_n,\bm \xi_n)=
\bm F_n(\bm x_n,\bm w_n) - \bm M_n(\bm x_n)\bm \xi_n$ into \eqref{transitiongeneral}. This yields  
\begin{align}
p_{n+1|n}(\bm x_{n+1}|\bm x_n) = 
\int \delta\left(\bm x_{n+1} - 
\bm F_n(\bm x_n,\bm w_n) - \bm M_n(\bm x_n)\bm \xi_n\right) 
\rho_n(\bm \xi_n)d\bm \xi_n. 
\label{transitionDensity2}
\end{align}
By using well-known properties of the multivariate delta 
function \cite{Khuri} it is possible to re-write  
the integrand in \eqref{transitionDensity2} in a more 
convenient way. For instance, if the matrix $\bm M_n(\bm x_n)$ has
full rank then  
\begin{equation}
\delta\left(\bm x_{n+1} - \bm F_n(\bm x_n,\bm w_n) - 
\bm M_n(\bm x_n)\bm \xi_n\right)= 
\frac{1}{\left|\det(\bm M_n(\bm x_n))\right|}
\delta \left(\bm \xi_n - \bm M_n(\bm x_n)^{-1}\left[\bm x_{n+1} - 
\bm F_n(\bm x_n,\bm w_n)\right]\right),
\end{equation}
which yields 
\begin{equation}
p_{n+1|n}(\bm x_{n+1}|\bm x_n) = \frac{1}{\left|\det(\bm M_n(\bm x_n))\right|} 
\rho_n\left(\bm  M(\bm x_n)^{-1}\left[\bm x_{n+1} - 
\bm F_n(\bm x_n,\bm w_n)\right]\right).
\label{trandenM}
\end{equation}}
Other cases where  $\bm M_n(\bm x_n)$ is not a square 
matrix can be handled similarly \cite{Papoulis,Khuri}. 
{\color{r}Finally, consider the neural network model with 
random weights and biases \eqref{NNrandomweights}. The one-layer 
transition density in this case can be expressed as  
\begin{align}
p_{n+1|n}(\bm x_{n+1}|\bm x_n) = 
\int \delta\left( 
\bm x_{n+1} - \varphi\left(\bm Z_n \bm x_n + \bm z_n\right)\right) 
p(\bm Z_n,\bm z_n)d\bm Z_n d\bm z_n,  
\label{transitionDensity3}
\end{align}
where $p(\bm Z_n,\bm z_n)$ is the joint PDF of the 
weight matrix and bias vector in the $n$-th layer. 
}

\vs
\noindent
{\bf Remark:}
The transition density \eqref{TDadditive} associated with the 
neural network model \eqref{discrete_sys_10} can be computed explicitly 
once we choose a probability model for $\bm \xi_n\in \mathbb{R}^N$. 
For instance, if we assume that 
$\{\bm \xi_0,\bm \xi_1,\ldots,\bm \xi_{L-1}\}$ 
are i.i.d. Gaussian random vectors with PDF,
\begin{equation}
\rho_n(\bm \xi)=\frac{1}{(2\pi)^{N/2}}
e^{-\bm \xi\cdot\bm \xi/2}, \qquad n=0,\ldots,L-1, 
\label{PDFXI}
\end{equation}
then we can explicitly write the one-layer 
transition density \eqref{TDadditive} as 
\begin{equation}
p_{n+1|n}(\bm x_{n+1}|\bm x_n) = 
\frac{1}{(2\pi)^{N/2}}
\exp\left[-\frac{\left[\bm x_{n+1}
-\bm F_n(\bm x_{\bm n},\bm w_n)\right]\cdot \left[\bm x_{n+1}
-\bm F_n(\bm x_{\bm n},\bm w_n)\right]}{2}\right].
\label{TDGaussadditive}
\end{equation}
In \ref{sec:functionalsetting} we provide an analytical 
example of transition density for a neural network with two 
layers, one neuron per layer, $\tanh(\cdot)$ activation 
function, and uniformly distributed random noise.

\subsection{The zero noise limit}
An important question is what happens to the neural 
network as we send the amplitude of the noise to zero. 
To answer this question consider the neural network model  
\eqref{discrete_sys_10} with $N$ neurons per layer, 
and introduce the parameter 
$\epsilon\geq 0$, i.e., 
\begin{equation}
\bm X_{n+1} = \bm F_n(\bm X_n,\bm w_n) + \epsilon \bm \xi_n,
\label{discreteDS}
\end{equation}  
We are interested in studying the orbits of the discrete dynamical 
system \eqref{discreteDS} as $\epsilon \rightarrow 0$. To this end, we 
assume $\{\bm \xi_n\}$ independent random 
vectors with density $\rho_n(\bm x)$. 
This implies that the PDF of $\epsilon \bm \xi_n$ is 
\begin{equation}
\epsilon \bm \xi_n \sim \frac{1}{\epsilon^N} \rho_n
\left(\frac{\bm x}{\epsilon}\right).
\end{equation}
It is shown in \cite[Proposition 10.6.1]{Lasota} 
that the transfer operator $\N(n+1,n)$ associated 
with \eqref{discreteDS}, i.e.,
\begin{align}
p_{n+1}(\bm x)=&\N(n+1,n) p_n(\bm x)\nonumber\\
=&\int \frac{1}{\epsilon^N} \rho_n
\left(\frac{\bm x-\bm F_n(\bm z,\bm w_n)}{\epsilon}\right) 
p_n(\bm z)d\bm z
\end{align}
converges in norm to the Frobenius-Perron operator 
corresponding to  $\bm F_n(\bm X_n,\bm w_n)$ as 
$\epsilon \rightarrow 0$.
Indeed, in the limit $\epsilon\rightarrow 0$ we have, formally 
\begin{equation}
\lim_{\epsilon \rightarrow 0} 
p_{n|n+1}\left(\bm x_{n+1}|\bm x_n\right)=\lim_{\epsilon \rightarrow 0} \int \frac{1}{\epsilon^N} \rho_n
\left(\frac{\bm x_{n+1}-\bm F_n(\bm x_n,\bm w_n)}{\epsilon}\right)=\delta\left(\bm x_{n+1}-\bm F_n(\bm x_n,\bm w_n)\right).
\end{equation}
Substituting this expression into \eqref{N}, one gets,
\begin{equation}
p_{n+1}(\bm x)=\N(n+1,n) p_n(\bm x)= \int \delta\big(\bm x-\bm F_n(\bm z,\bm w_n) \big) p_n(\bm z)d\bm z.
\end{equation}
Similarly, a substitution into equation \eqref{38} yields
\begin{equation}
\bm q_{n}(\bm x)=\G(n,n-1) \bm q_{n-1}(\bm x) = \bm q_{n-1}\left(\bm F_{L-n}(\bm x, \bm w_{L-n}) \right).
\end{equation}
{\color{r} Iterating this expression all the way back to $n=1$ yields the familiar function composition rule for neural networks, i.e.,}
\begin{align}
\bm q_{L}=& \bm q_0 \Big(\bm F_{L-1} \big(\bm F_{L-2}(\cdots \bm F_0(\bm x, \bm w_{0}), \cdots,  \bm w_{L-2}), \bm w_{L-1}\big) \Big).
\label{qqg}
\end{align}
{\color{r}
Recalling that  $\bm q_0(\bm x)=\bm u(\bm x)$ 
and assuming that $\bm u(\bm x)= \bm A \bm x$ (linear output layer),  
where $\bm A$ is a matrix of output weights and $\bm x$ 
is a column vector, we can write \eqref{qqg} as
\begin{align}
\bm q_{L}\bm (x)=& \bm A 
\bm F_{L-1}(\bm F_{L-2}(\cdots \bm F_1(\bm F_0(\bm x, \bm w_0),\bm w_1),\cdots,\bm w_{L-2}),\bm w_{L-1}).
\label{qqg1}
\end{align}
If $\bm u(\bm x)$ is a linear scalar function, i.e., 
$u(\bm x)=\bm \alpha\cdot \bm x$ then \eqref{qqg1} 
coincides with equation \eqref{compR}.}

\section{\color{r}Training paradigms}
\label{sec:trainingParadigms}

By adding random noise to a neural 
network we are essentially adding an infinite number 
of degrees of freedom to our system. This allows 
us to rethink the process of training the neural network 
from a probabilistic perspective. In particular, instead 
of optimizing a performance metric\footnote{In a supervised 
learning setting the neural network weights are usually 
determined by minimizing a dissimilarity measure between 
the output of the network and a target function. Such measure
may be an entropy measure, the Wasserstein distance, the 
Kullback--Leibler divergence, or other measures defined by classical 
$L^p$ norms.} relative to the neural network weights 
$\bm w=\{\bm w_0,\bm w_1,\ldots,\bm w_{L-1}\}$ 
{\color{r}(classical ``training over weights'' paradigm)}, 
we can now optimize the transition density\footnote{
The transition density for a deterministic neural 
network model of the form $\bm X_{n+1}=\bm F_n(\bm X_n,\bm w_n)$ 
is 
\begin{equation}
p_{n+1|n}(\bm x_{n+1}|\bm x_n)= \delta \left(\bm x_{n+1}-
\bm F_n(\bm x_n,\bm w_n)\right),
\end{equation} 
where $\delta(\cdot)$ is the Dirac delta function. 
Such density does not have any degree of freedom other than 
$\bm w_n$. On the other hand, in a stochastic setting we may 
be allowed to {\em choose} the PDF of $\bm \xi_n$. For a neural network model 
of the form $\bm X_{n+1}=\bm F_n(\bm X_n,\bm w_n)+\bm \xi_n$
the transition density has the form  
\begin{equation}
p_{n+1|n}(\bm x_{n+1}|\bm x_n)= \rho_n \left(\bm x_{n+1}-
\bm F_n(\bm x_n,\bm w_n)\right),
\end{equation} 
where $\rho_n(\bm \xi)$ is the PDF of $\bm \xi_n$.
This allows us to rethink the process of training the 
neural network from a probabilistic perspective, e.g., 
by optimizing over $\rho_n$.} 
$p_{n+1|n}(\bm x_{n+1}|\bm x_n)$. 
Clearly, such transition {\color{r}density depends on the 
neural network weights and on the functional form of 
the one-layer transition function, e.g., as in 
equation \eqref{TDadditive}}. 
Hence, if we prescribe the PDF of $\bm \xi_n$ (e.g., $\rho_n$ 
in \eqref{TDadditive}), then the transition density $p_{n+1|n}$ is 
uniquely determined by the functional form of 
function $\bm F_n$, and by 
the weights $\bm w_n$. On the other hand, {\color{r}if 
we are allowed to choose the PDF of the random vector 
$\bm \xi_n$}, then we can optimize it during training. 
{\color{r}This can be done while keeping the neural network 
weights $\bm w_n$ fixed, or by including them in the 
optimization process.}

The interaction between random noise 
and the nonlinear dynamics modeled by the 
network can yield surprising results. 
For example, in stochastic resonance 
\cite{Nozaki,Venturi_MZ} it is well known that 
random noise added to a properly tuned a bi-stable 
system can induce a peak in the Fourier power spectrum 
of the output - hence effectively amplifying 
the signal. Similarly, the random noise added to 
a neural network can be leveraged to achieve 
specific goals. 
For example, {\color{r} noise allows us to  
re-purpose a previously trained network on a different 
task without changing the weights of network. This can be
seen as an instance of stochastic transfer learning.} 
%
{\color{r}To describe the method, consider the two-layer 
neural network model
\begin{equation}
\bm X_1 = \bm F_0(\bm X_0,\bm w_0)+\bm \xi_0, \qquad 
\bm X_2 = \bm F_1(\bm X_1,\bm w_1),
\label{Modadd}
\end{equation}
with $N$ neurons per layer, input 
$\bm X_0\in \Omega_0\subseteq \mathbb{R}^d$, linear output
$u(\bm x) =\bm \alpha\cdot \bm x$, hyperbolic tangent activation function, 
and intra-layer random perturbation $\bm \xi_0$.
We are interested in training the input-output map represented 
by the conditional expectation (see Eq. \eqref{out})
\begin{equation}
q_2(\bm x)= \bm \alpha\cdot \mathbb{E}\left[\bm X_2|\bm X_0=\bm x\right], \qquad \bm x\in \Omega_0.
\label{io2}
\end{equation}
Let us first re-write \eqref{io2} in a more explicit form. To this end, 
we recall that 
\begin{equation}
q_0(\bm x)= \bm \alpha \cdot \mathbb{E}\left[\bm X_2|\bm X_2=\bm x\right] = \bm \alpha \cdot \bm x\qquad \bm x\in \mathscr{R}(\bm X_2)=[-1,1]^N,
\end{equation}
where $\mathscr{R}(\bm X_2)$ denotes the range of the mapping 
$\bm X_2=\bm F_1(\bm F_0(\bm X_0,\bm w_0)+\bm \xi_0,\bm w_1)$ for $\bm X_0\in \Omega_0$ and arbitrary weights $\bm w_0$ and $\bm w_1$. 
By using the definition of the operator $\G(i+1,i)$ in \eqref{G} and 
the composition rule $q_{i+1}=\G(i+1,i)q_i$ ($i=0,1$) we easily obtain 
\begin{align}
q_1(\bm x)= &\G(1,0)q_0\nonumber\\
= &\int_{\mathscr{R}(\bm X_2)} q_0(\bm y) 
p_{2|1} \left(\bm y|\bm x\right) d\bm y\nonumber\\
= &\int_{[-1,1]^N} \bm \alpha\cdot \bm y 
\delta \left(\bm y-\bm F_1(\bm x,\bm w_1)\right)d\bm y\nonumber\\
= &\bm \alpha\cdot \bm F_1(\bm x,\bm w_1)\qquad \bm x\in \mathscr{R}(\bm X_1),
\label{q111}
\end{align}
and
\begin{align}
q_2(\bm x)= &\G(2,1)q_1\nonumber\\
= &\int_{\mathscr{R}(\bm X_1)} q_1(\bm y) 
p_{1|0} \left(\bm y|\bm x\right) d\bm y\nonumber\\
= &\int_{\mathscr{R}(\bm X_1)} q_1(\bm y) 
\rho_0\left(\bm y-\bm F_0(\bm x,\bm w_0)\right) d\bm y\nonumber\\
= &\int_{\mathscr{R}(\bm \xi_0)} q_1\left(\bm z+\bm F_0(\bm x,
\bm w_0)\right)\rho_0(\bm z)d\bm z, \qquad 
\bm x\in \Omega_0\subseteq \mathbb{R}^d.
\label{q222}
\end{align}
where $\mathscr{R}(\bm\xi_0)$ is the range of the random 
variable $\bm \xi_0$, i.e., the support of $\rho_0$. Hence, we 
can equivalently write input-output map \eqref{io2} as
\begin{equation}
q_2(\bm x)= \bm \alpha\cdot\int_{\mathscr{R}(\bm \xi_0)} 
\bm F_1\left(\bm z+\bm F_0(\bm x,\bm w_0)\right)\rho_0(\bm z)d\bm z,\qquad
\bm x\in \Omega_0\subseteq \mathbb{R}^d. 
\label{equazione}
\end{equation}

}

\subsection{Training over weights}

In the absence of noise, {\color{r} the PDF of $\bm \xi_0$ appearing 
in \eqref{equazione}, i.e, $\rho_0(\bm z)$, reduces to the 
delta function $\delta(\bm z)$. Hence, the output of 
the neural network \eqref{equazione} can be written as
\begin{equation}
q_2(\bm x)=\bm \alpha\cdot \underbrace{
\bm F_1(\bm F_0(\bm x,\bm w_0),\bm w_1)}_{\mathbb{E}[\bm X_2|\bm X_0=\bm x]}, \quad \bm x\in \Omega_0.
\label{detN}
\end{equation}
This is consistent with the well-known composition rule for deterministic networks. 
The parameters $\{\bm \alpha, \bm w_0, \bm w_1\}$ appearing 
in \eqref{detN} can be optimized to minimize a dissimilarity 
measure between $q_2(\bm x)$ and a given target function 
$f(\bm x)$, e.g., relative to the $L^2(\Omega_0)$ norm
\begin{equation}
\left\|q_2(\bm x)-f(\bm x) \right\|^2_{L^2(\Omega_0)} =\int_{\Omega_0} \left[q_2(\bm x)-f(\bm x)\right]^2d\bm x,
\end{equation}
or a discrete $L^2(\Omega_0)$ norm computed on point set $\left\{\bm x{[1]},\ldots,\bm x{[S]}\right\}\in \Omega_0$
\begin{equation}
\left\|q_2(\bm x)-f(\bm x) \right\|_2^2=\sum_{k=1}^S \left[q_2\left(\bm x{[k]}\right)-f\left(\bm x{[k]}\right)\right]^2.
\end{equation}
The brackets $[\cdot]$ here are used to label the data points.}

\subsection{Training over noise} 
By adding noise $\bm \xi_0\in \mathbb{R}^N$ to 
the output of the first layer we obtain the input-output 
map \eqref{equazione}, hereafter rewritten for convenience
{\color{r}
\begin{equation}
{q}_2(\bm x)=\bm \alpha\cdot
\underbrace{\int_{\mathscr{R}(\bm \xi_0)} \bm F_1(\bm \xi+\bm F_0(\bm x,\bm w_0),\bm w_1) 
\rho_0\left(\bm \xi\right)d\bm \xi}_{\mathbb{E}[\bm X_2|\bm X_0=\bm x]},
\label{Volterra}
\end{equation}}
where $\rho_0$ denotes the PDF of $\bm \xi_0$.
Equation \eqref{Volterra} looks like a Fredholm integral 
equation of the first kind. In fact, it 
can be written as 
\begin{equation}
{q}_2(\bm x)=\int_{\mathscr{R}(\bm \xi_0)}\kappa_2(\bm x,\bm \xi) 
\rho_0\left(\bm \xi\right)d\bm \xi,
\label{FH}
\end{equation}
where 
\begin{equation}
\kappa_2(\bm x,\bm \xi)=\bm \alpha\cdot\bm 
F_1(\bm \xi+\bm F_0(\bm x,\bm w_0),\bm w_1).
\label{K2}
\end{equation}
However, differently from standard Fredholm equations 
of the first kind, in \eqref{FH} we have that 
$\bm x\in \Omega_0\subseteq \mathbb{R}^d$ 
while $\bm \xi\in \mathbb{R}^N$, i.e., the integral operator 
with kernel $\kappa_2$ maps functions with $N$ variables 
into functions with $d$ variables. 
We are interested in finding a PDF $\rho_0(\bm y)$ that solves 
\eqref{Volterra} for a given function $h(\bm x)$, i.e., 
{\color{r}find $\rho_0$ such that 
\begin{equation}
h(\bm x)=\int\kappa_2(\bm x,\bm \xi) 
\rho_0\left(\bm \xi\right)d\bm \xi.
\label{FH1}
\end{equation}
If such PDF $\rho_0$ exists, then we can re-purpose 
the neural network \eqref{detN} 
with output $q_2(\bm x)\simeq f(\bm x)$ to 
approximate a different function 
$h(\bm x)$, without modifying the weights  
$\{\bm w_1,\bm w_0\}$ but rather simply 
adding noise $\bm \xi_0$ between the first and the second layer, 
and then averaging the output over the PDF $\rho_0$.
Equation \eqref{FH1} is unfortunately ill-posed 
in the space of probability distributions. 
In other words, for a given kernel $\kappa_2$, and 
a given target function $h(\bm x)$, there is 
(in general) no PDF $\rho_0$ that satisfies \eqref{FH1} exactly.}
However, one can proceed by optimization. 
For instance, $\rho_0$ can be determined by solving the 
constrained least squares problem\footnote{The optimization problem 
\eqref{leastsquares1} is a quadratic program with linear constraints if we represent $\rho_0$ in the span of a basis made of positive functions, e.g., Gaussian kernels \cite{Botev}.}
{\color{r} 
\begin{equation}
\{\rho_0,\bm \alpha\}=\argmin_{(\rho,\bm \alpha)}\left\|h(\bm x) - 
\bm \alpha\cdot\int_{\mathscr{R}(\bm \xi_0)} \bm 
F_1(\bm \xi+\bm F_0(\bm x,\bm w_0),\bm w_1)\rho(\bm \xi)d\bm \xi\right\|_{L^2(\Omega)}, \quad \|\rho\|_{L^1(\mathbb{R}^N)}=1, 
\quad \rho\geq 0. 
\label{leastsquares1}
\end{equation}
Note that the training-over-noise paradigm 
can be seen as an instance of {\em transfer learning} \cite{pan2010survey}, in 
which we turn the knobs on the PDF of the noise $\rho_0$ 
(changing it from a Dirac delta function to a proper PDF), 
and eventually the coefficients $\bm \alpha$, 
to approximate a different function while 
keeping the neural network weights and 
biases fixed. Training over noise can also be performed in 
conjunction with training over weights, to improve the 
overall optimization process of the neural network.}

{\color{r} 
\vs
\noindent
{\em An example:} Let us demonstrate the 
``training over noise'' and the ``training over weights' 
paradigms with a simple numerical example. 
Consider the following one-dimensional function 
\begin{equation}
f(x) = \sin(7\pi x)e^{-\cos^3(x)} \qquad x\in \Omega_0=[0,1], 
\label{f1d}
\end{equation}
We are interested in approximating $f(x)$ with the 
two-layer neural network depicted 
in Figure \ref{fig:1DexampleNET} ($N=5$ neurons per layer).
\begin{figure}
\centerline{\includegraphics[scale=0.45]{./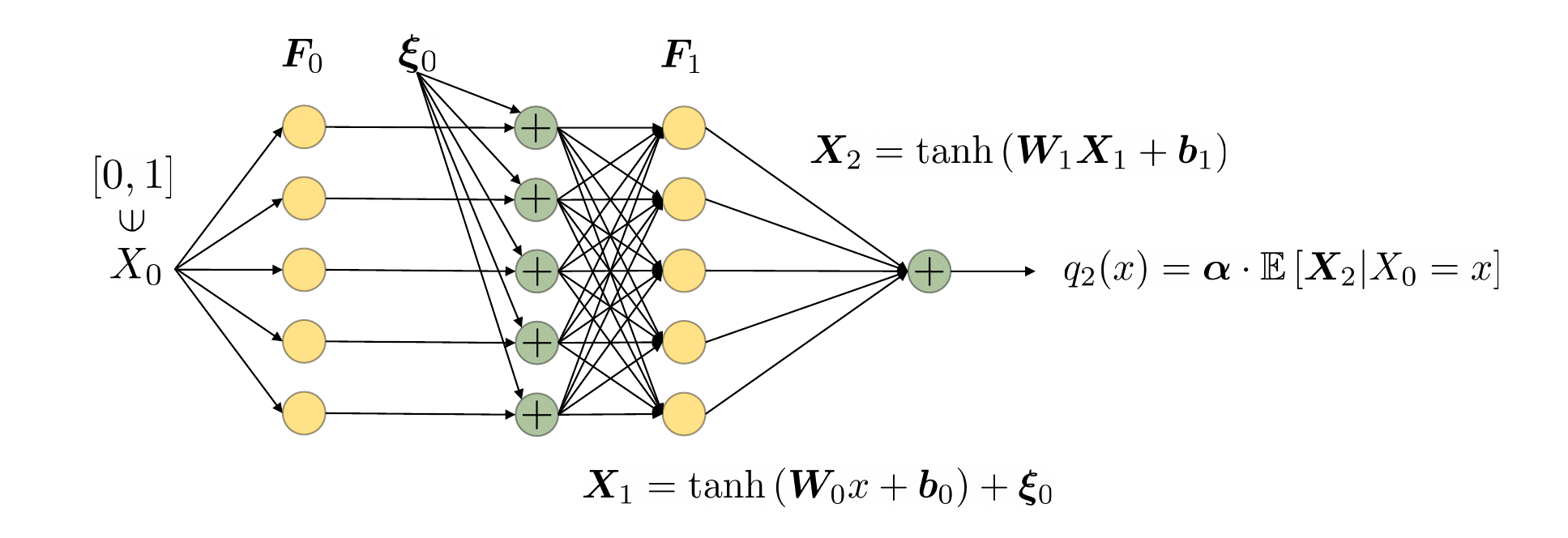}}
\caption{\color{r}Sketch of the stochastic neural network model used approximate the functions \eqref{f1d} (training over weight paradigm) and \eqref{h1d} (training over noise paradigm). The five-dimensional random vector $\bm \xi_0$ is assumed to have statistically independent components. We proceed by first training the neural network with no noise on the target function $f(x)$ defined in \eqref{f1d}. Subsequently, we perturb the network with the random vector $\bm \xi_0$, and optimize the PDF of $\bm \xi_0$ so that  the conditional expectation of the neural network output, i.e.,  \eqref{FHex1}, approximates a second target function $h(x)$ for the same weights and biases.}
\label{fig:1DexampleNET}
\end{figure}
In the absence of noise, the output of the network is given by equation \eqref{detN}, hereafter rewritten 
in full form for $\tanh(\cdot)$ activation functions \cite{Mishra}
\begin{align}
q_2(x) = &\bm \alpha\cdot \tanh\left[\bm W_1 \tanh\left(\bm W_0 x+\bm b_0\right)+\bm b_1\right].
\label{NN1dout}
\end{align}
Here, $\bm W_0$, $\bm b_0$, $\bm b_1$ and $\bm \alpha$ are five-dimensional column vectors, while $\bm W_1$ is a $5\times 5$ matrix.
Hence, the input-output map \eqref{NN1dout} has $45$ 
free parameters 
$\{\bm W_0,\bm W_1,\bm b_0,\bm b_1,\bm \alpha\}$ which
are determined by minimizing the discrete 2-norm
\begin{equation}
\left\| q_2(x)-f(x)\right\|^2_2 = \sum_{i=1}^{30}\left[q_2\left(x{[i]}\right)-f\left(x{[i]}\right)\right]^2,
\label{points}
\end{equation} 
where $\left\{x{[1]},\ldots,x{[30]}\right\}$ is an
evenly-spaced set of points in $[0,1]$ 
\begin{equation}
x[j]=\frac{(j-1)}{29}\quad j=1,\ldots,30.
\end{equation} 
In Figure \ref{fig:trainingf} we show the neural network 
output \eqref{NN1dout} we obtained by minimizing the 
cost \eqref{points} relative to the weights 
$\{\bm W_0,\bm W_1,\bm b_0,\bm b_1,\bm \alpha\}$ 
(training over weights paradigm).

Next, we add noise to our fully trained deterministic 
neural network. Specifically, we perturb the output of the first layer 
by an additive random vector $\bm \xi_0$ with independent 
components supported in $[-0.4,0.4]$. 
Since the random vector $\bm \xi_0$ is assumed to have 
independent components, we can write its PDF $\rho_0$ as   
\begin{equation}
\rho_0(\bm \xi) = \rho^1_0\left(\xi_1\right)\cdots \rho^N_0\left(\xi_N\right)
\label{1DPDF_a}
\end{equation}
where $\{\rho^1_0,\ldots,\rho_1^N\}$ are 
one-dimensional PDFs, each one of which 
is supported in $[-0.4,0.4]$. 
In the training-over-noise paradigm, we are interested 
in finding the PDF of the random vector $\bm \xi_0$, i.e., the 
one-dimensional PDFs $\{\rho^1_0,\ldots,\rho_1^N\}$ 
appearing in \eqref{1DPDF_a}, and a new vector 
of coefficients $\bm \alpha$ such that the output of 
the neural network (with the same weights and biases) 
averaged over all realizations of 
the noise $\bm \xi_0$, approximates a new one-dimensional map 
$h(x)$, different from \eqref{f1d}. For this 
example, we choose
\begin{equation}
h(x) = 4\tanh\left(10x-\frac{7}{2}\right)+3.
\label{h1d}
\end{equation}
In the presence of noise, the neural network output 
takes the form (see Eq. \eqref{FH}) 
\begin{align}
\widehat{q}_2(x)=\int_{\mathscr{R}(\bm \xi_0)} \kappa_2(x,\bm \xi) \rho_0^1(\xi_1)\cdots
\rho_0^5(\xi_5) d\xi_1\cdots  d\xi_5,
\label{FHex1}
\end{align}
where $\mathscr{R}(\bm \xi_0)=[-0.4,0.4]^5$ is the range 
of $\bm \xi_0$, and
\begin{align}
\kappa_2( x,\bm \xi)=&\bm \alpha\cdot\tanh\left[\bm W_1 \left(\bm \xi+\tanh\left(\bm W_0 x+\bm b_0\right)\right)+\bm b_1\right].
\end{align}
We approximate the $5$-dimensional integral
in \eqref{FHex1} with a Gauss-Legendre-Lobatto (GLL)  
quadrature formula \cite{Hesthaven} on a tensor-product 
grid with $6$ quadrature points per dimension. 
To this end, let $\{z[1],\ldots, z[6]\}$
be the GLL quadrature points in $[-0.4,0.4]$. The 
tensor product quadrature approximation of \eqref{FHex1} 
takes the form
\begin{align}
\widehat{q}_2(x)\approx \sum_{j=1}^{H} \theta_j
\kappa_2\left(x,\bm \xi [j]\right) \rho_0^1(z[i_1(j)])\cdots
\rho_0^5(z[i_5(j)]),
\label{FHex2}
\end{align}
where $H=6^5=7776$ is the total number of quadrature points\footnote{\color{r}As is well known, the curse of 
dimensionality in the tensor product 
quadrature rule \eqref{FHex2}, i.e., the exponential growth in the number of nodes with the dimension can be mitigated by using, e.g., sparse grids \cite{Novak,Bungartz} or 
quasi-Monte Carlo (qMC) quadrature \cite{qmc}.} in the domain 
$[-0.4,0.4]^5$, $\theta_k$ are 
tensor product GLL quadrature weights, and 
\begin{equation}
\bm \xi[j] = (z[i_1(j)],\ldots, z[i_5(j)])
\end{equation}
represents a grid in $[-0.4,0.4]^5$ indexed 
by $\{i_1(j),\ldots,i_5(j)\}$, where $i_k(j)\in\{1,\ldots,6\}$ for 
each $j$ and each $k$. 
Such indices are obtained by an appropriate 
ordering of the nodes in the tensor product grid.
We represent each one-dimensional PDFs 
$\rho_0^k(z)$ using a polynomial 
interpolant through the GLL points, i.e., 
\begin{equation}
\rho_0^k(z) \simeq \sum_{j=1}^6 \rho_0^k\left(z[j]\right) l_j(z),
\label{lagrange}
\end{equation}
where $l_j(z)$ are Lagrange characteristic 
polynomials associated with the one-dimensional GLL grid. 
Thus, the degrees of freedom of each PDF are represented 
by the following vector of PDF values at the GLL nodes
\begin{equation}
\bm \rho_0^k=\{\rho_0^k\left(z[1]\right),\ldots,\rho_0^k\left(z[6]\right)\}, \quad k=1,\ldots,5.
\label{DOF-pdf}
\end{equation}
Note that in this setting we are approximating the PDF of $\bm \xi_0$ 
using a non-parametric method, i.e., a polynomial interpolant 
through a tensor product GLL grid. For non-separable PDFs, 
or for PDFs in higher dimensions, it may be more practical to 
consider a tensor representation \cite{Dektor_dyn_approx,adaptive_rank}, 
or a parametric inference method, i.e., a method that leverages 
assumptions on the shape of the probability distribution of $\bm \xi_0$.}

{\color{r}At this point, we have all the elements to solve the 
minimization problem \eqref{leastsquares1}, or an equivalent 
problem defined by the discrete 2-norm
\begin{equation}
\min_{\{\bm \rho_0^1,\ldots,\bm \rho_0^5,\bm \alpha\}} \sum_{i=1}^S   \left[h(x[i])-\widehat{q}_2(x[i])\right]^2, 
\label{points1}
\end{equation}
subject to the linear constraints\footnote{\color{r}In a discrete 
setting, the non-negativity constraints on the PDFs in \eqref{con1} are enforced using a finite set of linear inequality constraints. In practice we evaluate the Lagrange interpolation formula \eqref{lagrange} on a grid of 200 points in $[-0.4,0.4]$ and enforce that the polynomial interpolant of each PDF is non-negative at each point in the grid. Similarly, the $L^1$ normalization condition of each PDF is enforced using one-dimensional GLL quadrature.}
\begin{equation}
\left\|\rho_0^k\right\|_{L^1([-0.4,0.4])}=1, \qquad 
\rho_0^k(z)\geq 0\qquad (k=1,\ldots,5). 
\label{con1}
\end{equation}
}

\begin{figure}[t]
\centerline{\footnotesize Training over weights\hspace{6.cm} Training over noise }
\centerline{
\includegraphics[scale=0.43]{./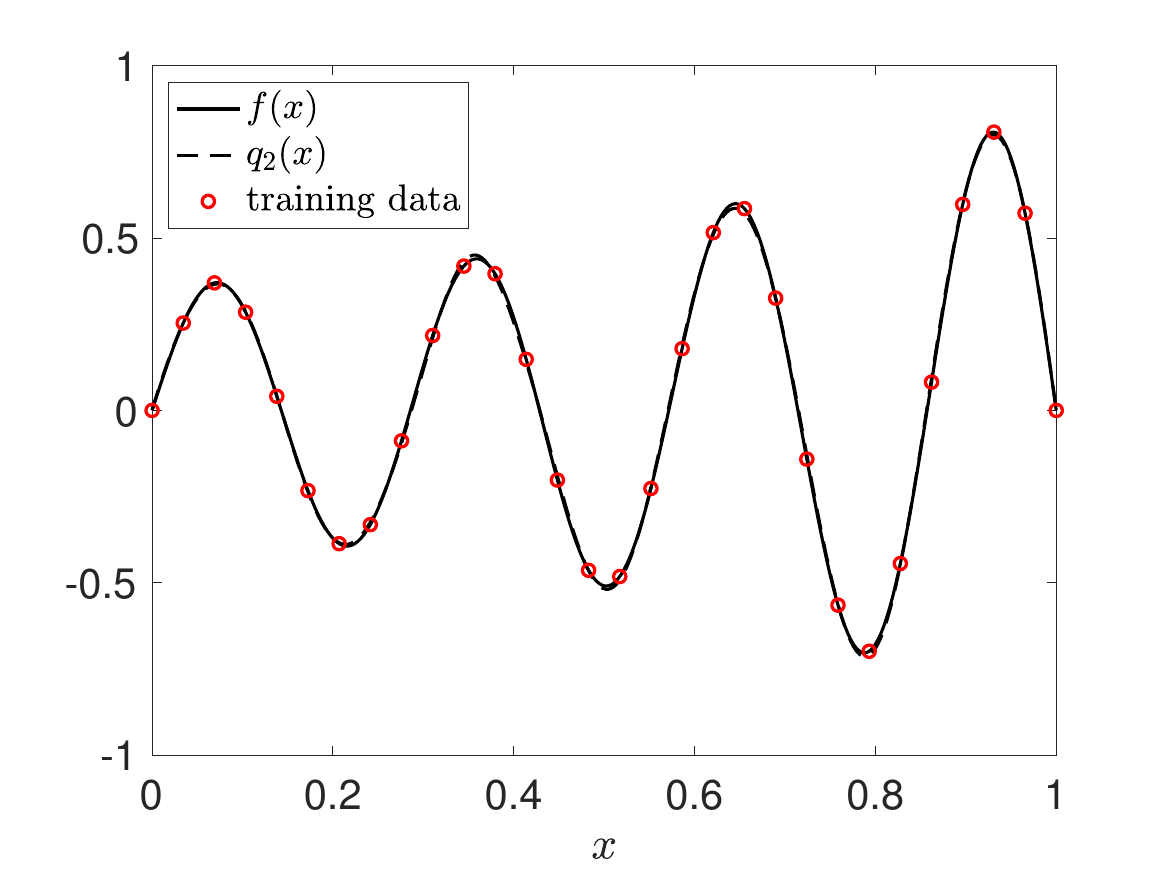}
\includegraphics[scale=0.43]{./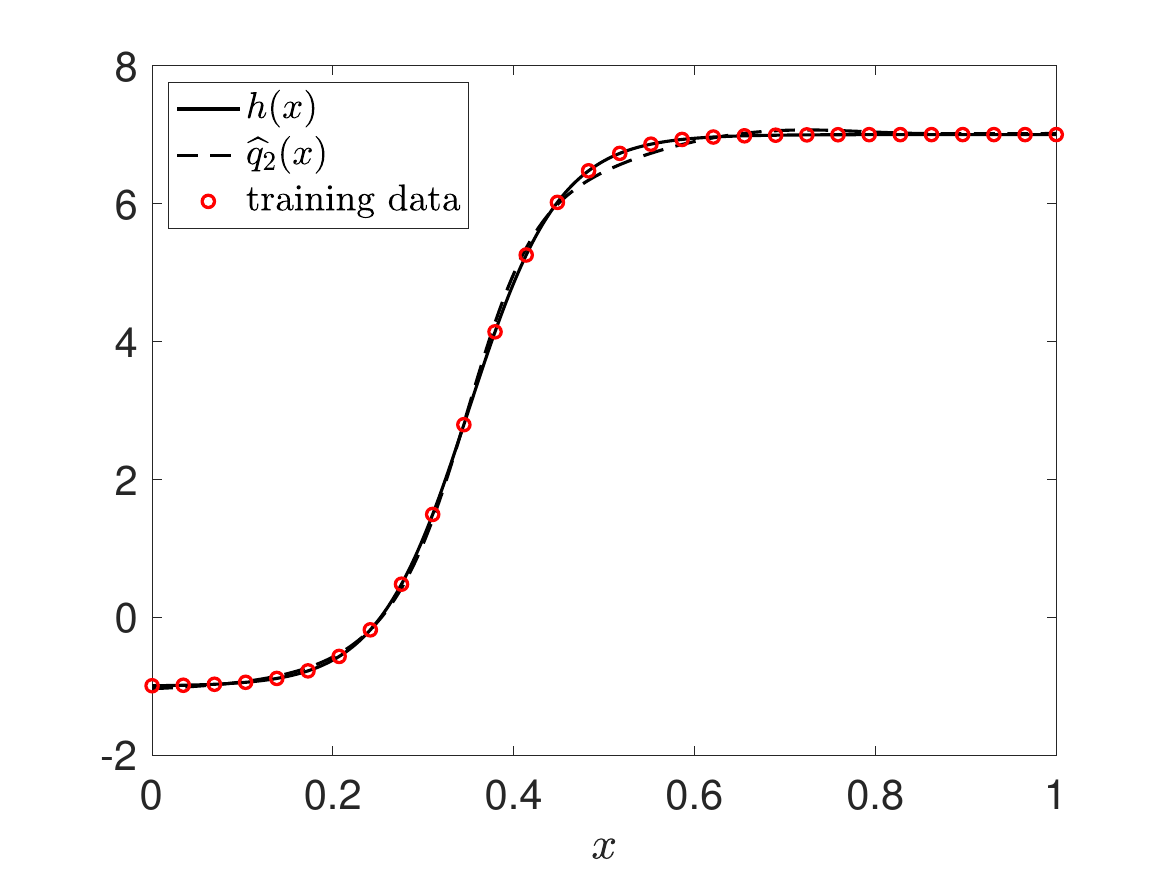}
}
\caption{\color{r} Demonstration of "training over weights" and ``training over noise" paradigms. In the training over weights paradigm we 
minimize the dissimilarity measure \eqref{points} between 
the output \eqref{NN1dout} of the two-layer neural network 
depicted in Figure \ref{fig:1DexampleNET} (with $\bm \xi_0=\bm 0$)
and the target function \eqref{f1d}. The training 
data is shown with red circles. 
In the training over noise paradigm we add random noise 
to the output of the first layer and optimize for $\bm \alpha$ and 
the PDF of the noise as in 
\eqref{leastsquares1}. This can be seen as 
an instance of {\em transfer learning}, in which 
we keep the neural network weights and 
biases fixed but update the output weights 
$\bm \alpha$ and the PDF of the random 
vector $\bm \xi_0$ to approximate a different 
function $h(x)$ defined in \eqref{h1d}.}
\label{fig:trainingf}
\end{figure}

{\color{r}In Figure \ref{fig:trainingf} we demonstrate the 
training-over-weight and the training-over-noise paradigms 
for the neural network depicted in 
Figure \ref{fig:1DexampleNET}.  
In the classical training over weight paradigm we minimize the 
error between the neural network output 
\eqref{NN1dout} and the function \eqref{f1d} 
in the discrete 2-norm \eqref{points}. The training 
data is shown with red circles. 
In the training over noise paradigm we add random noise 
$\bm \xi_0$ to the output of the first layer. This yields 
the input-output map \eqref{FHex1}. By optimizing 
for the PDF of the noise $\rho_0$ and the coefficients
$\bm \alpha$ as in \eqref{leastsquares1} 
we can   re-purpose the network previously trained on $f(x)$ 
to approximate a different function $h(x)$ defined 
in \eqref{h1d}, without changing the neural network weights and biases.

In Figure \ref{fig:PDFnoise} we plot the 
one-dimensional PDFs of each component of the random vector 
$\bm \xi_0$ we obtained from optimization. Such PDFs depend 
on the neural network weights and biases, which in this 
example are kept fixed.
\begin{figure}[t]
\centerline{\hspace{-0.2cm}
\includegraphics[scale=0.42]{./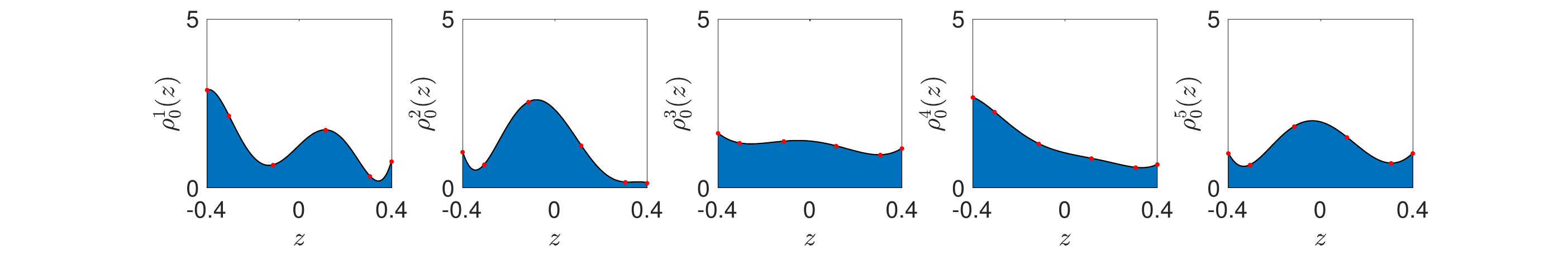}}
\caption{\color{r} Training over noise paradigm. 
One-dimensional probability density functions 
of each component of the random vector 
$\bm \xi_0$ obtained by solving the 
optimization problem \eqref{leastsquares1} for the 
function $h(x)$ defined in Eq. \eqref{h1d} 
(see Figure \ref{fig:trainingf}). 
The PDF of $\bm \xi_0$ is a product of all five 
PDFs (see Eq. \eqref{1DPDF_a}). 
The degrees of freedom of each PDF, i.e., the vectors 
defined in equation \eqref{DOF-pdf}, are visualized as 
red dots (PDF values at GLL points). Each PDF is 
a polynomial of degree at most 5.}
\label{fig:PDFnoise}
\end{figure}
The PDF of $\bm \xi_{0}$ is (by hypothesis) 
a product of five one-dimensional PDFs. 
Therefore it is quite straightforward to 
sample $\bm \xi_{0}$ by using, e.g., rejection 
sampling applied independently to each 
one-dimensional PDF shown in Figure \ref{fig:PDFnoise}. 
With the samples of $\bm \xi_0$ available, we 
can easily compute samples of the neural network output as
\begin{equation}
\tilde{q}_2(x) =\bm \alpha\cdot \tanh\left[\bm W_1 \left(\bm \xi_0+\tanh\left(\bm W_0 x+\bm b_0\right)\right)+\bm b_1\right].
\label{nnsamp}
\end{equation} 
Clearly, if we compute an ensemble average over 
a large number of output samples then we obtain an   
approximation of $\widehat{q}_2(x)$. This is demonstrated 
in Figure \ref{fig:q2 samples}.

\begin{figure}[t]
\centerline{
\includegraphics[scale=0.45]{./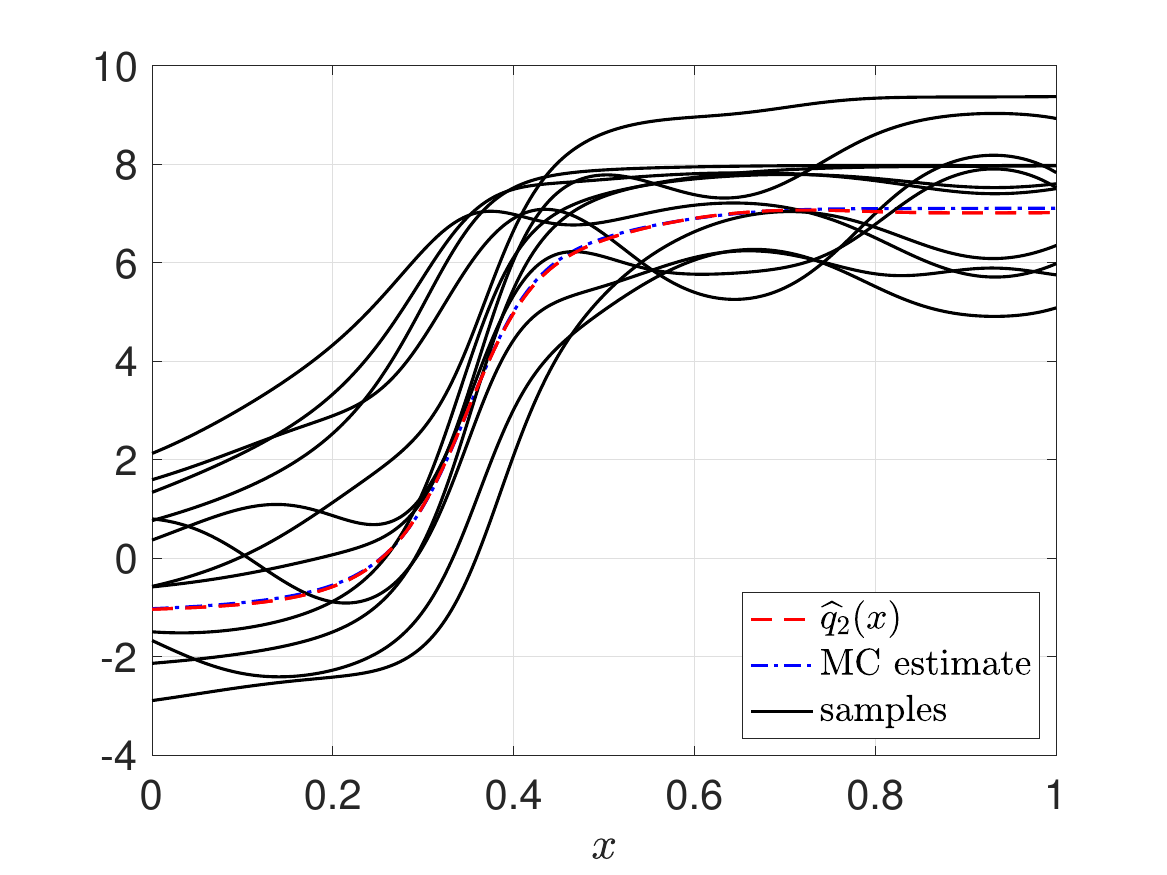}
}
\caption{\color{r} Training over noise paradigm. 
Samples of the stochastic neural network output \eqref{nnsamp} 
corresponding to random samples of $\bm \xi_0$. 
We also show that the ensemble average of the network 
output computed over $10^5$ independent samples of \eqref{nnsamp} 
converges to $\widehat{q}_2(x)$, as expected.}
\label{fig:q2 samples}
\end{figure}
}

\subsubsection{{\color{r} Random shifts}} 

{\color{r}A related but simpler setting 
for re-purposing a neural network is to introduce a 
random shift in the input variable rather than 
perturbing the network layers}\footnote{Note that if we 
do not have access to the layers of the 
neural network, then we can introduce 
{\color{r} random perturbations in the input in the form of 
random} shifts or other types of perturbations. 
In this setting one can {\color{r} 
re-purpose a pre-trained neural network in 
which the user is allowed only to modify the 
input and observe the output.}}. In this setting, 
the output of the network can be written as  
\begin{equation}
r_2(\bm x)= \int q_2(\bm x-\bm y) 
\rho\left(\bm y\right)d\bm y, 
\label{Volterra1}
\end{equation}
{\color{r}where $q_2$ is defined in \eqref{detN}, and 
$\rho$ is the PDF of vector $\bm \eta$ defining the 
random shift $\bm x\rightarrow \bm x - \bm \eta$. 
Clearly, equation \eqref{Volterra} is the expectation 
of the noiseless neural network output $q_2(\bm x)$ 
under a random shift with PDF $\rho(\bm y)$.
To re-purpose a deterministic neural net using 
random shifts in the input variable one can proceed 
by optimization, i.e., solving an optimization 
problem similar to \eqref{leastsquares1} for a target function $h(\bm x)$.}

\vs 
\noindent 
{\bf Remark:} Given a target function 
$h(\bm x)$ we can, in principle, compute 
the analytical solution of the integral equation 
\begin{equation}
h(\bm x)= \int q_2(\bm x-\bm y) 
\rho\left(\bm y\right)d\bm y, \qquad \bm x,\bm y\in \mathbb{R}^d
\end{equation}
using Fourier transforms. This 
yields\footnote{In equation \eqref{RR} we assumed 
that $\mathcal{F}[q_2(\bm x)] (\bm \zeta)\neq 0$ 
for all $\zeta\in \mathbb{R}^d$.}
\begin{equation}
\rho\left(\bm y\right) = \int 
\frac{\mathcal{F}[h(\bm x)](\bm \zeta)}
{\mathcal{F}[q_2(\bm x)](\bm \zeta)}e^{2\pi i\bm \zeta\cdot \bm x }
d\bm \zeta,
\label{RR}
\end{equation}
where $\mathcal{F}[\cdot]$ denotes the multivariate 
Fourier transform operator
\begin{equation}
\mathcal{F}[f(\bm x)](\bm \zeta) = \int f(\bm x) 
e^{-2\pi i \bm x\cdot \bm \zeta}d\bm x.
\end{equation}
However, the function $\rho(\bm y)$ defined in \eqref{RR} is, in 
general, not a PDF. 

\section{The Mori-Zwanzig formulation of deep learning}
\label{sec:MZequations}
In section \ref{sec:compositiontransfer} we defined  
two linear operators, i.e., $\N(n,q)$ and 
$\M(n,q)$ in equations \eqref{N} and \eqref{M}, 
mapping the probability density of the state $\bm X_n$ 
and the conditional expectation of a phase space function 
$\bm u(\bm X_n)$ forward or backward across 
different layers of the neural network. In 
particular, we have shown that
\begin{align}
p_{n+1}(\bm x)&=\N(n+1,n) p_n(\bm x), 
\label{50}\\
\mathbb{E}\{\bm u(\bm X_{L})|\bm  X_{n}=\bm x\}&=
\M(n,n+1)\mathbb{E}\{\bm u(\bm X_{L})|\bm X_{n+1}=
\bm x\}
\label{51}.
\end{align}
Equation \eqref{50} maps the PDF of the 
state $\bm X_j$ forward through the neural network, i.e., 
from the input to the output as $n$ increases, 
while \eqref{51} maps the conditional expectation 
backward. We have also shown in 
section \ref{sec:compositionOperator} that upon definition of
\begin{equation}
\bm q_n(\bm x)=
\mathbb{E}\{\bm u(\bm X_{L})|\bm  X_{L-n}=\bm x\}
\label{qn}
\end{equation}
we can rewrite \eqref{51} as a forward 
propagation problem, i.e., 
\begin{equation}
\bm q_{n+1}(\bm x) = \G(n+1,n)\bm q_n(\bm x),
\label{62}
\end{equation}
where $\G(n,q)=\M(L-n,L-q)$ 
and $\M$ is defined in \eqref{M}. The function
$\bm q_n(\bm x)$ is defined on the domain 
\begin{equation}
\mathscr{R}(\bm X_{L-n})=\{\bm X_{L-n}(\omega)\in 
\mathbb{R}^N\,:\, \omega\in\mathcal{S}\}, 
\end{equation}
i.e., on the range of the random variable 
$\bm X_{L-n}(\omega)$ (see Definition \eqref{Rn+1}).
$\mathscr{R}(\bm X_{L-n})$ is a deterministic subset of 
$\mathbb{R}^N$.

Eqs. \eqref{51} constitute the basis for developing the Mori-Zwanzig (MZ) 
formulation of deep neural networks. The MZ 
formulation is a technique originally developed in statistical mechanics  \cite{Mori,zwanzig1961memory}
to formally integrate out phase variables in 
nonlinear dynamical systems by means of a 
projection operator. One of the main features 
of such formulation is that it allows us to 
systematically derive exact equations for 
quantities of interest, e.g., low-dimensional observables, 
based on the equations of motion of the full system. In the context 
of deep neural networks such equations of motion 
are Eqs. \eqref{50}-\eqref{51},
and \eqref{62}.

To develop the Mori-Zwanzig formulation of 
deep learning, we introduce a 
{\em layer-dependent orthogonal projection operator} 
$\P_n$ together with the complementary projection 
$\Q_n=\I-\P_n$. The nature and properties of $\P_n$ 
will be discussed in detail in 
section \ref{sec:Projections_1}. For now,
it suffices to assume only that $\P_n$ is 
a self-adjoint bounded linear operator, and 
that $\P_n^2= \P_n$, i.e.,  $\P_n$ is
idempotent.  
To derive the MZ equation for neural networks, let us
consider a general recursion, 
\begin{equation}
\bm g_{n+1}(\bm x)= \R(n+1,n) \bm g_n(\bm x),
\label{genrec}
\end{equation}
where $\{\bm g_{n},\R(n+1,n)\}$ can be either 
$\{p_n,\N(n+1,n)\}$ or $\{\bm q_n,\G(n+1,n)\}$, 
depending on the context of the application.


\subsection{The projection-first and propagation-first approaches}
We apply the projection operators $\P_n$ and $\Q_n$ 
to \eqref{genrec} to obtain the following coupled 
system of equations 
\begin{align}
\bm g_{n+1} =& \R(n+1,n) \P_n \bm g_n + \R(n+1,n) \Q_n \bm g_n,
\label{rel}\\
\Q_{n+1} \bm g_{n+1} =& \Q_{n+1} \R(n+1,n) \P_n \bm g_n +  
\Q_{n+1} \R(n+1,n) \Q_n \bm g_n.
\label{irr}
\end{align}
By iterating the difference equation 
\eqref{irr}, we obtain the following 
formula\footnote{Note that the difference 
equation \eqref{irr} can be written as
\begin{equation}
\bm h_{n+1} = \A_n \bm h_n + \bm c_n, 
\label{irrr}
\end{equation}
where $\bm h_{n}= \Q_{n}\bm g_n$, 
$\bm c_n=\Q_{n+1}\R(n+1,n)\P_n\bm g_n$, and 
$\A_n= \Q_{n+1}\R(n+1,n)$. As is 
well-known, the solution to \eqref{irrr} is 
\begin{equation}
\bm h_{n} = \prod_{k=0}^{n-1}\A_k \bm h_0 + 
\sum_{j=0}^{n-1}\Phi(n,j+1)\bm c_{j}, 
\quad \text{where}\quad \Phi(n,m)=\A_n\cdots \A_m.
\label{JR}
\end{equation}
A substitution of $\A_n$, $\bm h_n$ and $\bm c_j$ into 
\eqref{JR} yields \eqref{orthDyn}.} 
for $\Q_n \bm g_{n}$
\begin{equation}
\Q_n \bm g_n = \Phi_\R(n,0)\Q_0 \bm g_0+
\sum_{m=0}^{n-1}\Phi_\R(n,m)\P_m \bm g_m,
\label{orthDyn}
\end{equation}
where $\Phi_\R(n,m)$ is the (forward) propagator of 
the orthogonal dynamics, i.e.,  
\begin{equation}
\Phi_\R(n,m)= \Q_n\R(n,n-1)\cdots \Q_{m+1} \R(m+1,m).
\label{Phi}
\end{equation}
{\color{r}Since $\bm g_n=\mathcal{R}(n,0)\bm g_0$, and $\bm g_0$ is arbitrary, we have that  \eqref{orthDyn} implies the operator identity}
\begin{equation}\label{eq: op-id}
    \Q_n \R(n,0)=  \Phi_\R(n,0)\Q_0 + \sum_{m=0}^{n-1}\Phi_\R(n,m)\P_m  \R(m,0).
\end{equation}
A substitution of \eqref{orthDyn} into \eqref{rel} yields 
the Mori-Zwanzig equation 
\begin{equation}
\bm g_{n+1} = \underbrace{\R(n+1,n) \P_n \bm g_n}_{\text{streaming term}} + \underbrace{\R(n+1,n)\Phi_\R(n,0)\Q_0 \bm g_0}_{\text{noise term}} + 
\underbrace{\R(n+1,n) \sum_{m=0}^{n-1}\Phi_\R(n,m)\P_m \bm g_m}_{\text{memory term}}.
\label{MZDISCRETE}
\end{equation} 
We shall call the first term at the right hand side of 
\eqref{MZDISCRETE} {streaming} (or Markovian) term, 
{\color{r}in agreement with classical literature on MZ equations}.  
The streaming term represents the change in $\P_n\bm g_n$ 
as we go from one layer to the next. {\color{r} The second term 
is known as ``noise term'' in classical statistical 
mechanics. The reason for this definition is 
that $\Phi_\R(n,0)\Q_0 \bm g_0$ represents the 
effects of the dynamics generated by $\Q_{m}\mathcal{R}(m,m-1)$, 
which is usually under-resolved in classical particle systems and 
therefore modeled as random noise. Such noise, however, is 
very different from the random noise 
$\{\bm \xi_0,\ldots, \bm \xi_{L-1}\}$ we introduced 
into the neural network model \eqref{discrete_dyn}.}
The third term represents the {\em memory of the neural network}, and it 
encodes the interaction between the projected 
dynamics and its entire history.  

Note that if $\bm g_0$ is in the range of $\P_0$, i.e., 
if $\P_0 \bm g_0=\bm g_0$, then the second term drops out, 
yielding a simplified MZ equation,
\begin{equation}
\bm g_{n+1} = \R(n+1,n) \P_n \bm g_n + 
\R(n+1,n) \sum_{m=0}^{n-1}\Phi_\R(n,m)\P_m \bm g_m.
\label{MZDISCRETE1}
\end{equation} 
To integrate \eqref{MZDISCRETE1} forward, i.e., from one layer to the next, 
we first project $\bm g_m$ using $\P_m$ (for $m=0,\ldots,n$), then apply the 
evolution operator $\R(n+1,n)$ to $\P_n\bm g_n$, and the memory operator $\Phi_\R$ 
to the entire history of $\bm g_m$ (memory of the network). 
It is also possible to construct an MZ equation based on 
the reversed mechanism, i.e., by projecting 
$\R(n+1,n)\bm g_n$ rather than $\bm g_m$. 
To this end,  re-write \eqref{rel} as  
\begin{equation}
\bm g_{n+1} =\P_{n+1}  \R(n+1,n)  \bm g_n +  \Q_{n+1} \R(n+1,n) \bm g_n,
\label{vac}
\end{equation}
i.e., the propagation via $\R(n+1,n)$ precedes 
projection (propagation-first approach). By applying the variation of constant formula \eqref{eq: op-id} to \eqref{vac} 
we arrive at a slightly different (though completely equivalent)
form of the MZ equation, namely
\begin{equation}
    \bm g_{n+1} =  \P_{n+1} \R(n+1,n)  \bm g_{m} +  \Phi_\R(n+1,0) \bm g_0 + 
     \sum_{m=0}^{n-1}\Phi_\R(n+1,m+1)\P_{m+1} \R(m+1,m)  \bm g_{m}. 
\end{equation}

\subsection{Discrete Dyson's identity}
Another form of the MZ equation \eqref{MZDISCRETE} 
can be derived based on a discrete version of the Dyson 
identity\footnote{For continuous-time 
autonomous dynamical systems the Dyson's identity 
can be written as  \cite{zhu2018estimation,zhu2018faber,
zhu2020,VenturiBook,HeyrimPRS_2014}
\begin{equation}
e^{t\L}= e^{t\Q\L}+ \int_0^{t} e^{(t-s)\L}\P\L e^{s\Q\L} ds.
\label{Dyson}
\end{equation}
where $\L$ is the (time-independent) Liouvillian of 
the system. {\color{r}The discrete Dyson identity and the corresponding 
discrete MZ formulation was first derived by Dave {\em et. al} 
in \cite{Darve_PNAS_2009}, and later revisited by 
Lin and Lu in \cite{Lin2020}. Both these derivations are for 
autonomous (time-invariant) discrete dynamical systems, 
while our derivations also apply to non-autonomous systems, such as those
generated by neural networks.}}. 
To derive this identity, consider the sequence 
\begin{align}
\bm y_{n+1} = &\Q_{n+1}\R(n+1,n) \bm y_n\label{AA}\\
 = &\R(n+1,n) \bm y_n - \P_{n+1} \R(n+1,n) \bm y_n.\label{BB}
\end{align}
By using the discrete variation of constant formula, 
we can rewrite \eqref{BB} as 
\begin{equation}
\bm y_n = \R(n,0)\bm y_0 - 
\sum_{m=0}^{n-1}\R(n,m+1)\P_{m+1} \R(m+1,m)\bm y_m.
\label{CC}
\end{equation}
Similarly, solving \eqref{AA} yields 
\begin{equation}
\bm y_n = \Phi_\R(n,0)\bm y_0,
\label{DD}
\end{equation}
where $\Phi_\R$ is defined in \eqref{Phi}.
By substituting \eqref{DD} into \eqref{CC} for both $\bm y_n$ and $\bm y_m$, and observing that $\bm y_0$ is arbitrary,   we obtain 
\begin{equation}
\R(n,0) = \Phi_\R(n,0) +
\sum_{m=0}^{n-1}\R(n,m+1)\P_{m+1} \R(m+1,m) \Phi_\R(m,0).
\label{discreteDyson1}
\end{equation}
The operator identity \eqref{discreteDyson1} 
is the discrete version of the 
well-known  continuous-time Dyson's identity.
A substitution of \eqref{discreteDyson1} into 
$\bm g_{n} = \R(n,0)\bm g_{0}$
yields the following form of the MZ equation \eqref{MZDISCRETE}
\begin{equation}
\bm g_{n+1} =\P_{n+1} \R(n+1,n)\Phi_\R(n,0) \bm g_{0} +  \Phi_\R(n+1,0) \bm g_{0} +
\sum_{m=0}^{n-1}\R(n+1,m+1)\P_{m+1} \R(m+1,m)
 \Phi_\R(m,0)\bm g_{0}.
 \label{MZNET}
\end{equation}
Here we have arranged the terms in the same way as in \eqref{MZDISCRETE}.

\subsection{Mori-Zwanzig equations for probability density functions}
\label{sec:MZequations }
We have seen that the PDF of the random vector 
$\bm X_n$ can be mapped forward and backward 
through the neural network via the transfer 
operator $\N(q,n)$ in \eqref{N}. 
Replacing $\R$ with $\N$ in \eqref{MZDISCRETE} 
 yields the following Mori-Zwanzig equation for the 
PDF of $\bm X_n$
\begin{equation}
p_{n+1} = \underbrace{ \N(n+1,n) 
\P_n p_n}_{\text{streaming term}} + 
\underbrace{\N(n+1,n)
\Phi_{\N}(n,0)\Q_0 p_0}_{\text{noise term}} + 
\underbrace{\N(n+1,n) \sum_{m=0}^{n-1}
\Phi_{\N}(n,m)\P_m p_m}_{\text{memory term}},
\label{MZDISCRETEPDF}
\end{equation} 
Alternatively, by using the MZ equation \eqref{MZNET}, we can write 
\begin{equation}
p_{n} = \Phi_{\N}(n,0) p_{0} +
\sum_{m=0}^{n-1}\N(n,m+1)\P_{m+1} \N(m+1,m)
 \Phi_{\N}(m,0) p_{0},
 \label{MZDISCRETE2PDF}
\end{equation}
where 
\begin{equation}\label{eq: phiN}
\Phi_{\N}(n,m)= \Q_n\N(n,n-1)\cdots \Q_{m+1}\N(m+1,m).
\end{equation}

\subsection{Mori-Zwanzig equation for conditional expectations}
\label{sec:MZcondexp}
Next, we discuss MZ equations in 
neural nets propagating conditional expectations
\begin{equation}
\bm q_n(\bm x) = \mathbb{E}\{\bm u(\bm X_L)|
\bm X_{L-n}=\bm x\}
\end{equation} 
backward across the network, i.e., from 
$\bm q_0(\bm x)=\bm u(\bm x)$ into 
$\bm q_{L}(\bm x) = \mathbb{E}\{\bm u(\bm X_L)|
\bm X_{0}=\bm x\}$. 
To simplify the notation, 
we denote the projection operators in the space 
of conditional expectations with the same 
letters as in the space of PDFs, i.e., 
$\P_n$ and $\Q_n$\footnote{ \label{foot:9}The 
orthogonal projection for conditional 
expectations is the operator adjoint of 
the projection $\P_m$ that operates on 
probability densities, i.e., 
\begin{equation}
\int \mathbb{E}\{\bm u(\bm X_k)|\bm X_q=\bm x\}
\P_q p_q(\bm x)d\bm x = \int \P_q^*
\mathbb{E}\{\bm u(\bm X_k)|\bm X_q=\bm x\}  
p_q(\bm x)d\bm x.
\end{equation} 
Such adjoint relation is the same that connects 
the composition and transfer operators ($\M(q,n)$ 
and $\N(n,q)$ in equation \eqref{adjointrelation}). 
The connection between projections for 
probability densities and conditional expectations 
was extensively discussed in \cite{dominy2017duality} 
in the setting of operator algebras.}. 
Replacing 
$\R$ with $\G$ in \eqref{MZDISCRETE} 
yields the following MZ equation for 
the conditional expectations
\begin{equation}
\bm q_{n+1} = 
\underbrace{\G(n+1,n) \P_n \bm q_n}_{\text{streaming term}} + 
\underbrace{\G(n+1,n)\Phi_{\G}(n,0)
\Q_0 \bm q_0}_{\text{noise term}} + 
\underbrace{\G(n+1,n) \sum_{m=0}^{n-1}
\Phi_{\G}(n,m)\P_m \bm q_m}_{\text{memory term}},
\label{MZDISCRETEEXP}
\end{equation} 
where 
\begin{equation}
\Phi_{\G}(n,m)= \Q_n\G(n,n-1)\cdots \Q_{m+1}\G(m+1,m).
\label{PHIG}
\end{equation}
Equation \eqref{MZDISCRETEEXP} can be equivalently 
written by incorporating the streaming term into 
the summation of the memory term
\begin{equation}
\bm q_{n+1} =  
\G(n+1,n)\Phi_{\G}(n,0)
\Q_0 \bm q_0+ 
\G(n+1,n) \sum_{m=0}^{n}\Phi_{\G}(n,m)\P_m \bm q_m.
\label{MZDISCRETEEXP_bis}
\end{equation} 
Alternatively, by using Eq. \eqref{MZNET} we obtain
\begin{equation}
\bm q_{n} = \Phi_\G(n,0) \bm q_{0} +
\sum_{m=0}^{n-1}\G(n,m+1)\P_{m+1} \G(m+1,m)
\Phi_\G(m,0)\bm q_{0}.
\label{MZDISCRETE2EXP}
\end{equation}

\vspace{0.5cm}
\noindent
{\bf Remark:} The Mori-Zwanzig 
equations \eqref{MZDISCRETEPDF}-\eqref{MZDISCRETE2PDF}
and  \eqref{MZDISCRETEEXP}-\eqref{MZDISCRETE2EXP} 
allow us to perform dimensional reduction within each 
layer of the network (number of neurons per layer, via projection), 
or across different layers (total number of layers, via memory approximation). 
The MZ formulation is also useful to perform 
theoretical analysis of deep learning by 
using tools from operator theory. As we shall 
see in section \ref{sec:fadingMemory}, the memory of 
the neural network can be controlled by controlling the noise 
process $\{\bm \xi_0,\bm \xi_1,\ldots,\bm \xi_{L-1}\}$.

\section{Mori-Zwanzig projection operator}
\label{sec:Projections_1}
Suppose that the neural network 
model \eqref{discrete_sys_10} is 
perturbed by independent random variables 
$\{\bm \xi_n\}$ with bounded range $\mathscr{R}(\bm \xi_n)$. 
{\color{r}In this hypothesis, the range of each 
random vector $\bm X_m$, i.e. $\mathscr{R}(\bm X_m)$, 
is bounded. In fact,  
\begin{equation}
\mathscr{R}(\bm X_{m})\subseteq \Omega_{m}=\{\bm c\in \mathbb{R}^N:\, 
\bm c=\bm a+\bm b \quad \bm a\in[-1,1]^N,\quad  
\bm b\in \mathscr{R}(\bm \xi_{m-1})\}, 
\label{OMEGAm1}
\end{equation}
and $\Omega_{m}$ is clearly a bounded set  if 
$\mathscr{R}(\bm \xi_{m-1})$ is bounded.} 
With specific reference 
to MZ equations for {\em scalar} conditional 
expectations (i.e., conditional averages of scalar 
quantities of interest) 
\begin{equation} 
q_m(\bm x)=\mathbb{E}\left[u(\bm X_L)|\bm X_{L-m}=\bm x\right],
\label{qmm}
\end{equation}
and recalling that
\begin{align}
q_m(\bm x) =& \G(m,m-1) q_{m-1}\nonumber\\
=&\M(L-m,L-m+1)q_{m-1}(\bm x) \nonumber\\
=& \int_{\mathscr{R}(\bm X_{L-m+1})} \underbrace{p_{L-m+1|L-m}(\bm y,\bm x)}_{\rho_{L-m}(\bm y-\bm F_{L-m}\left(\bm x,\bm w_{L-m})\right)}q_{m-1}(\bm y) d\bm y,
\end{align}
we define the following orthogonal projection 
operator\footnote{The projection operator \eqref{kernelProjection} 
can be extended to vector-valued functions and conditional 
expectations by defining an appropriate matrix-valued kernel 
$\bm  K(\bm x,\bm y)$.} on $L^2(\mathscr{R}(\bm X_{L-m}))$

\begin{align}
\P_m: L^2(\mathscr{R}(\bm X_{L-m}))&\mapsto 
L^2(\mathscr{R}(\bm X_{L-m}))\nonumber\\
f &\mapsto \P_m f=\int_{\mathscr{R}(\bm X_{L-m})} 
K_{L-m}(\bm x,\bm y)f(\bm y) d\bm y.
\label{kernelProjection}
\end{align}
Since $\P_m$ is, by definition, an orthogonal 
projection we have that $\P_m$ is idempotent 
($\P_m^2=\P_m$), bounded, and self-adjoint 
relative to the inner product in 
{$L^2\left(\mathscr{R}(\bm X_{L-m})\right)$}.
These conditions imply that the kernel 
$K_{L-m}(\bm x,\bm y)$ is a symmetric 
Hilbert-Schmidt kernel that 
satisfies the {\em reproducing kernel condition}

\begin{equation}
\int_{\mathscr{R}(\bm X_{L-m})} K_{L-m}(\bm x,\bm y)
K_{L-m}(\bm y,\bm z) d\bm y = K_{L-m}(\bm x,\bm z), \qquad \forall \bm x,\bm z\in \mathscr{R}(\bm X_{L-m}).
\label{c2}
\end{equation}
Note that the classical Mori's projection 
\cite{zhu2018faber,zhu2018estimation} can be 
written in the form \eqref{kernelProjection} if we set 
\begin{equation}
K_{L-m}(\bm x,\bm y)= \sum_{i=0}^M \eta^m_i(\bm x)\eta^m_i(\bm y),
\label{MoriPP}
\end{equation}
where $\{\eta^m_0,\ldots,\eta^m_M\}$ are orthonormal 
functions in {$L^2\left(\mathscr{R}(\bm X_{L-m})\right)$}. 
Since the range of $\bm X_{L-m}$ can vary from layer to layer we have 
that the set of orthonormal functions $\{\eta^m_j(\bm x)\}$ also depends 
on the layer (hence the label ``$m$''). 
The projection operator $\P_m$ is said 
to be {\em non-negative} if 
for all positive functions $v(\bm x)\in 
L_{\mu_m}^2(\mathscr{R}(\bm X_{L-m}))$ ($v>0$) 
we have that $\P_m v\geq 0$ \cite{Pinkus1983}. 
Clearly, this implies that the kernel 
$K_{L-m}(\bm x,\bm y)$ is non-negative in  
$\mathscr{R}(\bm X_{L-m})\times \mathscr{R}(\bm X_{L-m})$ \cite{Gilbert1988}. 
An example of a kernel defining a non-negative 
orthogonal projection is 
\begin{equation}\label{eq: mori1}
K_{L-m}(\bm x,\bm y) =  \eta^m(\bm x)\eta^m(\bm y), \qquad 
\eta^m(\bm x)\geq 0, \qquad 
\left\|\eta^m\right\|_{L_{\mu_m}^2(\mathscr{R}(\bm X_{L-m}))}=1.
\end{equation}
More generally, if $K_{L-m}(\bm x,\bm y)$ is any 
square-integrable symmetric conditional 
probability density function on 
$\mathscr{R}(\bm X_{L-m})\times 
\mathscr{R}(\bm X_{L-m})$, 
then $\P_m$ is a non-negative projection.

{\color{r}
\section{Analysis of the MZ equation}
\label{sec:fadingMemory}}

{\color{r}We now turn to the theoretical analysis of 
the MZ equation. In particular, we study the MZ equation 
for conditional expectations discussed in 
section \ref{sec:MZcondexp}, i.e., equation \eqref{MZDISCRETEEXP}. 
Clearly, the operator $\Q_{m}\G(m,m-1)$ plays a very important role 
in such an equation via the memory operator $\Phi_\G$ defined in \eqref{PHIG}. Indeed, 
$\Phi_\G$ appears in both the memory term and the noise term, and is 
defined by operator products involving $\Q_{m}\G(m,m-1)$.}

In this section, we aim at determining conditions 
on  $Q_{m}\G(m,m-1)=(\I-\P_{m})\G(m,m-1)$, 
e.g., noise level and distribution, such that 
\begin{equation}
\left\|\Q_{m}\G(m,m-1)\right\|=
\sup_{\substack{v\in L^2(\mathscr{R}(\bm X_{L-m+1}))\\ 
v\neq 0}} 
\frac{\left\|\Q_{m}\G(m,m-1) v 
\right\|_{L^2(\mathscr{R}(\bm X_{L-m}))}}
{\left\| v \right\|_{L^2(\mathscr{R}(\bm X_{L-m+1}))}} 
< 1.
\label{contraction}
\end{equation}
In this way, the operator $\Q_{m}\G(m,m-1)$ becomes a contraction,
and therefore the MZ memory term in \eqref{MZDISCRETEEXP} 
decays with the number of layers, while the noise 
term decays zero. {\color{r}Indeed, if \eqref{contraction} 
holds true, then the norm of memory operator $\Phi_\G(n,m)$ defined 
in \eqref{PHIG} (similar in \eqref{MZDISCRETEEXP_bis} and \eqref{MZDISCRETE2EXP} ) decays with the number of 
``$\Q\G$'' operator products taken, i.e.,  with the number of layers.}

\subsection{\color{r} Deterministic neural networks}

{\color{r} Before turning to the theoretical analysis of 
the operator $\Q_{m}\G(m,m-1)$, it is convenient to dwell 
on the case where the neural network is deterministic 
(no random perturbations), and has $\tanh()$ activation 
functions. This case is quite common in practical 
applications, and also allows for significant simplifications 
of the MZ framework.  
First of all, in the absence of noise the 
output of each neural network layer has the same range, i.e.,
\begin{equation}
\mathscr{R}\left(\bm X_n\right)=[-1,1]^N \qquad n=1,\ldots,L, 
\end{equation}
where $N$ is the number of neurons, assumed to be constant 
for each layer. Hence, we can choose a projection operator \eqref{kernelProjection} 
that does not depend on the particular layer. For simplicity, 
we consider  
\begin{equation}
\P f=\int_{[-1,1]^N}  K(\bm x,\bm y) f(\bm y)d\bm y,
\label{1Dproj}
\end{equation}
where 
\begin{equation}
K(\bm x,\bm y) = \eta_0 + \sum_{k=1}^M \eta_k(\bm x) \eta_k(\bm y).
\label{kernelDet}
\end{equation}
Here $\{\eta_0,\ldots,\eta_M\}$  are orthonormal 
functions in $L^2\left([-1,1]^N\right)$, e.g., normalized multivariate 
Legendre polynomials \cite{Wang2020}. We sort $\{\eta_k\}$ based 
on degree lexicographic order. In this way, the first $N+1$ orthonormal functions in \eqref{kernelDet} are explicitly written as
\begin{equation}
\eta_0=2^{-N/2}, \qquad \eta_k(\bm x) = 2^{-N/2}\sqrt{3} x_k \qquad k=1,\ldots, N.
\label{orthB}  
\end{equation}  
Moreover, if the neural network has linear output we have 
$q_0(\bm x)=\bm \alpha \cdot \bm x$ and therefore
\begin{equation}
\P q_0 = q_0\qquad \Q q_0=(\I-\P)q_0=0.
\label{Qq0}
\end{equation}
This implies that the noise term in the MZ 
equation \eqref{MZDISCRETEEXP} is zero for the 
projection kernel \eqref{kernelDet}-\eqref{orthB} and 
networks with linear output. 
 
To study the MZ memory term we consider a simple example 
involving a two-layer deterministic neural net 
with $d$-dimensional input $\bm x\in\Omega_0\subseteq \mathbb{R}^d$ 
and scalar output $q_2(\bm x)$. The MZ 
equation \eqref{MZDISCRETEEXP} with projection 
operator \eqref{1Dproj}-\eqref{orthB} can be 
written as
\begin{align}
q_{2}(\bm x) = \G(2,1) \P q_1+ \underbrace{\G(2,1)\left[q_1-\P q_1\right]}_{\text{memory term}},\qquad \bm x\in \Omega_0.
\label{MZnet2}
\end{align}
Clearly, if $q_1$ is approximately in the range 
of $\P$ (i.e., if $q_1\simeq \P q_1$) then 
the neural network is essentially {\em memoryless} 
(the memory term in \eqref{MZnet2} drops out).
The next question is whether the nonlinear function 
$q_1$ can indeed be approximated accurately 
by $\P q_1$. This is a well-established result 
in multivariate polynomial approximation theory. In particular, 
it can be shown that $\P q_1$ converges exponentially 
fast to $q_1$as we increase the polynomial degree 
in the multivariate Legendre expansion 
(i.e., as we increase $M$ in \eqref{kernelDet}\footnote{{\color{r}From 
an approximation theory viewpoint, the number of 
basis functions $M$ in \eqref{kernelDet} should be 
defined as the radius of an $\ell^q$ ball index set 
in $\mathbb{N}^N_0$ (see \cite[\S 4.2]{Wang2020} 
and \cite{Trefethen2017}).}}).
Exponential convergence follows immediately from the 
fact that the function 
\begin{align}
q_1(\bm x) &=\G(1,0)q_0 \nonumber\\
& =\bm \alpha \cdot \tanh(\bm W_{1} \bm x+\bm b_{1}),\qquad \bm x\in [-1,1]^N
\label{nonlinq1}
\end{align}
admits an analytical extension on a Bernstein poly-ellipse 
enclosing $[-1,1]^N$ (see \cite{Wang2020} for details).
The projection of the nonlinear function 
$q_1(\bm x)$ onto the linear space spanned by the $N+1$ orthonormal 
basis functions \eqref{orthB} (i.e., the space of affine functions 
defined on $[-1,1]^N$) can be written as
\begin{equation}
\P q_1= \beta_0 + \bm \beta \cdot \bm x,
\label{projectedq1}
\end{equation}
where the coefficients $\{\beta_0,\ldots,\beta_N\}$ are given by
\begin{equation}
\beta_0 = \frac{1}{2^{N}} \int_{[-1,1]^N} q_1(\bm x) d\bm x, \qquad 
\beta_j = \frac{3}{2^{N}} \int_{[-1,1]^N} q_1(\bm x) x_j d\bm x \qquad j=1,\ldots,N.
\label{betak}
\end{equation} 
Hence, if $q_1$ is approximately in the 
range of $\P$ (i.e., $\P q_1 \simeq q_1$), then we can 
explicitly write the MZ equation \eqref{MZnet2} as
\begin{equation}
q_2(\bm x) \simeq \beta_0 + \bm \beta\cdot \tanh(\bm W_0 \bm x+\bm b_0),\qquad \bm x\in\Omega_0.
\label{NNapprox}
\end{equation}
Note that this reduces the total number of degrees 
of freedom of the two-layer neural network 
from $N(N+d+3)$ to $N(d+2)+1$, under the condition 
that $q_1$ in equation \eqref{nonlinq1} can be accurately 
approximated by the hyperplane $\P q_1$ in equation \eqref{projectedq1}. 
This depends of course on the weights $\bm W_{1}$ and 
biases $\bm b_{1}$ in \eqref{nonlinq1}. In particular, if the entries 
of the weight matrix $\bm W_{1}$ are sufficiently small 
then by using Taylor series it is immediate to prove that  
$\P q_1\simeq q_1$.

\vs
\noindent
{\em An example:} In Figure \ref{fig:MZplot} we compare 
the MZ streaming and memory terms for the  
two-layer deterministic neural network we studied in section \ref{sec:trainingParadigms} and the target function \eqref{f1d}. 
Here we consider $N=20$ neurons, and approximate the 
integrals in \eqref{betak} using Monte Carlo quadrature. Clearly, 
it is possible to constrain the norm of the weight matrix $\bm W_1$ 
during training so  that the nonlinear function $q_1$ in \eqref{nonlinq1} is
approximated well by the affine function $\P q_1$ in \eqref{projectedq1}. 
This essentially allows us to control the approximation 
error $\left\|q_1-\P q_1\right\|_{L^2([-1,1]^N)}$ and therefore 
the {the amplitude of the MZ memory term} in \eqref{MZnet2}. 
For this particular example, we set 
$\left\|\bm W_1\right\|_{\infty}\leq 0.1$, which yields the 
following contraction factor 
\begin{equation}
\frac{\left\|\Q \G(2,1) q_0\right\|_{L^2([-1,1]^N)}}{\left\|q_0\right\|_{L^2([-1,1]^N)}}=\frac{\left\|q_1-\P q_1\right\|_{L^2([-1,1]^N)}}{\left\|q_0\right\|_{L^2([-1,1]^N)}}  = 7.5\times 10^{-4}.
\label{contindex}
\end{equation}
Note that \eqref{contindex} is not the operator norm 
of $\Q \G(2,1)$ we defined in \eqref{contraction}. 
In fact, the operator norm requires computing the supremum of $\|\Q \G(2,1) v\|_{L^2([-1,1])}/\|v\|_{L^2([-1,1])}$ over all nonzero
functions $v\in L^2([-1,1]^N)$, not just the linear function $v=q_0$
If training-over-weights of deterministic nets 
is done in a fully unconstrained optimization setting then 
there is no guarantee that the MZ memory term is small. 
\begin{figure}
\centerline{\includegraphics[height=6.5cm]{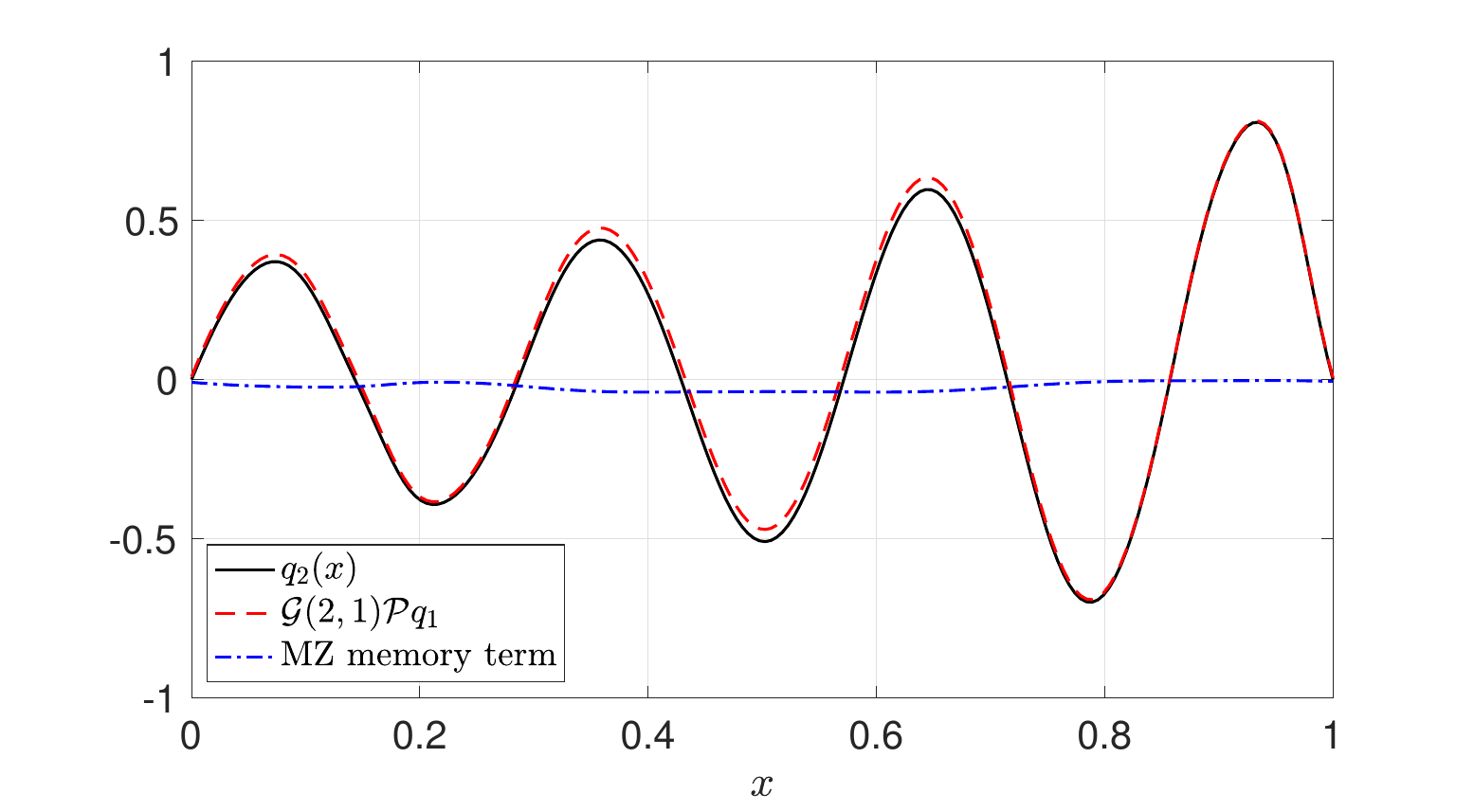}}
\caption{\color{r}Comparison between the MZ streaming and memory terms 
for the two-layer deterministic neural network we studied in section \ref{sec:trainingParadigms} and the target function \eqref{f1d}. Here 
we consider $N=20$ neurons, and approximate the 
high-dimensional integrals in \eqref{betak} by using Monte 
Carlo quadrature. The neural network is trained by 
constraining the entries of the weight matrix $\bm W_1$ 
as $\left\|\bm W_1\right\|_{\infty}\leq 0.1$. This 
allows us to control the approximation 
error $\left\|q_1-\P q_1\right\|_{L^2([-1,1]^N)}$ when projecting the 
nonlinear function \eqref{nonlinq1} onto the space 
of affine functions \eqref{projectedq1} which, in turn, 
controls the magnitude of the MZ memory term.}
\label{fig:MZplot}
\end{figure}

\vs
\noindent
The discussion about the approximation of the MZ memory 
term can be extended  to deterministic neural networks 
with an increasing number of layers. For example, 
the output of a three-layer deterministic neural 
network can be written as 
\begin{align}
q_3(\bm x) = &\G(3,2)\P q_2 + \G(3,2)\Q \G(2,1)\P q_1 + 
\G(3,2)\Q\G(2,1)\Q\G(1,0)\P q_0 \nonumber\\
 = &\G(3,2)\P q_2 + \underbrace{\G(3,2)\left[\I- 
 \P \right]\G(2,1)\P q_1+
 \G(3,2)\left[ \I-\P \right]\G(2,1)
 \left[q_1-\P q_1\right]}_{\text{memory term}}.
 \label{130}
\end{align}
Note that if $\P q_1$ is a linear function of the form \eqref{projectedq1}, 
then the term $\G(3,2)\left[\I- \P \right]\G(2,1)\P q_1$ has exactly 
the same functional form as the MZ memory term 
$\G(2,1)\left[\I- \P \right]\G(1,0)\P q_0 = \G(2,1)[q_1-\P q_1]$. 
Hence, everything we said about the accuracy of 
a linear approximation of $\bm \alpha\cdot \tanh(\bm W_{1}\bm x
+\bm b_{1})$ can be directly applied now to 
$\G(2,1)\P q_1=\bm \beta\cdot \tanh(\bm W_{2}\bm x+\bm b_{2})$.

On the other hand, if $q_1$ can be approximated 
with accuracy by the linear function $\P q_1$, 
then the term $\G(2,1)[q_1-\P q_1]$ 
is likely to be small. This implies that the last 
term in \eqref{130} is likely to be small as well 
(bounded operator $\G(3,2)$ applied to the difference 
between two small functions). In other words, if 
the weights and biases of the network are such that 
$q_1(\bm x)=\bm \alpha \cdot \tanh(\bm W_n\bm x+ b_n)$ can be 
approximated with accuracy by the linear 
function \eqref{projectedq1} then the MZ memory 
term of the three-layer network is small. 

More generally, by using error estimates for 
multivariate polynomial approximation of analytic functions 
\cite{Wang2020}, it is possible to derive 
an upper bound for the operator norm of $\Q \G(m,m-1)$ in 
\eqref{contraction}.
Such a bound is rather involved, but in principle it 
allows us to determine conditions on the weights 
and biases of the neural network such that  
$\left\|\Q \G(m,m-1)\right\|\leq \kappa$, where 
$\kappa$ is a given constant smaller than one. 
This allows us to simplify the memory 
term in \eqref{MZDISCRETEEXP} by neglecting terms 
involving a large number of ``$\Q_m \G(m,m-1)$'' 
operator products in \eqref{PHIG}.
Hereafter, we determine general conditions for the 
operator $\Q_m \G(m,m-1)$ to be a contraction in the presence 
of random perturbations.}

\subsection{\color{r} Stochastic neural networks}

{\color{r} Consider the stochastic neural network model 
\eqref{discrete_sys_10} with $L$ layers, $N$ neurons per layer,  
and transfer functions $\bm F_n$ with range in $[-1,1]^{N}$ 
for all $n$. In this section we determine 
general conditions for the operator 
$\Q_m \G(m,m-1)$ to be a contraction (i.e., 
to satisfy the inequality \eqref{contraction}) 
{independently} of the neural network weights.}
To this end, we first write the operator $\Q_{m}\G(m,m-1)$ as 
\begin{align}
(\Q_{m}\G(m,m-1) v)(\bm x) &= \Q_{m}
\int_{\mathscr{R}(\bm X_{L-m+1})}\underbrace{\rho_{L-m}\left(\bm y-\bm F_{L-m}(\bm x,\bm w_{L-m})\right)}_{p_{L-m+1|L-m}(\bm y|\bm x)}v(\bm y)d\bm y \nonumber\\
&=\int_{\mathscr{R}(\bm X_{L-m+1})} \gamma_{L-m} (\bm y,\bm x) v(\bm y)d\bm y,
\label{QG}
\end{align}
where 
\begin{equation}
\gamma_{L-m} (\bm y,\bm x)=\rho_{L-m}(\bm y-\bm F_{L-m}(\bm x, \bm w_{L-m}))-
\int_{\mathscr{R}(\bm X_{L-m})} K_{L-m}(\bm x,\bm z)\rho_{L-m} (\bm y-\bm F_{L-m}(\bm z, \bm w_{L-m}))d\bm z.
\label{gammaM}
\end{equation}
{\color{r}The conditional density 
$p_{L-m+1|L-m}(\bm y|\bm x)=\rho_{L-m}
(\bm y-\bm F_{L-m}(\bm x, \bm w_{L-m}))$ is 
defined on the set 
\begin{equation}
\mathscr{B}_{L-m}=\{(\bm x,\bm y)\in \mathscr{R}
(\bm X_{L-m})\times \mathscr{R}(\bm X_{L-m+1}): \, 
(\bm y-\bm F_{L-m}(\bm x,\bm w_{L-m}))\in 
\mathscr{R}(\bm \xi_{L-m})\}.
\label{setB11}
\end{equation}}
As before, we assume that $K_{L-m}$ is an element of  $L^2(\mathscr{R}(\bm X_{L-m})\times \mathscr{R}(\bm X_{L-m}))$
and expand it as\footnote{As is well known, 
if $K(\bm x,\bm y)$ is a (symmetric) bounded projection kernel 
satisfying \eqref{c2} then $K$ is necessarily separable, i.e., it 
can be written in the form \eqref{Kfin}.} 
\begin{equation}
K_{L-m}(\bm x,\bm y) = c_m + \sum_{k=1}^M \eta^m_{i}(\bm x)\eta^m_i(\bm y),
\label{Kfin}
\end{equation}
where $c_m$ is a real number and $\eta^m_i$ are zero-mean 
orthonormal basis functions in $L^2(\mathscr{R}(\bm X_{L-m})$, i.e.,
\begin{equation}
\int_{\mathscr{R}(\bm X_{L-m})}\eta^m_i(\bm x)d\bm x=0, \qquad 
\int_{\mathscr{R}(\bm X_{L-m})}\eta^m_i(\bm x)\eta^m_j(\bm x)d\bm x = \delta_{ij}.
\label{zeromeanortho}
\end{equation}

\begin{lemma}
\label{lemma:c}
The kernel \eqref{Kfin} satisfies the idempotency 
requirement \eqref{c2} if and only if 
\begin{equation}
c_m=0\qquad \text{or} \qquad c_m=\frac{1}{\lambda(\mathscr{R}(\bm X_{L-m}))}, 
\end{equation}
where
$\lambda(\mathscr{R}(\bm X_{L-m}))$ is the 
Lebesgue measure of the set
$\mathscr{R}(\bm X_{L-m})$.
\end{lemma}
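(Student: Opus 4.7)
The plan is to substitute the ansatz \eqref{Kfin} directly into the reproducing-kernel condition \eqref{c2}, expand the double sum, and collapse it using the two conditions in \eqref{zeromeanortho}. Writing $\Omega_m = \mathscr{R}(\bm X_{L-m})$ and $|\Omega_m| = \lambda(\Omega_m)$ for brevity, I would compute
\begin{align*}
\int_{\Omega_m} K_{L-m}(\bm x,\bm y)K_{L-m}(\bm y,\bm z)\,d\bm y
&= \int_{\Omega_m}\!\left(c_m+\sum_{i=1}^M\eta^m_i(\bm x)\eta^m_i(\bm y)\right)\!\left(c_m+\sum_{j=1}^M\eta^m_j(\bm y)\eta^m_j(\bm z)\right)d\bm y.
\end{align*}
Expanding gives four terms. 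The pure constant term contributes $c_m^2|\Omega_m|$. The two cross terms involve single integrals $\int_{\Omega_m}\eta^m_i(\bm y)\,d\bm y$, which vanish by the zero-mean condition in \eqref{zeromeanortho}. The remaining double sum reduces, by orthonormality $\int_{\Omega_m}\eta^m_i(\bm y)\eta^m_j(\bm y)\,d\bm y = \delta_{ij}$, to $\sum_{i=1}^M \eta^m_i(\bm x)\eta^m_i(\bm z)$.

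Putting the pieces together, the left-hand side equals $c_m^2|\Omega_m| + \sum_{i=1}^M \eta^m_i(\bm x)\eta^m_i(\bm z)$, while the right-hand side of \eqref{c2} is $c_m + \sum_{i=1}^M \eta^m_i(\bm x)\eta^m_i(\bm z)$. Subtracting, the condition \eqref{c2} is equivalent to the scalar identity $c_m^2|\Omega_m| = c_m$, i.e. $c_m(c_m|\Omega_m|-1)=0$, holding for all $\bm x,\bm z\in\Omega_m$. This immediately yields the two admissible values $c_m=0$ or $c_m=1/|\Omega_m|$, establishing both implications.

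There is no real obstacle here beyond bookkeeping; the only subtlety worth flagging is that the argument tacitly uses $|\Omega_m|<\infty$, which is guaranteed by the assumption in this section that the noise process $\{\bm\xi_n\}$ has bounded range (so that $\mathscr{R}(\bm X_{L-m})$ is a bounded, hence finite-measure, subset of $\mathbb{R}^N$, as established in \ref{sec:functionalsetting}). If $|\Omega_m|=\infty$, only $c_m=0$ would be admissible, and the constant mode could not be part of a separable reproducing kernel of the stated form.
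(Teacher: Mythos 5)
Your proof is correct and follows essentially the same route as the paper's: substitute \eqref{Kfin} into \eqref{c2}, use the zero-mean and orthonormality conditions \eqref{zeromeanortho} to reduce the identity to $c_m^2\lambda(\mathscr{R}(\bm X_{L-m}))=c_m$, and solve. Your added remark about finiteness of $\lambda(\mathscr{R}(\bm X_{L-m}))$ is a sensible observation but does not change the argument.
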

\begin{proof}
By substituting  \eqref{Kfin} into \eqref{c2} 
and taking into account \eqref{zeromeanortho} we obtain
\begin{equation}
c_m^2\lambda(\mathscr{R}(\bm X_{L-m}))=c_m,
\end{equation} 
from which we obtain $c_m=0$ or
$c_m=1/\lambda(\mathscr{R}(\bm X_{L-m}))$.

\end{proof}

\noindent
Clearly, if $\G(m,m-1)$ is itself a contraction and 
$\Q_{m}$ is an orthogonal projection, then the operator 
product $\Q_{m}\G(m,m-1)$ is a contraction. In the following 
Proposition, we compute a simple bound for 
the operator norm of $\Q_{m}\G(m,m-1)$.
\begin{proposition}
\label{lemma:contraction1}
{\color{r}Let $\Q_{m}$ be an orthogonal projection in 
$L^2(\bm X_{L-m})$. Suppose that the PDF of $\bm \xi_{L-m}$, 
i.e. $\rho_{L-m}$, is in $L^2(\mathscr{R}(\bm \xi_{L-m}))$. Then  
\begin{equation}
\left\|\Q_{m}\G(m,m-1)\right\|^2\leq 
\lambda(\Omega_{L-m})
\left\|\rho_{L-m}\right\|^2_{L^2(
\mathscr{R}(\bm \xi_{L-m}))}, 
\label{UBG}
\end{equation}
where $\lambda(\Omega_{L-m})$ is the Lebesgue 
measure of the set $\Omega_{L-m}$  defined in \eqref{OMEGAm1} 
and
\begin{equation}
\left\|\rho_{L-m}\right\|_{L^2(\mathscr{R}(\bm \xi_{L-m}))}^2= 
\int_{\mathscr{R}(\bm \xi_{L-m})} \rho_{L-m}(\bm x)^2d\bm x.
\label{IL-m}
\end{equation}}
In particular, if $\G(m,m-1)$ is a 
contraction then $\Q_{m} \G(m,m-1)$ is a contraction. 
\end{proposition}
\begin{proof}
The last statement in the Proposition is trivial. 
In fact, if $\Q_{m}$ is an orthogonal projection then 
its operator norm is less or equal to one. Hence,
\begin{equation}
\left\|Q_{m}\G(m,m-1)\right\|^2\leq 
\underbrace{\left\|Q_{m}\right\|^2}_{\leq 1}
\left\|\G(m,m-1)\right\|^2. 
\end{equation}
Therefore if $\G(m,m-1)$ is a contraction 
and $\Q_{m}$ is an orthogonal projection then 
$\Q_{m}\G(m,m-1)$ is a contraction.
{\color{r} We have shown in \ref{sec:functionalsetting} 
that if $\rho_{L-m}\in L^2(\mathscr{R}(\bm \xi_{L-m}))$ then 
$\G(m,m-1)$ is a bounded linear 
operator from $L^2(\mathscr{R}(\bm X_{L-m+1})$ 
to $L^2(\mathscr{R}(\bm X_{L-m})$. Moreover, 
the operator norm of $\G(m,m-1)$ can be bounded as 
(see Eq. \eqref{Kn}) 
\begin{equation}
\left\|\G(m,m-1)\right\|^2\leq \lambda(\Omega_{L-m})
\left\|\rho_{L-m}\right\|^2_{L^2(
\mathscr{R}(\bm \xi_{L-m}))}.
\label{TROT}
\end{equation}
Hence,
\begin{equation} 
\left\|\Q_m\G(m,m-1)\right\|^2\leq 
\underbrace{\left\|\Q_m\right\|^2}_{\leq 1}\lambda(\Omega_{L-m})
\left\|\rho_{L-m}\right\|^2_{L^2(\mathscr{R}(\bm \xi_{L-m}))},
\end{equation}
which completes the proof of \eqref{UBG}.
}

\end{proof}

\noindent
The upper bound in \eqref{UBG} can be 
slightly improved using the definition of 
the projection kernel $K_{L-m}$. This is stated in 
the following Theorem.

\begin{theorem}
\label{prop:contraction}
\color{r} Let $K_{L-m}$ be the projection kernel 
\eqref{Kfin} with $c_m=1/\lambda(\mathscr{R}(\bm X_{L-m}))$. 
Then the operator norm of $\Q_{m}\G(m,m-1)$ can be bounded as 
\begin{equation}
\left\|\Q_{m}\G(m,m-1)\right\|^2\leq
\lambda(\Omega_{L-m})\left(
\left\|\rho_{L-m}\right\|_{L^2(\mathscr{R}
(\bm \xi_{L-m}))}^2-
\frac{1}
{\lambda(\Omega_{L-m+1})}\right).
\label{nbound}
\end{equation}
The upper bound in \eqref{nbound}
is independent of the neural network weights. 
\end{theorem}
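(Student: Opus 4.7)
The plan is to refine the Hilbert-Schmidt argument of Proposition \ref{lemma:contraction1} by exploiting the fact that the specific choice $c_m = 1/\lambda(\mathscr{R}(\bm X_{L-m}))$ forces the constant function to lie in the range of $\P_m$, so $\Q_m$ annihilates constants. This injects an extra negative correction inside the bound, which is precisely what \eqref{nbound} asks for beyond \eqref{UBG}. The argument has three essentially independent ingredients: a Cauchy-Schwarz/Hilbert-Schmidt estimate as in Proposition \ref{lemma:contraction1}, the Pythagorean identity for the orthogonal projection $\P_m$, and the probability-density normalization of $\rho_{L-m}$.

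Concretely, starting from the kernel representation \eqref{QG}, Cauchy-Schwarz bounds the operator norm of $\Q_m\G(m,m-1)$ by the Hilbert-Schmidt norm of $\gamma_{L-m}$. For each fixed $\bm y\in \mathscr{R}(\bm X_{L-m+1})$, the function $\bm x\mapsto \gamma_{L-m}(\bm y,\bm x)$ is exactly $(\I-\P_m)$ applied to $\rho_\star(\bm y,\cdot) := \rho_{L-m}(\bm y - \bm F(\cdot,\bm w_{L-m}))$, so the Pythagorean identity gives
\begin{equation}
\int_{\mathscr{R}(\bm X_{L-m})} |\gamma_{L-m}(\bm y,\bm x)|^2\, d\bm x \;=\; \|\rho_\star(\bm y,\cdot)\|^2_{L^2(\mathscr{R}(\bm X_{L-m}))} - \|\P_m\rho_\star(\bm y,\cdot)\|^2_{L^2(\mathscr{R}(\bm X_{L-m}))}.
\end{equation}
Substituting \eqref{Kfin} with $c_m = 1/\lambda(\mathscr{R}(\bm X_{L-m}))$ and using the orthogonality of the constant piece against the zero-mean basis $\{\eta_i^m\}$ from \eqref{zeromeanortho}, the projected norm is bounded below by its pure-constant component,
\begin{equation}
\|\P_m\rho_\star(\bm y,\cdot)\|^2 \;\geq\; c_m^2\,\lambda(\mathscr{R}(\bm X_{L-m}))\,A(\bm y)^2 \;=\; \frac{A(\bm y)^2}{\lambda(\mathscr{R}(\bm X_{L-m}))}, \qquad A(\bm y) := \int_{\mathscr{R}(\bm X_{L-m})}\rho_\star(\bm y,\bm z)\, d\bm z.
\end{equation}

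Integrating in $\bm y\in \mathscr{R}(\bm X_{L-m+1})$ and combining the two pieces, the first term reproduces the bound $\lambda(\Omega_{L-m})\,\|\rho_{L-m}\|^2_{L^2(\mathscr{R}(\bm \xi_{L-m}))}$ of Proposition \ref{lemma:operatorbounds} via the change of variables $\bm u = \bm y - \bm F(\bm x,\bm w_{L-m})$. For the subtracted term, Jensen's (or Cauchy-Schwarz's) inequality on $L^2(\mathscr{R}(\bm X_{L-m+1}))$ gives
\begin{equation}
\int_{\mathscr{R}(\bm X_{L-m+1})} A(\bm y)^2\, d\bm y \;\geq\; \frac{1}{\lambda(\Omega_{L-m+1})}\left(\int_{\mathscr{R}(\bm X_{L-m+1})}A(\bm y)\, d\bm y\right)^2,
\end{equation}
while Fubini together with the support-propagation identity $\bm F(\bm z,\bm w_{L-m}) + \mathscr{R}(\bm \xi_{L-m})\subset \mathscr{R}(\bm X_{L-m+1})$ (established in \ref{sec:functionalsetting}) turns the inner integral of $\rho_\star$ into $1$ by the change of variables $\bm u = \bm y - \bm F(\bm z,\bm w_{L-m})$, so that $\int A(\bm y)\, d\bm y = \lambda(\mathscr{R}(\bm X_{L-m})) = \lambda(\Omega_{L-m})$. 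Dividing through by $\lambda(\mathscr{R}(\bm X_{L-m}))$ and reassembling produces the subtraction $\lambda(\Omega_{L-m})/\lambda(\Omega_{L-m+1})$ appearing in \eqref{nbound}. Independence from the weights $\bm w$ is automatic because both $\|\rho_{L-m}\|^2$ and the measures $\lambda(\Omega_{L-m}),\lambda(\Omega_{L-m+1})$ depend only on the noise distribution and its support.

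The main obstacle is the bookkeeping in the final step: identifying $\mathscr{R}(\bm X_{L-m})$ with $\Omega_{L-m}$ up to Lebesgue-null sets, and verifying that the change of variables that reduces $\iint \rho_\star(\bm y,\bm z)\, d\bm y\, d\bm z$ to the probability normalization $\int \rho_{L-m}(\bm u)\, d\bm u = 1$ does not lose measure at the boundary. Both points rest entirely on the support calculus for the noise process $\{\bm \xi_n\}$ developed in \ref{sec:functionalsetting}; once those identifications are in place, the remainder of the proof is elementary Hilbert-space projection geometry.
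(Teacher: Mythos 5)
Your proposal is correct and follows essentially the same route as the paper: the paper obtains your Pythagorean identity $\|\gamma_{L-m}(\bm y,\cdot)\|^2=\|\rho_\star(\bm y,\cdot)\|^2-\|\P_m\rho_\star(\bm y,\cdot)\|^2$ by expanding the square term by term with the kernel \eqref{Kfin}, and then applies the same Cauchy--Schwarz lower bound on the constant component while discarding the nonnegative $\eta^m_k$ contributions. The one point to tighten is your final bookkeeping: $\lambda(\mathscr{R}(\bm X_{L-m}))=\lambda(\Omega_{L-m})$ is false in general (only $\mathscr{R}(\bm X_{L-m})\subseteq\Omega_{L-m}$ holds), so instead factor $\lambda(\mathscr{R}(\bm X_{L-m}))$ out of both terms, observe that the bracket $\left\|\rho_{L-m}\right\|^2_{L^2(\mathscr{R}(\bm\xi_{L-m}))}-1/\lambda(\Omega_{L-m+1})$ is nonnegative because $\left\|\rho_{L-m}\right\|^2\geq 1/\lambda(\mathscr{R}(\bm\xi_{L-m}))\geq 1/\lambda(\Omega_{L-m+1})$, and only then enlarge the prefactor to $\lambda(\Omega_{L-m})$ --- which is exactly the step the paper takes in passing from \eqref{k11} to \eqref{nbound}.
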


\begin{proof}
The function $\gamma_{L-m}(\bm y,\bm x)$ defined in \eqref{gammaM} 
is a Hilbert-Schmidt kernel. Therefore, 
\begin{equation}
\left\|\Q_{m}\G(m,m-1)\right\|^2\leq 
\left\|\gamma_{L-m} \right\|^2_{L^2(\mathscr{R}(\bm X_{L-m+1})\times \mathscr{R}(\bm X_{L-m}))}.
\label{QGm}
\end{equation}
The $L^2$ norm of $\gamma_{L-m}$ can be written as (see \eqref{gammaM})
\begin{align}
&\int_{\mathscr{R}(\bm X_{L-m+1})} \int_{\mathscr{R}(\bm X_{L-m})} \gamma_{L-m} (\bm y,\bm x)^2 d\bm y d\bm x = \nonumber\\
&\int_{\mathscr{R}(\bm X_{L-m+1})} \int_{\mathscr{R}(\bm X_{L-m})} \rho_{L-m} (\bm y-\bm F_{L-m}(\bm x,\bm w_{L-m}))^2 d\bm yd\bm x +
\nonumber \\
&\int_{\mathscr{R}(\bm X_{L-m+1})} \int_{\mathscr{R}(\bm X_{L-m})}\left(\int_{\mathscr{R}(\bm X_{L-m})} K_{L-m}(\bm x,\bm z)
\rho_{L-m} (\bm y-\bm F_{L-m}(\bm z,\bm w_{L-m}))d\bm z\right)^2d\bm yd\bm x -
\nonumber\\  
&2 \int_{\mathscr{R}(\bm X_{L-m+1})} \int_{\mathscr{R}(\bm X_{L-m})}\rho_{L-m} (\bm y-\bm F_{L-m}(\bm x,\bm w_{L-m}))\times\nonumber\\
&
\left(\int_{\mathscr{R}(\bm X_{L-m})} K_{L-m}(\bm x,\bm z)
\rho_{L-m} (\bm y-\bm F_{L-m}(\bm z,\bm w_{L-m}))d\bm z\right)
d\bm yd\bm x.
\label{re}
\end{align}
{\color{r}By using \eqref{TROT}, we can write the first term 
at the right hand side of \eqref{re} as
\begin{align}
\int_{\mathscr{R}(\bm X_{L-m+1})} \int_{\mathscr{R}(\bm X_{L-m})} \rho_{L-m} (\bm y-\bm F_{L-m}(\bm x,\bm w_{L-m}))^2 d\bm yd\bm x =&\left\|\G(m,m-1)\right\|^2\nonumber\\
\leq &\lambda(\Omega_{L-m})
\left\|\rho_{L-m}\right\|_{L^2(\mathscr{R}
(\bm \xi_{L-m}))}^2.
\label{firstterm}
\end{align}}
A substitution of the series  expansion \eqref{Kfin} into the second  
term at the right hand side of \eqref{re} yields
\begin{align}
&\int_{\mathscr{R}(\bm X_{L-m+1})} \int_{\mathscr{R}(\bm X_{L-m})}\left(\int_{\mathscr{R}(\bm X_{L-m})} K_{L-m}(\bm x,\bm z)
\rho_{L-m} (\bm y-\bm F_{L-m}(\bm z,\bm w_{L-m}))d\bm z\right)^2d\bm yd\bm x =\nonumber\\
& \frac{1}{\lambda(\mathscr{R}(\bm X_{L-m}))}
\int_{\mathscr{R}(\bm X_{L-m+1})} 
\left(\int_{\mathscr{R}(\bm X_{L-m})}\rho_{L-m}(\bm y-
\bm F_{L-m}(\bm x,\bm w_{L-m}))d\bm z\right)^2d\bm y+\nonumber\\
&\sum_{k=1}^M\int_{\mathscr{R}(\bm X_{L-m+1})}\left(\int_{\mathscr{R}(\bm X_{L-m})} 
\rho_{L-m} (\bm y-\bm F_{L-m}(\bm z,\bm w_{L-m}))
\eta^m_k(\bm z)d\bm z\right)^2d\bm y.
\label{u7}
\end{align}
Here, we used the fact that the basis 
functions $\eta^m_k(\bm x)$  
are zero-mean and orthonormal in 
$\mathscr{R}(\bm X_{L-m})$ 
(see Eq. \eqref{zeromeanortho}). 
Similarly, by substituting the expansion \eqref{Kfin} 
in the third  term at the right hand 
side of \eqref{re} we obtain 
\begin{align}
&2 \int_{\mathscr{R}(\bm X_{L-m+1})} 
\int_{\mathscr{R}(\bm X_{L-m})}
\rho_{L-m} (\bm y-\bm F_{L-m}(\bm x,\bm w_{L-m}))
\times \nonumber\\
&\left(\int_{\mathscr{R}(\bm X_{L-m})} K_{L-m}(\bm x,\bm z)
\rho_{L-m} (\bm y-\bm F_{L-m}(\bm z,\bm w_{L-m}))d\bm z\right)
d\bm yd\bm x\nonumber \\
&=\frac{2}{\lambda(\mathscr{R}(\bm X_{L-m}))} 
\int_{\mathscr{R}(\bm X_{L-m+1})}
\left( 
\int_{\mathscr{R}(\bm X_{L-m})}
\rho_{L-m} (\bm y-\bm F_{L-m}(\bm z,\bm w_{L-m}))
d\bm z\right)^2d\bm y\nonumber\\
&+2 \sum_{k=1}^M  
\int_{\mathscr{R}(\bm X_{L-m+1})}
\left( 
\int_{\mathscr{R}(\bm X_{L-m})}
\rho_{L-m} (\bm y-\bm F_{L-m}(\bm z,\bm w_{L-m}))
\eta^m_k(\bm z)d\bm z\right)^2d\bm y.
\label{u8}
\end{align}
Combining \eqref{re}-\eqref{u8} yields
\begin{align}
&\int_{\mathscr{R}(\bm X_{L-m+1})}
\int_{\mathscr{R}(\bm X_{L-m})} \gamma_{L-m} (\bm y,\bm x)^2 d\bm y d\bm x \leq 
\lambda(\Omega_{L-m})\left\|\rho_{L-m}
\right\|_{L^2(\mathscr{R}(\bm \xi_{L-m})}^2 - 
\nonumber \\
&\frac{1}{\lambda(\mathscr{R}(\bm X_{L-m}))}
\int_{\mathscr{R}(\bm X_{L-m+1})}\left( 
\int_{\mathscr{R}(\bm X_{L-m})}\rho_{L-m}(\bm y-
\bm F_{L-m}(\bm z,\bm w_{L-m}))d\bm z\right)^2d\bm y -
\nonumber \\
&\sum_{k=1}^M  
\int_{\mathscr{R}(\bm X_{L-m+1})}
\left(\int_{\mathscr{R}(\bm X_{L-m})}
\rho_{L-m} (\bm y-\bm F_{L-m}(\bm z,\bm w_{L-m}))
\eta^m_k(\bm z)d\bm z\right)^2d\bm y.
\label{re1}
\end{align}
At this point we use the Cauchy-Schwarz inequality\footnote{The 
inequality in \eqref{r11} follows from the 
Cauchy-Schwarz inequality. Specifically, let 
\begin{equation}
f(\bm y) = 
\int_{\mathscr{R}(\bm X_{L-m})} \rho_{L-m}(\bm y-\bm F(\bm z,\bm w_{L-m}))d\bm z.
\end{equation}
Then 
\begin{equation}
\underbrace{ 
\int_{\mathscr{R}(\bm X_{L-m+1})} 1^2 d\bm y}_{\lambda(\mathscr{R}(\bm X_{L-m+1}))} \int_{\mathscr{R}(\bm X_{L-m+1})} f(\bm y)^2 d\bm y\geq \left( 
\int_{\mathscr{R}(\bm X_{L-m+1})} 1\cdot f(\bm y) 
d\bm y\right)^2=\lambda(\mathscr{R}(\bm X_{L-m}))^2.
\end{equation}
} and well-known properties of conditional PDFs 
to bound the integral in the second term and 
the integrals in the last summation, respectively, as
\begin{align}
\frac{\lambda(\mathscr{R}(\bm X_{L-m}))^2}{\lambda(\mathscr{R}(\bm X_{L-m+1}))} \leq \int_{\mathscr{R}(\bm X_{L-m+1})}\left(
\int_{\mathscr{R}(\bm X_{L-m})} \rho_{L-m}(\bm y-
\bm F_{L-m}(\bm z,\bm w_{L-m}))d\bm z\right)^2d\bm y,
\label{r11}
\end{align}
and
\begin{align}
&\sum_{k=1}^M\int_{\mathscr{R}(\bm X_{L-m+1})}
\left(\int_{\mathscr{R}(\bm X_{L-m})}
\rho_{L-m} (\bm y-\bm F_{L-m}(\bm z,\bm w_{L-m}))
\eta^m_k(\bm z)d\bm z\right)^2d\bm y
\geq 0.
\label{r8}
\end{align}
By combining  \eqref{re1}-\eqref{r8} we finally obtain
\begin{align}
\left\|\Q_{m}\G(m,m-1)\right\|^2\leq& 
\left\|\gamma_{L-m} \right\|^2_{L^2(\mathscr{R}
(\bm X_{L-m+1})\times \mathscr{R}(\bm X_{L-m}))}
\nonumber\\
\leq&\lambda(\Omega_{L-m})\left(
\left\|\rho_{L-m}\right\|_{L^2(\mathscr{R}
(\bm \xi_{L-m}))}^2-
\frac{1}
{\lambda(\Omega_{L-m+1})}\right),
\end{align}
which proves the Theorem.

\end{proof}

\vs
\noindent
{\bf Remark:} The last two terms in \eqref{re1} represent the $L^2$ 
norm of the projection of $\rho_{L-m}$ onto the 
orthonormal basis 
$\{\lambda(\mathscr{R}(\bm X_{L-m}))^{-1/2},
\eta^m_1,\ldots,\eta^m_M\}$. 
If we assume that 
$\rho_{L-m}(\bm y-\bm F_{L-m}(\bm x,\bm w_{L-m})$ 
is in 
$L^2(\mathscr{R}(\bm X_{L-m+1})\times
\mathscr{R}(\bm X_{L-m}))$, then by using Parseval's 
identity we can write \eqref{re} as 
\begin{align}
&\int_{\mathscr{R}(\bm X_{L-m+1})}
\int_{\mathscr{R}(\bm X_{L-m})} 
\gamma_{L-m} (\bm y,\bm x)^2 d\bm y d\bm x = 
\nonumber\\
&\sum_{k=M+1}^{\infty} \int_{\mathscr{R}(\bm X_{L-m+1})}
\left(\int_{\mathscr{R}(\bm X_{L-m})}
\rho_{L-m} (\bm y-\bm F(\bm z,\bm w_{L-m}))
\eta^m_k(\bm z)d\bm z\right)^2d\bm y,
\end{align}
where $\{\eta_{M+1},\eta_{M+2}\ldots\}$ is an 
orthonormal basis 
 for the orthogonal complement 
(in $L^2(\mathscr{R}(\bm X_{L-m}))$) 
of the space spanned by the basis
$\{\lambda(\mathscr{R}(\bm X_{L-m}))^{-1/2},
\eta^m_1,\ldots,\eta^m_M\}$. This allows us 
to bound \eqref{r8} from below (with a nonzero bound). 
Such lower bound depends on the basis functions 
$\eta^m_k$, on the weights $\bm w_{L-m}$ 
as well as on the choice of the 
transfer function $\bm F_{L-m}$. This implies that 
the bound \eqref{nbound} can be improved, 
if we provide information on $\eta^m_k$ and the
activation function $\bm F$.
Note also that the bound \eqref{nbound} is 
formulated in terms of the Lebesgue 
measure of $\Omega_{L-m}$, i.e., $\lambda(\Omega_{L-m})$. 
The reason is that $\lambda(\Omega_{L-m})$ depends only on the 
range of the noise (see definition \eqref{OMEGAm1}), 
while $\lambda\left(\mathscr{R}(\bm X_{L-m})\right)$ 
depends on the range of the noise, on the weights of 
the layer $L-m$, and on the range of $\bm X_{L-m+1}$.

\begin{lemma}
\label{lemma:contraction}
Consider the projection kernel \eqref{Kfin} 
with $c_m=1/\lambda(\mathscr{R}(\bm X_{L-m}))$ 
and let $\kappa\geq 0$. If
\begin{equation}
\left\|\rho_{L-m}\right\|_{L^2(
\mathscr{R}(\bm \xi_{L-m}))}^2\leq \frac{\kappa}{\lambda(\Omega_{L-m})}+\frac{1}
{\lambda(\Omega_{L-m+1})},
\label{bound3}
\end{equation}
then 
\begin{equation}
\left\|\Q_{m}\G(m,m-1)\right\|^2\leq \kappa.
\label{c5}
\end{equation}
In particular, if $0\leq \kappa <1$ then
$\Q_{m}\G(m,m-1)$ is a contraction.

\end{lemma}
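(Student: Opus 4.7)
The plan is to read this lemma as an immediate corollary of the bound established in Theorem \ref{prop:contraction}. That theorem already gives
\[
\left\|\Q_{m}\G(m,m-1)\right\|^2 \leq \lambda(\Omega_{L-m})\left(\left\|\rho_{L-m}\right\|_{L^2(\mathscr{R}(\bm \xi_{L-m}))}^2 - \frac{1}{\lambda(\Omega_{L-m+1})}\right),
\]
under the standing hypothesis that the projection kernel has the form \eqref{Kfin} with $c_m = 1/\lambda(\mathscr{R}(\bm X_{L-m}))$, which is exactly the hypothesis imposed here. So there is no independent analytical work to do; the lemma is a purely algebraic consequence of Theorem \ref{prop:contraction}.

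The key step is then to substitute the assumed bound \eqref{bound3} on $\|\rho_{L-m}\|^2$ directly into the right-hand side above. The term $1/\lambda(\Omega_{L-m+1})$ cancels against the negative contribution, leaving the factor $\lambda(\Omega_{L-m})$ multiplying $\kappa/\lambda(\Omega_{L-m})$, which collapses to $\kappa$. This gives \eqref{c5}.

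For the final sentence, I would simply observe that if $0 \leq \kappa < 1$, then $\|\Q_{m}\G(m,m-1)\| \leq \sqrt{\kappa} < 1$, which by definition \eqref{contraction} makes $\Q_{m}\G(m,m-1)$ a strict contraction in $L^2$. There is really no main obstacle here — the only thing to verify is that the hypotheses of Theorem \ref{prop:contraction} (in particular the form of the projection kernel and the Hilbert--Schmidt framework) match the hypotheses of this lemma, which they do by construction.
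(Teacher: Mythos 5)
Your proposal is correct and coincides with the paper's own argument: the proof there simply states that the lemma follows from the bound \eqref{nbound} of Theorem \ref{prop:contraction}, and your substitution of \eqref{bound3} into that bound, with the cancellation of the $1/\lambda(\Omega_{L-m+1})$ terms leaving exactly $\kappa$, is the intended (and only) computation. The concluding remark that $0\leq\kappa<1$ gives a strict contraction in the sense of \eqref{contraction} is likewise exactly what the paper intends.
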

\begin{proof}
The proof follows immediately from equation \eqref{nbound}.

\end{proof}

\noindent
The upper bound in \eqref{bound3} is 
a slight improvement of the bound we obtained 
in \ref{sec:functionalsetting}, 
Lemma \ref{lemma:contractionsNM}.

\subsection{Contractions induced by uniform random noise}
\label{sec:uniformrandomnoise}
{\color{r}Consider the neural network model \eqref{discrete_sys_10}
and suppose that each $\bm \xi_n$ is a random vector with 
i.i.d. uniform components supported in $[-b_{n},b_{n}]$ ($b_n>0$).} 
In this assumption, the $L^2(\mathscr{R}(\bm \xi_{L-m}))$ 
norm of $\rho_{L-m}$ appearing in Theorem 
\ref{prop:contraction} and Lemma \ref{lemma:contraction} 
can be computed analytically as  
\begin{equation}
\left\|\rho_{L-m}\right\|_{L^2(
\mathscr{R}(\bm \xi_{L-m}))}^2 = 
\frac{1}{\lambda(\mathscr{R}(\bm \xi_{L-m}))}=
\frac{1}{(2b_{L-m})^N},
\label{lea}
\end{equation}
where $N$ is the number of neurons in each layer. 
{\color{r} For uniform random variables with independent components 
it straightforward to show that the Lebesgue measure 
of the set $\Omega_{L-m}$ defined in \eqref{OMEGAm1}
and appearing in Lemma \ref{lemma:contraction} is 
\begin{equation}
\lambda(\Omega_{L-m})=2^N(1+b_{L-m-1})^N,
\label{T4}
\end{equation}
i.e.,
\begin{equation}
\lambda(\Omega_{1})=2^N(1+b_{0})^N, \qquad 
\lambda(\Omega_{2})=2^N(1+b_{1})^N,\qquad \text{etc.} 
\end{equation}
A substitution of \eqref{lea} and \eqref{T4} into
the inequality \eqref{bound3} yields}
\begin{equation}
\left(\frac{1+b_{L-m-1}}{b_{L-m}}\right)^N\leq \kappa
+\left(\frac{1+b_{L-m-1}}
{1+b_{L-m}}\right)^N.
\end{equation}
{\color{r}Upon definition of $n=L-m$ this can be written as}
\begin{equation}
\frac{b_n(b_n+1)}{\left[(b_n+1)^N-
b_n^N\right]^{1/N}}\geq 
\frac{b_{n-1}+1}{\kappa^{1/N}}, 
\qquad n=1,\ldots, L-1.
\label{unifomCondition}
\end{equation}
A lower bound for the coefficient $b_0$ can be set 
using Proposition \ref{prop:impossibility_to_train} 
in \ref{sec:functionalsetting}, i.e., 
\begin{equation}
b_0\geq \frac{1}{2}\left(\frac{\lambda(\Omega_0)}{\kappa}\right)^{1/N}.
\end{equation}
With a lower bound for $b_0$ available, we can 
compute a lower bound for each 
$b_n$ ($n=1,2,\ldots$) by solving the recursion \eqref{unifomCondition} with an equality sign. 
\begin{figure}[t]
\centerline{\hspace{0.2cm}$\kappa=0.2$\hspace{6.2cm}$\kappa=0.2$}
\centerline{
\includegraphics[height=5.6cm]{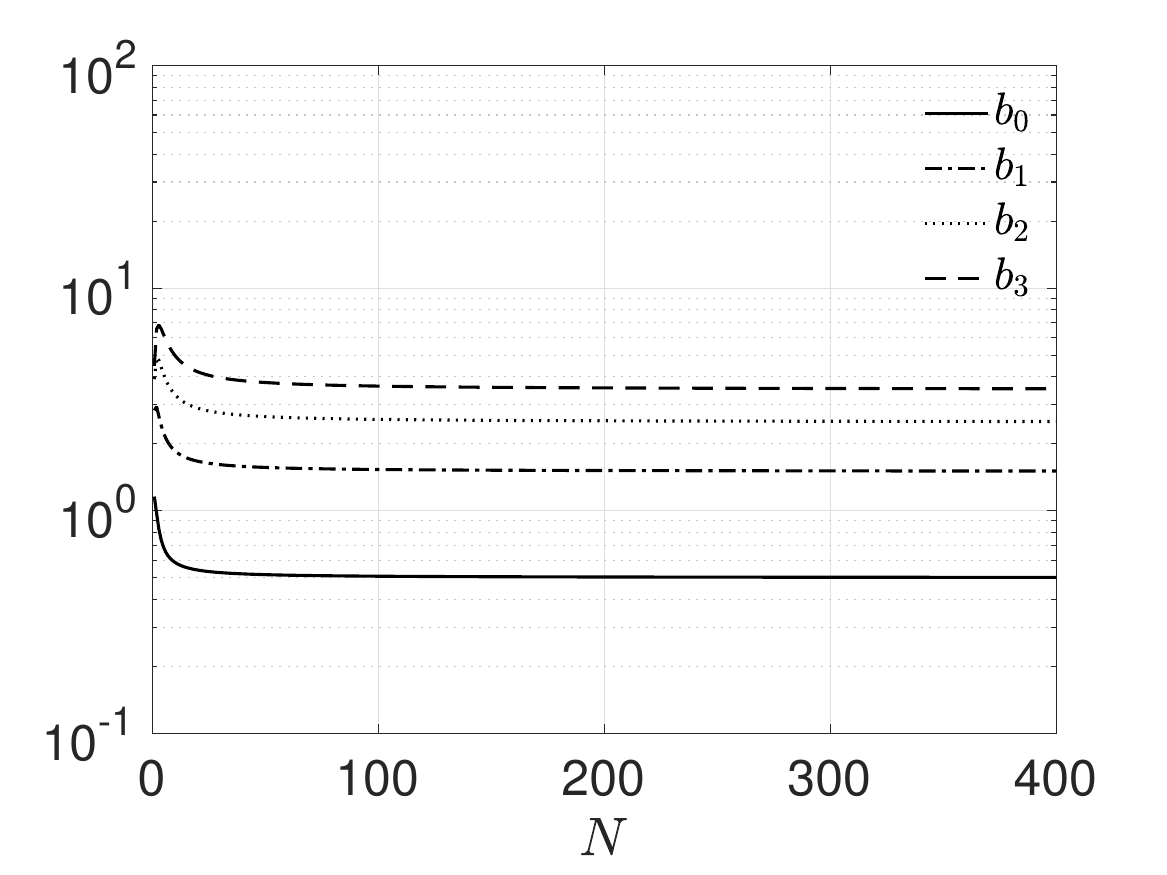}
\includegraphics[height=5.6cm]{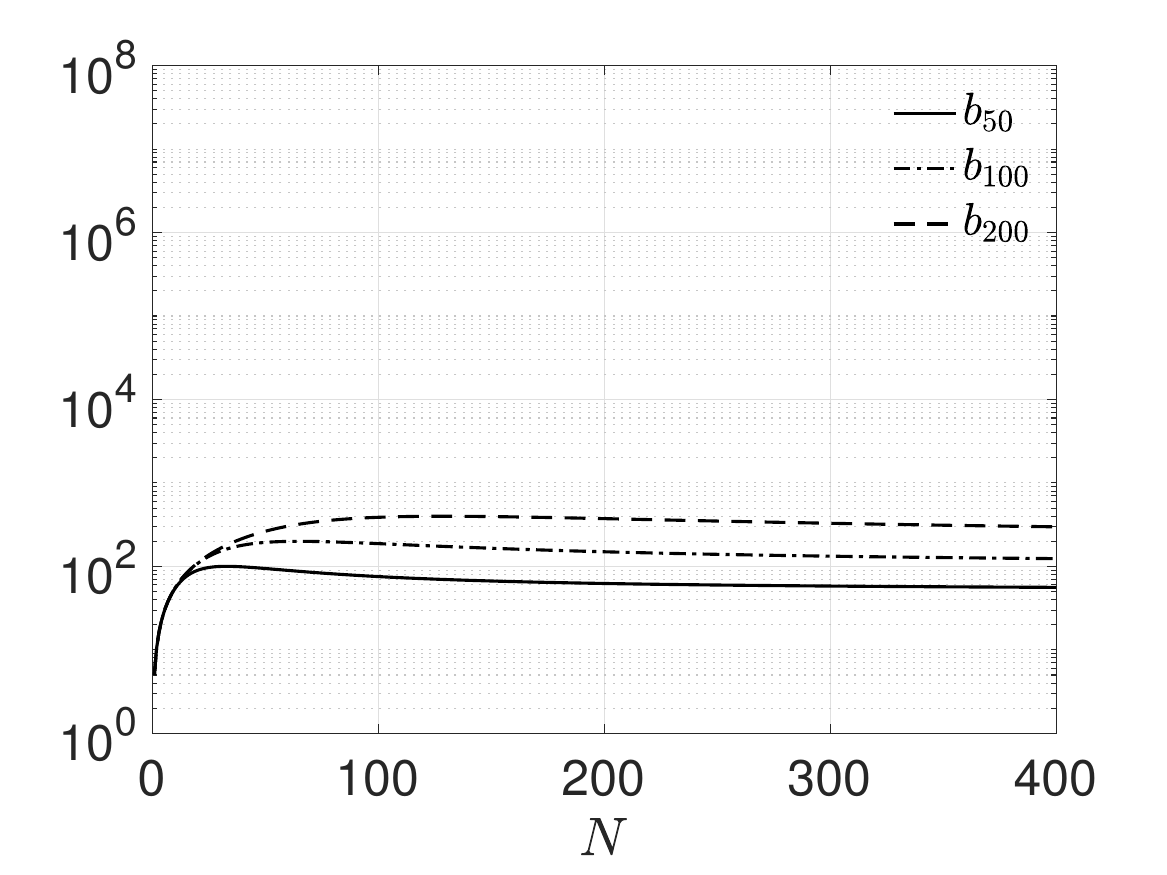}}
\centerline{\hspace{0.4cm}$\kappa=10^{-4}$\hspace{6.0cm}$\kappa=10^{-4}$}

\centerline{
\includegraphics[height=5.6cm]{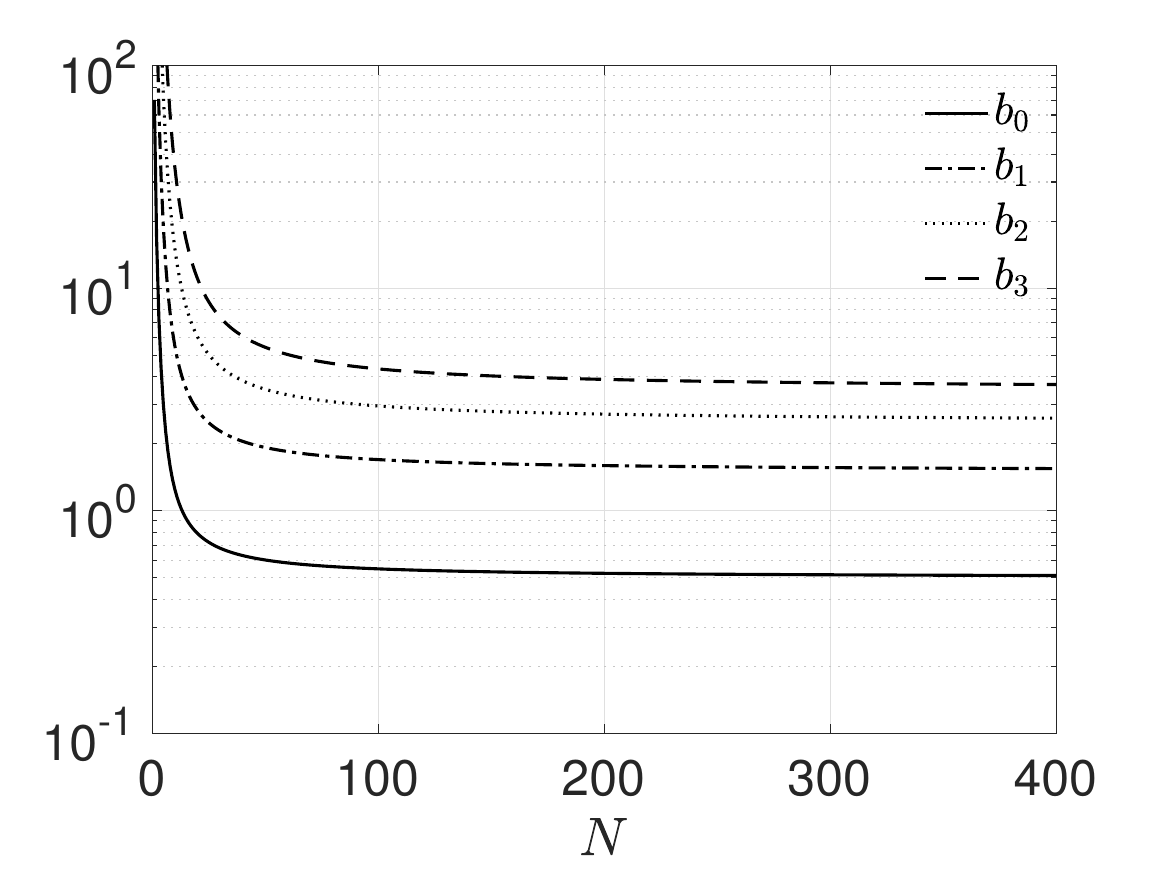}
\includegraphics[height=5.6cm]{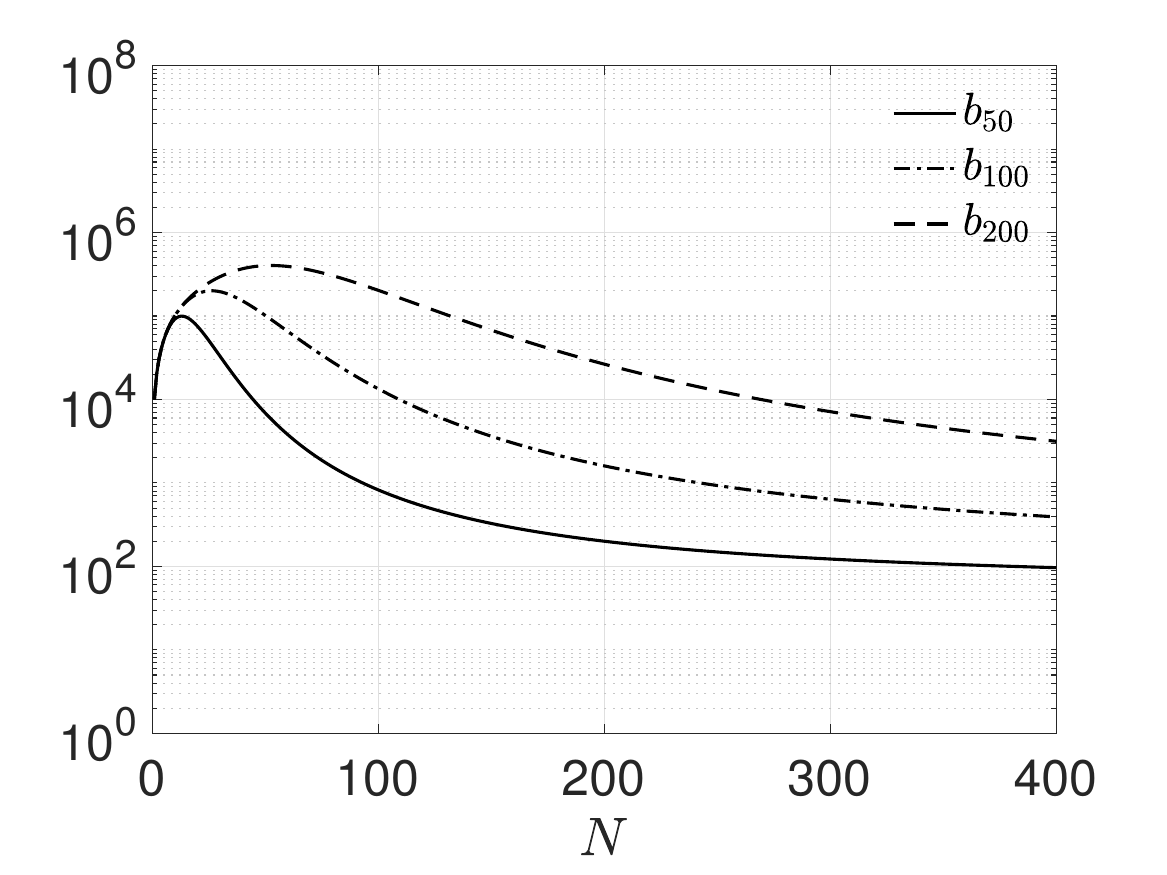}}

\caption{Lower bound on the noise amplitude \eqref{unifomCondition} 
versus the number of neurons ($N$) for $\lambda(\Omega_0)=1$ 
{\color{r}(Lebesgue measure of domain $\Omega_0$ defining the 
neural network input)},  
and different {\color{r} user-defined} contraction factors $\kappa$. 
With these values of $b_n$ the operator
$\Q_{L-n}\G(L-n,L-n-1)$ is a contraction satisfying 
$\left\|\Q_{L-n}\G(L-n,L-n-1)\right\|^2\leq \kappa$ regardless of the 
neural network weights and biases.}
\label{fig:uniformCondition}
\end{figure}
This is done in Figure \ref{fig:uniformCondition} for 
different {\color{r} user-defined} contraction factors 
$\kappa$\footnote{To compute the lower bounds 
of $b_n$ we solved the recursion 
\eqref{unifomCondition} numerically (with an equality sign) 
for $b_n$ using the Newton method. 
To improve numerical accuracy 
we wrote the left hand side 
of \eqref{unifomCondition} in the 
equivalent form 
\begin{equation}
\frac{b_n(b_n+1)}{\left[(b_n+1)^N-
b_n^N\right]^{1/N}}=
\displaystyle\frac{b_n}{\left[1-\left(\displaystyle\frac{1}{b_n}+1\right)^{-N}\right]^{1/N}}.
\label{unifomCondition1}
\end{equation}
}.
{\color{r}
It is seen that for a fixed number of neurons $N$, 
the noise level (i.e., a lower bound for $b_n$) that 
yields operator contractions in the sense of 
\begin{equation}
\left\|\Q_{L-n}\G(L-n,L-n-1)\right\|^2\leq \kappa, \qquad 
\kappa<1, \qquad n=1,2\ldots L-1,
\end{equation}
increases as we move from the input to the output, i.e., 
\begin{equation}
b_0<b_1<b_2<\cdots<b_L.
\end{equation} 
For instance, for a neural network with layers and 
$N=100$ neurons per layer the noise amplitude that induces 
a contraction factor $\kappa = 10^{-4}$ independently 
of the neural network weights is $b_0\simeq 0.55$.
This means that if each component of the random 
vector $\bm \xi_0$ is a uniform random variable with 
range $[-0.55,0.55]$ then the operator norm of 
$\Q_2\G(2,1)$ is bounded by $10^{-4}$. 
Moreover, we notice that as we increase the number of neurons $N$, the 
smallest noise amplitude that satisfies the operator contraction condition 
\begin{equation}
\left\|\Q_{L-n}\G(L-n,L-n-1)\right\|^2\leq \kappa\qquad  \kappa<1,
\label{contraction-final}
\end{equation}
converges to a constant value that depends 
on the layer $n$ but not on the contraction factor $\kappa$. Such asymptotic 
value can be computed analytically. }

\begin{lemma}
\label{lemma:asymptoticswithN_QG}
{\color{r} Consider the neural network model \eqref{discrete_sys_10} 
and suppose that each perturbation vector $\bm \xi_n$ 
has i.i.d. components distributed uniformly 
in $[-b_n,b_n]$. The smallest 
noise amplitude $b_n$ that satisfies the operator 
contraction condition \eqref{contraction-final} 
satisfies the asymptotic result
\begin{equation}
\lim_{N\rightarrow \infty} b_n = \frac{1}{2}+n\qquad n=0,\ldots, L-1
\label{bnQg}
\end{equation} 
independently of the contraction factor $\kappa$ and 
$\Omega_0$ (domain of the neural 
network input). }
\end{lemma}

\begin{proof}
The proof follows immediately by substituting the identity 
\begin{equation}
\lim_{N\rightarrow \infty} \left[(b_n+1)^N-
b_n^N\right]^{1/N} = b_n+1
\end{equation}
into \eqref{unifomCondition}.
\end{proof}

\subsection{Fading property of the neural network memory operator}

We now discuss the implications of the contraction 
property of $\Q_m\G(m,m-1)$ on the MZ equation. It is 
straightforward to show that if Proposition 
\ref{lemma:contraction1} or Lemma \ref{lemma:contraction}
holds true then the MZ memory and noise terms in \eqref{MZDISCRETEEXP} 
decay with the number of layers. This property is summarized 
in the following Theorem. 
\begin{theorem}
\label{lemma:memory}
If the conditions of Lemma \ref{lemma:contraction} are satisfied,  
then the MZ memory operator in Equation \eqref{PHIG} 
decays with the number of layers in the neural 
network, i.e., 
\begin{equation}
\left\|\Phi_\G(n,m+1)\right\|^2\geq \frac{1}{\kappa}\left\|\Phi_\G(n,m)\right\|^2\qquad \forall n\geq m+1, \qquad  0<\kappa<1.
\end{equation}
Moreover, 
\begin{equation}
\left\|\Phi_\G(n,0)\right\|^2\leq \kappa^n, 
\label{fading}
\end{equation}
i.e., the memory operator $\Phi_\G(n,0)$ decays 
exponentially fast with the number of layers.
\end{theorem}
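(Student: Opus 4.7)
The plan is to exploit the compositional structure of the memory operator together with submultiplicativity of the operator norm, using the single-step contraction estimate from Lemma~\ref{lemma:contraction} as the key input. Recall from \eqref{PHIG} that
\begin{equation*}
\Phi_\G(n,m) = \Q_n\G(n,n-1)\cdots \Q_{m+1}\G(m+1,m),
\end{equation*}
so that peeling off the rightmost factor gives the factorization $\Phi_\G(n,m) = \Phi_\G(n,m+1)\,[\Q_{m+1}\G(m+1,m)]$.

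First I would verify that all operator compositions involved are well defined as bounded maps between the appropriate $L^2$ spaces on the ranges $\mathscr{R}(\bm X_{L-k})$; this is immediate from the domain/codomain specifications of $\G(k,k-1)$ and $\Q_k$ given in Section~\ref{sec:Projections_1}. Then submultiplicativity of the operator norm combined with the hypothesis $\|\Q_{m+1}\G(m+1,m)\|^2 \leq \kappa$ from Lemma~\ref{lemma:contraction} yields
\begin{equation*}
\|\Phi_\G(n,m)\|^2 \leq \|\Phi_\G(n,m+1)\|^2 \,\|\Q_{m+1}\G(m+1,m)\|^2 \leq \kappa\,\|\Phi_\G(n,m+1)\|^2,
\end{equation*}
which immediately rearranges into the first claimed inequality $\|\Phi_\G(n,m+1)\|^2 \geq \kappa^{-1}\|\Phi_\G(n,m)\|^2$.

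For the exponential decay estimate \eqref{fading}, I would iterate submultiplicativity across all $n$ single-layer factors of $\Phi_\G(n,0) = [\Q_n\G(n,n-1)][\Q_{n-1}\G(n-1,n-2)]\cdots [\Q_1\G(1,0)]$, obtaining
\begin{equation*}
\|\Phi_\G(n,0)\|^2 \leq \prod_{k=1}^n \|\Q_k\G(k,k-1)\|^2 \leq \kappa^n.
\end{equation*}

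I expect no substantial obstacle, since the whole argument reduces to iterated application of one contraction inequality. The subtle point is that Lemma~\ref{lemma:contraction} in principle produces a layer-dependent bound through the Lebesgue measures $\lambda(\Omega_{L-m})$ and the noise densities $\rho_{L-m}$, so to extract a single $\kappa<1$ valid at every factor of the product one must verify that the recursion \eqref{unifomCondition} on the noise amplitudes $\{b_n\}$ is satisfied uniformly across the relevant layers. Once such a uniform $\kappa$ is in hand, the proof collapses to the two short displayed inequalities above.
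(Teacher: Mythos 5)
Your proposal is correct and follows essentially the same route as the paper: peel off the rightmost factor $\Q_{m+1}\G(m+1,m)$ from $\Phi_\G(n,m)$, apply submultiplicativity of the operator norm together with the bound $\left\|\Q_{m+1}\G(m+1,m)\right\|^2\leq \kappa$ from Lemma \ref{lemma:contraction}, and iterate to obtain \eqref{fading}. Your closing remark about needing the contraction bound to hold uniformly across layers (via the recursion on the noise amplitudes) is a sensible caveat that the paper leaves implicit, but it does not change the argument.
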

\begin{proof}
The proof follows  from  
$\left\|Q_{m+1}\G(m+1,m)\right\|^2\leq \kappa$ and equation \eqref{PHIG}. 
In fact, for all $n\geq m+1$
\begin{equation}
\left\|\Phi_\G(n,m)\right\|^2=
\left\|\Phi_\G(n,m+1)\Q_{m+1}\G(m+1,m)\right\|^2\leq \kappa 
\left\|\Phi_\G(n,m+1)\right\|^2.
\end{equation}

\end{proof}

\noindent
This result can be used to approximate the MZ 
equation of a neural network with a large number 
of layers to an equivalent one involving only 
a few layers. {\color{r} A simple numerical 
demonstration of the fading memory 
property \eqref{fading} is provided in 
Figure \ref{fig:MZplot} for a two-layer neural 
deterministic network. }

The fading memory property allows us to simplify terms in the MZ equation that are smaller than others. The most extreme case would be a memoryless neural network, i.e., a neural network in which the MZ memory term is zero. Such network is essentially equivalent to a one-layer network. To show this, consider the MZ equation \eqref{MZDISCRETEEXP_bis} in the case where the neural network is deterministic. Suppose that the $L^2$ projection 
operator $\mathcal{P}$ is the same for each layer and it satisfies 
$(\mathcal{I}-\mathcal{P})q_0=0$, i.e., $q_0$ is in the range of $\mathcal{P}$. Then the output of the memoryless network, with input $\bm x\in \Omega_0\subseteq \mathbb{R}^d$, $\tanh()$ activation function, $L$ layers, $N$ neurons 
per layer, can be written as  
\begin{align}
q_L(\bm x) \simeq &\mathcal{G}(L,L-1)\mathcal{P} q_{L-1} \nonumber\\
=&\bm \beta \cdot \bm \eta\Big(\tanh\left(\bm W_{0}\bm x+\bm b_0\right)\Big),\qquad \bm x\in \Omega_0.
\label{NNNN1}
\end{align}
where $\mathcal{P} q_{L-1}= \bm \beta \cdot \bm \eta(\bm x)$ and 
$\bm \eta=[\eta_0(\bm x),\ldots,\eta_M(\bm x)]^T$ is a vector of 
orthonormal basis functions on $[-1,1]^N$. Regarding what types of 
input-output maps can be represented by memoryless neural networks, 
the answer is provided by the universal approximation theorem 
for non-affine activation functions of the form \eqref{NNNN1}.
We emphasize that there is no information loss 
associated with the fading MZ memory property as 
the MZ equation is formally exact. However, if we 
approximate the MZ equation by neglecting small 
terms then we may lose some information.

\subsection{Reducing deep neural networks to shallow neural networks}

Consider the MZ equation \eqref{MZDISCRETE2EXP}, 
hereafter rewritten for convenience
\begin{equation}
\bm q_L=\Phi_\G(L,0)\bm q_0 +\sum_{m=0}^{L-1}
\G(L,m+1)\P_{m+1}\G(m+1,m)\Phi_\G(m,0)\bm q_0.
\label{MZ10}
\end{equation}
We have seen that the memory operator  $\Phi_\G(m,0)$ decays 
exponentially fast with the number of layers if the operator 
$\Q_m\G(m,m-1)$ is a contraction 
(see Lemma \ref{lemma:contraction}) . Specifically, 
we proved in Theorem \ref{lemma:memory} that 
\begin{equation}
\left\|\Phi_\G(m,0)\right\|^2\leq \kappa^m\qquad 0\leq\kappa<1,
\end{equation}
where $\kappa$ is a contraction factor our 
choice\footnote{\color{r}Recall for any choice of  
contraction factor $\kappa$ there always 
exists a sequence of uniformly distributed 
independent random vectors $\bm \xi_n$ with increasing 
amplitude such that $\left\|\Q_m\G(m+1,m)\right\|\leq \kappa$ 
for all $m$, 
independently of the neural network weights 
(see Lemma \ref{lemma:contraction} and the 
discussion in section \ref{sec:uniformrandomnoise}).}.
{\color{r}Hereafter we show that the magnitude 
of each term at the right hand side of
\eqref{MZ10} can be controlled by $\kappa$ 
independently of the neural network weights. In principle, 
this allows us to approximate a deep stochastic neural 
network using only a subset of terms 
in \eqref{MZ10}}.

\begin{proposition}
\label{prop:MZbound}
{\color{r} Consider the stochastic neural network 
model \eqref{discrete_sys_10} and assume that 
each random vector $\bm \xi_{m}$ has bounded range $\mathscr{R}(\bm \xi_{m})$ 
and PDF $\rho_{m}\in L^2(\mathscr{R}(\bm \xi_{m}))$.} Then 
\begin{equation}
\left\|\G(L,m+1)\P_{m+1}\G(m+1,m)\Phi_\G(m,0)\right\|^2\leq 
B^{L}\left(\frac{\kappa}{B}\right)^m,\qquad m=0,\ldots, L-1
\label{ub}
\end{equation}
where $\kappa$ is defined in Lemma \ref{prop:contraction} 
and  
\begin{equation}
\max_{m=0,\ldots,L-1}\left\|\G(m+1,m)\right\|^2\leq B, 
\qquad (B<\infty).
\label{B}
\end{equation}
{\color{r}The upper bound in \eqref{ub}} is independent 
of the neural network weights. 
\end{proposition}
\begin{proof}
{\color{r}We have shown in \ref{sec:functionalsetting} 
(Proposition \ref{lemma:operatorbounds}) that if 
 $\bm \xi_{m}$ has bounded range $\mathscr{R}(\bm \xi_{m})$ the 
 PDF $\rho_{m}\in L^2(\mathscr{R}(\bm \xi_{m}))$ then  
it is possible to to find an upper bound for $\G(m+1,m)$ 
that is independent of the neural network weights 
and $\rho_m$.} By using standard operator norm 
inequalities and recalling Theorem 
\ref{lemma:memory} we immediately obtain 
\begin{align}
\left\|\G(L,m+1)\P\G(m+1,m)\Phi_\G(m,0)\right\|^2\leq &
\left\|\G(L,L-1)\cdots \G(m+2,m+1)\right\|^2\left\|\G(m+1,m)
\right\|^2\kappa^m\nonumber\\
\leq &
\left(\max_{i=m,\ldots,L-1}\left\|\G(i+1,i)\right\|^2\right)^{L-m}
\kappa^m\nonumber\\
\leq & B^{L-m}\kappa^m,
\label{171}
\end{align}
where $B$ is defined in \eqref{B}. 

\end{proof}


\section{Summary}
\label{sec:summary}
We developed a new formulation of deep learning 
based on the Mori-Zwanzig (MZ) projection operator 
formalism of irreversible statistical mechanics.
The new formulation provides new insights 
on how information propagates through  
neural networks in terms of formally exact linear 
operator equations, and it introduces a new important 
concept, i.e., the {\em memory} of the neural network, 
which plays a fundamental role in low-dimensional modeling 
and parameterization of the network (see, e.g., \cite{lei2016data}). 
By using the theory of contraction mappings, we developed 
sufficient conditions for the memory of the neural network to 
decay with the number of layers. This allowed us to 
rigorously transform deep networks into shallow ones, 
e.g., by reducing the number of neurons per layer (using projections),
or  by reducing the total number of layers (using the decay property
of the memory operator).  
We developed most of the analysis for MZ equations involving conditional 
expectations, i.e., Eqs. \eqref{MZDISCRETEEXP}-\eqref{MZDISCRETE2EXP}. 
However, by using the well-known duality between PDF dynamics and conditional expectation dynamics \cite{dominy2017duality}, it is straightforward to 
derive similar analytic results for MZ equations involving PDFs, 
i.e., Eqs. \eqref{MZDISCRETEPDF}-\eqref{MZDISCRETE2PDF}.
Also, the mathematical techniques we developed in this paper can be 
generalized to other types of stochastic neural network models, e.g., 
neural networks with random weights and biases.

An important open question is the development of 
effective approximation methods for the MZ memory operator 
and the noise term. Such approximations can be built 
upon continuous-time approximation methods, 
e.g., based on functional analysis 
\cite{zhu2018estimation,zhu2018faber,Lin2020}, 
combinatorics \cite{zhu2019generalized}, data-driven methods 
\cite{Brennan2018,price2017renormalized,lu2016comparison,ma2019coarse}, 
Markovian embedding techniques 
\cite{1367-2630-15-12-125015,hijon2006markovian,ma2016derivation,lei2016data,chu2019mori}, 
or projections based on reproducing kernel Hilbert or Banach spaces \cite{Bartolucci2021,Parhi2021,Zhang2009}. 

\section*{Acknowledgements}
\noindent
Dr. Venturi was partially supported by the U.S. Air Force 
Office of Scientific Research grant FA9550-20-1-0174, and by the U.S. Army 
Research Office grant W911NF1810309. Dr. Li was supported by the 
NSF grant DMS-1953120.

\section*{Data availability statement}
\noindent
The data that support the findings of this study 
are available from the corresponding author upon 
request.

\section*{Conflict of interest statement} 
\noindent 
The authors declare that there is no conflict of interest.

\appendix

\section{Functional setting}
\label{sec:functionalsetting}
Let $(\mathcal{S},\mathcal{F},\mathscr{P})$ be a 
probability space. Consider the neural network 
model \eqref{discrete_sys_10}, hereafter rewritten 
for convenience as
\begin{equation}
\bm X_{n+1}=\bm F_n(\bm X_n,\bm w_n)+\bm \xi_n 
\qquad n=0,\ldots,L-1,
\label{NNfunctional}
\end{equation}
We assume that the following conditions are satisfied
\begin{enumerate}
\item $\bm X_0\in \Omega_0\subseteq \mathbb{R}^d$ ($\Omega_0$ compact), $\bm X_n\in\mathbb{R}^N$ for all $n=1,\ldots,L$;
\item The range of $\bm F_n$ ($n=0,\ldots,L-1$) is the hyper-cube $[-1,1]^N$.  
\end{enumerate}
We also assume that the random vectors $\{\bm \xi_0,\ldots,\bm \xi_{L-1}\}$ 
in \eqref{NNfunctional} are {\color{r}statistically independent, 
and that $\bm \xi_n$ is independent of past and current neural network 
states, i.e., $\{\bm X_0,\ldots,\bm X_{n}\}$. In these hypotheses, we 
have that $\{\bm X_0,\ldots, \bm X_L\}$ is a Markov process 
(see \ref{app:NN_characterization} for details).}
The range of $\bm X_{n+1}$ depends on the 
range of $\bm \xi_n$, as the image of each $\bm F_n$ is 
the hyper-cube $[-1,1]^N$ (condition 2. above).
Let us define\footnote{The notation $[-1,1]^N$ denotes 
a Cartesian product of $N$ one-dimensional 
domains $[-1,1]$, i.e., 
\begin{equation}
[-1,1]^N = \bigtimes_{k=1}^N [-1,1]= \underbrace{[-1,1]\times [-1,1]
\times \cdots\times [-1,1]}_{\text{$N$ times}}.
\end{equation}

}
\begin{align}
\Omega_{n+1} =& [-1,1]^N+\mathscr{R}(\bm \xi_n)\nonumber\\
= &\{\bm c\in \mathbb{R}^N\,:\, \bm c=\bm a+\bm b\quad
\bm a\in [-1,1]^N,\, \bm b\in \mathscr{R}(\bm \xi_n)\},
\label{Omega_n+1}
\end{align}
where
\begin{equation}
\mathscr{R}(\bm \xi_n)= \{\bm \xi_n(\omega)\in \mathbb{R}^N: 
\omega\in \mathcal{S}\}.
\label{Rn+1}
\end{equation}
is the range of the random vector $\bm \xi_n$. 
Clearly, the range of the random vector $\bm X_{n+1}$ is a 
subset\footnote{We emphasize that if we 
are given more information on the 
activation functions $\bm F_n$ together with 
suitable bounds on the neural network parameters 
$\bm w_n$ then we can identify a domain 
that is smaller than $\Omega_{n+1}$ which 
still contains $\mathscr{R}(\bm X_{n+1})$. 
This allows us to construct a tighter bound for 
$\lambda(\mathscr{R}(\bm X_{n+1})$ 
in Lemma \ref{lemma:Lebesgue_measure_X_Omega}, 
which depends on the activation function and on 
the parameters of the neural network.}
of $\Omega_{n+1}$, i.e., $\mathscr{R}(\bm X_{n+1})
\subseteq \Omega_{n+1}$. This implies the following 
lemma. 
\begin{lemma}
\label{lemma:Lebesgue_measure_X_Omega}
Let $\lambda(\Omega_{n+1})$ {\color{r} be} the Lebesgue measure 
of the set \eqref{Omega_n+1}. Then the Lebesgue measure 
of the range of $\bm X_{n+1}$ satisfies
\begin{equation}
\lambda(\mathscr{R}(\bm X_{n+1}))\leq \lambda(\Omega_{n+1}).
\end{equation}
\end{lemma}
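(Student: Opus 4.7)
The plan is to reduce the inequality to a simple set inclusion followed by monotonicity of the Lebesgue measure. First I would show that $\mathscr{R}(\bm X_{n+1})\subseteq \Omega_{n+1}$, using the defining recursion \eqref{NNfunctional}. Indeed, for any outcome $\omega\in\mathcal{S}$ we have
\begin{equation}
\bm X_{n+1}(\omega) = \bm F(\bm X_n(\omega),\bm w_n) + \bm \xi_n(\omega),
\end{equation}
and by assumption 2 (image of $\bm F$ equals the hyper-cube $[-1,1]^N$) together with the definition \eqref{Rn+1}, the first summand lies in $[-1,1]^N$ and the second in $\mathscr{R}(\bm \xi_n)$. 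Hence $\bm X_{n+1}(\omega)\in [-1,1]^N+\mathscr{R}(\bm \xi_n)=\Omega_{n+1}$, which is the desired inclusion.

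Once the inclusion is in hand, the inequality $\lambda(\mathscr{R}(\bm X_{n+1}))\leq \lambda(\Omega_{n+1})$ follows immediately from the monotonicity of the Lebesgue measure on (outer-)measurable sets. The only non-trivial issue is verifying that $\mathscr{R}(\bm X_{n+1})$ and $\Omega_{n+1}$ are Lebesgue measurable. For $\Omega_{n+1}$, note that $[-1,1]^N$ is compact and $\mathscr{R}(\bm \xi_n)$ is (by the standing hypothesis in the paper) a bounded Borel set, so the Minkowski sum $[-1,1]^N+\mathscr{R}(\bm \xi_n)$ is bounded and can be realized as the continuous image of a bounded Borel set under the addition map, yielding an analytic (hence universally measurable) subset of $\mathbb{R}^N$. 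For $\mathscr{R}(\bm X_{n+1})$ the same reasoning applies because $\bm X_{n+1}$ is a measurable function of $(\bm X_0,\bm \xi_0,\ldots,\bm \xi_n)$. If measurability of the range is a delicate point, one can bypass it by working with the Lebesgue outer measure $\lambda^*$, since $\lambda^*$ is monotone on all subsets of $\mathbb{R}^N$ and agrees with $\lambda$ on measurable sets; then $\lambda^*(\mathscr{R}(\bm X_{n+1}))\leq \lambda(\Omega_{n+1})$ suffices for every downstream bound in the paper.

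The main (and only) obstacle is the measurability caveat just discussed; the geometric/probabilistic content of the lemma is entirely contained in the footnote remark preceding it. In a formal write-up I would therefore state the two-line argument (inclusion plus monotonicity) and relegate the measurability justification to a short parenthetical, noting that one may use Lebesgue outer measure throughout if a fully rigorous treatment is desired.
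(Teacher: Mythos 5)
Your proof is correct and follows the same route as the paper's: the paper also deduces the inequality from the inclusion $\mathscr{R}(\bm X_{n+1})\subseteq \Omega_{n+1}$ (established in the text just before the lemma) together with monotonicity of the Lebesgue measure. Your additional remarks on measurability and on using outer measure are a reasonable refinement but do not change the argument.
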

\begin{proof}
The proof follows immediately from the inclusion 
$\mathscr{R}(\bm X_{n+1})
\subseteq \Omega_{n+1}$.
\end{proof}

\noindent
The $L^{\infty}$ norm of a random vector $\bm \xi$ 
is defined as the largest value of $r\geq 0$ that yields 
a nonzero probability on the event $\{\omega\in\mathcal{S}:\,
\left\|\bm \xi(\omega)\right\|_{\infty}>r \}\in\mathcal{F}$, 
i.e., 
\begin{equation}
\left\|\bm \xi \right\|_{\infty}=\sup_{r\in \mathbb{R}}\{\, 
\mathscr{P}(\{\omega\in\mathcal{S}:\,
\left\|\bm \xi(\omega)\right\|_{\infty}>r \})>0\}.
\end{equation}
This definition allows us to bound the 
Lebesgue measure of $\Omega_{n+1}$ as follows.
\begin{proposition}
\label{prop:Lebesgue_bound}
The Lebesgue measure of the set $\Omega_{n+1}$ 
defined in \eqref{Omega_n+1} can be bounded as 
\begin{equation}
\lambda(\Omega_{n+1}) \leq \left(\sqrt{N}+\left\|\bm \xi_{n} 
\right\|_{\infty}\right)^N\frac{\pi^{N/2}}{\Gamma(1+N/2)},
\label{bound1}
\end{equation}
where $N$ is the number of neurons and 
$\Gamma(\cdot)$ is the  Gamma function. 
\end{proposition}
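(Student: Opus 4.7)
The plan is to enclose $\Omega_{n+1}$ in a Euclidean ball centred at the origin and then apply the standard formula for the Lebesgue measure of a ball in $\mathbb{R}^N$, namely $\lambda(B(\bm 0,r)) = r^N\pi^{N/2}/\Gamma(1+N/2)$. Since the right-hand side of \eqref{bound1} is exactly the volume of such a ball with radius $\sqrt{N}+\|\bm\xi_n\|_\infty$, it suffices to show that
\[
\Omega_{n+1} \;\subseteq\; B\bigl(\bm 0,\,\sqrt{N}+\|\bm\xi_n\|_\infty\bigr).
\]

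To prove the inclusion, I would take an arbitrary $\bm c \in \Omega_{n+1}$ and use the definition \eqref{Omega_n+1} to write $\bm c = \bm a + \bm b$ with $\bm a \in [-1,1]^N$ and $\bm b \in \mathscr{R}(\bm\xi_n)$. The triangle inequality in the Euclidean norm gives $\|\bm c\|_2 \leq \|\bm a\|_2 + \|\bm b\|_2$. The first term is controlled by the elementary geometric fact that the hypercube $[-1,1]^N$ is inscribed in the Euclidean ball of radius $\sqrt{N}$: indeed $|a_i|\leq 1$ for each $i$ implies $\|\bm a\|_2 \leq \sqrt{N}$. The second term is controlled by the definition of $\|\bm\xi_n\|_\infty$ just introduced in the text: since $\bm b = \bm\xi_n(\omega)$ for some $\omega\in\mathcal{S}$ belonging to the essential support, $\|\bm b\|_2 \leq \|\bm\xi_n\|_\infty$ (interpreting the essential supremum relative to the norm appearing in the statement of the bound). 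Combining these two estimates gives $\|\bm c\|_2 \leq \sqrt{N}+\|\bm\xi_n\|_\infty$, which establishes the desired inclusion.

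Once the inclusion is in place, monotonicity of Lebesgue measure yields $\lambda(\Omega_{n+1}) \leq \lambda\bigl(B(\bm 0,\sqrt{N}+\|\bm\xi_n\|_\infty)\bigr)$, and substituting the closed-form expression for the volume of the Euclidean ball in $\mathbb{R}^N$ produces \eqref{bound1}.

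The proof is essentially a two-line geometric argument, so there is no serious obstacle. The one point that requires care is making sure the norm convention used for $\|\bm\xi_n\|_\infty$ is applied consistently, so that the estimate $\|\bm b\|_2 \leq \|\bm\xi_n\|_\infty$ is genuine (as opposed to inheriting an extra $\sqrt{N}$ factor from a bound on the max-norm of $\bm\xi_n$); once that convention is fixed, the remainder of the argument is the triangle inequality and the classical ball-volume formula.
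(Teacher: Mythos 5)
Your argument is the same as the paper's: enclose $\Omega_{n+1}$ in a Euclidean ball of radius $\sqrt{N}+\left\|\bm \xi_n\right\|_{\infty}$ (the paper phrases the triangle-inequality step as ``the half-diagonal of $[-1,1]^N$ is $\sqrt{N}$'') and then invoke monotonicity of Lebesgue measure together with the ball-volume formula. The caveat you flag at the end is the one substantive point, and it is not merely a matter of convention: the paper defines $\left\|\bm \xi_n\right\|_{\infty}$ as the essential supremum of the \emph{max-norm} of the realizations, under which one only gets $\left\|\bm b\right\|_2\leq \sqrt{N}\left\|\bm \xi_n\right\|_{\infty}$, so the enclosing radius should be $\sqrt{N}\bigl(1+\left\|\bm \xi_n\right\|_{\infty}\bigr)$; indeed, in the paper's own example ($[-1,1]^{10}+[-1,1]^{10}=[-2,2]^{10}$) the corner $(2,\ldots,2)$ lies outside the ball of radius $\sqrt{10}+1$. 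Your proof is therefore correct exactly under the reading you state (that $\left\|\bm \xi_n\right\|_{\infty}$ controls the Euclidean norm of the realizations), and you have correctly identified the only place where the published argument is fragile.
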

\begin{proof}
As is well known, the length of the diagonal of 
the hypercube $[-1,1]^N$ is $\sqrt{N}$. Hence, 
$\sqrt{N}+\left\|\bm \xi_{n}\right\|_{\infty}$ is 
the radius of a ball that encloses all elements of 
$\Omega_{n+1}$. The Lebesgue measure of such  a  ball is obtained by multiplying the Lebesgue measure 
of the unit ball in $\mathbb{R}^N$, i.e., 
$\pi^{N/2}/\Gamma(1+N/2)$ by the scaling 
factor $ \left(\sqrt{N}+\left\|\bm \xi_{n} 
\right\|_{\infty}\right)^N$.

\end{proof}

\begin{lemma}
If  $\mathscr{R}(\bm \xi_{n})$ is bounded 
then $\mathscr{R}(\bm X_{n+1})$ is bounded.
\label{lemma43}
\end{lemma}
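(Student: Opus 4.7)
The plan is to exploit the fact that, by construction, the activation map $\bm F$ has image contained in the bounded hypercube $[-1,1]^N$ (condition 2 stated just before the lemma), so the only unbounded contribution to $\bm X_{n+1}$ can come from the additive noise $\bm \xi_n$. Once $\mathscr{R}(\bm \xi_n)$ is bounded, the whole right-hand side of the recursion $\bm X_{n+1}=\bm F(\bm X_n,\bm w_n)+\bm \xi_n$ is a sum of two bounded contributions and hence bounded.

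More concretely, I would first recall the definition \eqref{Omega_n+1} of $\Omega_{n+1}=[-1,1]^N+\mathscr{R}(\bm \xi_n)$ and the inclusion $\mathscr{R}(\bm X_{n+1})\subseteq \Omega_{n+1}$ noted just before Lemma \ref{lemma:Lebesgue_measure_X_Omega}. Then I would argue that the Minkowski sum of two bounded subsets of $\mathbb{R}^N$ is bounded: if $\left\|\bm a\right\|_\infty\leq \sqrt{N}$ for every $\bm a\in[-1,1]^N$ and $\left\|\bm b\right\|_\infty\leq \left\|\bm \xi_n\right\|_\infty<\infty$ for every $\bm b\in\mathscr{R}(\bm \xi_n)$ (using the $L^\infty$ norm introduced in the paragraph preceding Proposition \ref{prop:Lebesgue_bound}), then every $\bm c=\bm a+\bm b\in\Omega_{n+1}$ satisfies $\left\|\bm c\right\|_\infty\leq \sqrt{N}+\left\|\bm \xi_n\right\|_\infty$.

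To finish, I would invoke Proposition \ref{prop:Lebesgue_bound}, which already provides the explicit estimate
\[
\lambda(\Omega_{n+1})\leq \bigl(\sqrt{N}+\left\|\bm \xi_n\right\|_\infty\bigr)^N \frac{\pi^{N/2}}{\Gamma(1+N/2)}<\infty,
\]
so that $\Omega_{n+1}$ is contained in a ball of finite radius and is therefore bounded. Combining this with the inclusion $\mathscr{R}(\bm X_{n+1})\subseteq \Omega_{n+1}$ from Lemma \ref{lemma:Lebesgue_measure_X_Omega} yields the boundedness of $\mathscr{R}(\bm X_{n+1})$.

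There is essentially no obstacle here; the lemma is a direct consequence of condition 2 (image of $\bm F$ in $[-1,1]^N$) together with the hypothesis that the noise has bounded range. The only minor point worth being careful about is making sure $\mathscr{R}(\bm X_n)$ itself is not assumed bounded as an input to the recursion: the argument does not require it, because the activation constrains $\bm F(\bm X_n,\bm w_n)$ to $[-1,1]^N$ regardless of the size of $\bm X_n$.
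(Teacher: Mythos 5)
Your proposal is correct and follows essentially the same route as the paper's own (very short) proof: the image of $\bm F$ is the bounded set $[-1,1]^N$, so boundedness of $\mathscr{R}(\bm \xi_n)$ makes $\Omega_{n+1}=[-1,1]^N+\mathscr{R}(\bm \xi_n)$ bounded, and the inclusion $\mathscr{R}(\bm X_{n+1})\subseteq \Omega_{n+1}$ finishes the argument. The detour through the Lebesgue-measure estimate of Proposition \ref{prop:Lebesgue_bound} is unnecessary (finite measure alone would not imply boundedness), but your direct $\left\|\bm c\right\|_\infty$ bound on elements of the Minkowski sum already does the job.
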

\begin{proof}
The image of the activation function $\bm F_n$ is 
a bounded set. If $\mathscr{R}(\bm \xi_{n})$ is 
bounded then $\Omega_{n+1}$  in \eqref{Omega_n+1}
is bounded. Since $\mathscr{R}(\bm X_{n+1})\subseteq \Omega_{n+1}$ 
we have that $\mathscr{R}(\bm X_{n+1})$ is bounded.

\end{proof}

\noindent
Clearly, if $\{\bm \xi_{0},\ldots, \bm \xi_{L-1}\}$ are 
i.i.d. random variables then  
there exists a domain $V=\Omega_1=\cdots=\Omega_{L}$ 
such that
\begin{equation}
\mathscr{R}(\bm \xi_{n})\subseteq \mathscr{R}(\bm X_{n+1})
\subseteq V\qquad \forall n=0,\ldots,L-1.
\end{equation}
In fact, if $\{\bm \xi_{0},\ldots, \bm \xi_{L-1}\}$ are 
i.i.d. random variables then we have 
\begin{equation}
\mathscr{R}(\bm \xi_0)=\mathscr{R}(\bm \xi_1)=\cdots=
\mathscr{R}(\bm \xi_{L-1}),
\end{equation}
which implies that all $\Omega_i$ defined in 
\eqref{Omega_n+1} are the same. If the range of each 
random vector $\bm \xi_n$ is a tensor product of one-dimensional 
domain, e.g., if the components of $\bm \xi_n$ are statistically 
independent, then $V=\Omega_1=\cdots=\Omega_{L}$
becomes particularly simple, i.e., a hypercube. 

\begin{lemma}
\label{lemma4.4}
Let $\{\bm \xi_0,\ldots,\bm \xi_{L-1}\}$ be i.i.d. random 
variables with bounded range and suppose that each 
$\bm \xi_k$ has statistically independent components with 
range $[a,b]$. Then all domains $\{\Omega_{1},\ldots,\Omega_L\}$ defined in equation \eqref{Omega_n+1} are the same, and they are 
equivalent to  
\begin{equation}
V= \bigtimes_{k=1}^N [-1+a,1+b].
\label{hypercube}
\end{equation}
$V$ includes the range of all random vectors $\bm X_{n}$ 
($n=1,\ldots,L$) and has Lebesgue measure 
\begin{equation}
\lambda(V) =  (2+b-a)^N.
\label{V2}
\end{equation}
\end{lemma}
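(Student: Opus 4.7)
The plan is to unpack the Minkowski-sum definition of $\Omega_{n+1}$ in \eqref{Omega_n+1} using the i.i.d.\ and independent-components hypotheses, verify that the resulting set is the hypercube $V$ independent of $n$, and then invoke Lemma \ref{lemma:Lebesgue_measure_X_Omega} for the range-inclusion statement.

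First, I would observe that since the random vectors $\bm \xi_k$ are i.i.d., their ranges coincide: $\mathscr{R}(\bm \xi_0)=\mathscr{R}(\bm \xi_1)=\cdots=\mathscr{R}(\bm \xi_{L-1})$. The further assumption that the components of each $\bm\xi_k$ are statistically independent and each supported in $[a,b]$ gives $\mathscr{R}(\bm \xi_k)=[a,b]^N=\bigtimes_{k=1}^{N}[a,b]$. Combining this with the Minkowski-sum description
\[
\Omega_{n+1}=[-1,1]^N+\mathscr{R}(\bm \xi_n),
\]
the sum decouples componentwise: a point of $\Omega_{n+1}$ is of the form $\bm a+\bm b$ with $a_i\in[-1,1]$ and $b_i\in[a,b]$, so $a_i+b_i$ ranges over $[-1+a,1+b]$. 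Thus $\Omega_{n+1}=\bigtimes_{k=1}^N[-1+a,1+b]=V$ for every $n=0,\ldots,L-1$, which proves the first two claims.

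The inclusion $\mathscr{R}(\bm X_n)\subseteq V$ for $n=1,\ldots,L$ is then immediate from Lemma \ref{lemma:Lebesgue_measure_X_Omega}, which gives $\mathscr{R}(\bm X_{n+1})\subseteq\Omega_{n+1}$, together with $\Omega_{n+1}=V$. Finally, the Lebesgue measure of $V$ is the product of the $N$ identical factor lengths $1+b-(-1+a)=2+b-a$, so $\lambda(V)=(2+b-a)^N$, which is \eqref{V2}. There is no significant obstacle here; the entire argument is essentially a bookkeeping exercise on the Minkowski sum of two hypercubes, with the only content being the observation that independence of the components of $\bm\xi_k$ is what makes $\mathscr{R}(\bm\xi_k)$ a tensor product of intervals rather than a more complicated subset of $\mathbb{R}^N$.
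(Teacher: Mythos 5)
Your proposal is correct and is exactly the routine argument the paper has in mind; the paper itself declares this proof ``trivial and therefore omitted,'' and your componentwise computation of the Minkowski sum $[-1,1]^N+[a,b]^N=[-1+a,1+b]^N$, together with the inclusion $\mathscr{R}(\bm X_{n+1})\subseteq\Omega_{n+1}$, supplies precisely the omitted details.
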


\begin{proof}
The proof is trivial and therefore omitted.

\end{proof}

\noindent
{\bf Remark:} It is worth noticing that if 
each $\bm \xi_k$ is a uniformly 
distributed random vector with statistically 
independent components in $[-1,1]$, then for $N=10$ neurons 
the upper bound in \eqref{bound1}
is $3.98\times 10^6$ while the exact result \eqref{V2} 
gives $1.05\times 10^6$. 
Hence the estimate \eqref{bound1} is sharp in the case of 
uniform random vectors.

\subsubsection*{Boundedness of composition and transfer operators}

Lemma \ref{lemma43} states that if we perturb 
the output of the $n$-th layer of a 
neural network by a random vector $\bm \xi_n$ with finite 
range then we obtain a random vector $\bm X_{n+1}$ with finite 
range. In this hypothesis, it is straightforward to show that 
that the composition and transfer operators 
defined in \eqref{M} and \eqref{N} are bounded. 
We have seen {\color{r} in section \ref{sec:compositiontransfer}}
that these operators can be written as 
\begin{equation}
\M(n,n+1) v= \int_{\mathscr{R}(\bm X_{n+1})} v(\bm y)
p_{n+1|n}(\bm y|\bm x)d\bm y,\qquad \N(n+1,n) v=\int_{\mathscr{R}(\bm X_{n})} p_{n+1|n}(\bm x|\bm y)v(\bm y)d\bm y,
\label{MN}
\end{equation} 
where $p_{n+1|n}(\bm y|\bm x)=\rho_n(\bm y-\bm F_n(\bm x,\bm w_n))$ is the conditional transition density of $\bm X_{n+1}$ given $\bm X_n$, and $\rho_n$ is the joint PDF of the random vector $\bm \xi_n$. 
The conditional transition density $p_{n+1|n}(\bm y|\bm x)$ 
is always non-negative, i.e., 
\begin{equation}
p_{n+1|n}(\bm y|\bm x)\geq 0 \qquad \forall 
\bm y\in \mathscr{R}(\bm X_{n+1}), \quad \forall 
\bm x\in \mathscr{R}(\bm 
X_{n}).
\end{equation}
Moreover, the conditional density $p_{n+1|n}$ is 
defined on the set
\begin{equation}
\mathscr{B}_{n}=\{(\bm x,\bm y)\in \mathscr{R}
(\bm X_{n})\times \mathscr{R}(\bm X_{n+1}): \, 
(\bm y-\bm F_n(\bm x,\bm w_{n}))\in 
\mathscr{R}(\bm \xi_{n})\}.
\label{setB}
\end{equation}
Both $\mathscr{R}(\bm X_{n+1})$ and 
$\mathscr{R}(\bm X_{n})$ depend 
on $\Omega_0$ (domain of the neural network input), the 
neural network weights, and the 
noise amplitude. Thanks to Lemma \ref{lemma:Lebesgue_measure_X_Omega}, 
we have that 
\begin{equation}
\mathscr{B}_{n}\subseteq \Omega_{n}\times \Omega_{n+1}.
\end{equation}
The Lebesgue measure  of $\mathscr{B}_{n}$ can be 
calculated as follows.

\begin{lemma}
The Lebesgue measure of the set $\mathscr{B}_{n}$ 
defined in \eqref{setB} is equal to the product 
of the measure of $\lambda(\mathscr{R}(\bm X_{n}))$ 
and the measure of 
$\mathscr{R}(\bm \xi_{n})$, i.e., 
\begin{equation}
\lambda(\mathscr{B}_{n})=\lambda(\mathscr{R}(\bm X_{n}))
\lambda(\mathscr{R}(\bm \xi_{n})).
\label{measurepreservation}
\end{equation}
Moreover, $\lambda(\mathscr{B}_{n})$ is bounded by 
$\lambda(\mathscr{R}(\Omega_{n}))
\lambda(\mathscr{R}(\bm \xi_{n}))$, 
which is independent of the neural network weights. 
\end{lemma}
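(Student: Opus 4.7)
My plan is to prove the identity by a Fubini/translation-invariance argument, then derive the bound from Lemma~\ref{lemma:Lebesgue_measure_X_Omega}.

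First I would argue that the second constraint in the definition of $\mathscr{B}_n$ is redundant. Since $\bm X_{n+1}=\bm F(\bm X_n,\bm w_n)+\bm \xi_n$ and $\bm X_n$ and $\bm \xi_n$ are independent, every point of the form $\bm F(\bm x,\bm w_n)+\bm \xi$ with $\bm x\in\mathscr{R}(\bm X_n)$ and $\bm \xi\in\mathscr{R}(\bm \xi_n)$ lies in $\mathscr{R}(\bm X_{n+1})$. Consequently
\begin{equation*}
\mathscr{B}_n=\{(\bm x,\bm y)\in\mathbb{R}^{d}\times\mathbb{R}^N : \bm x\in \mathscr{R}(\bm X_n),\ \bm y-\bm F(\bm x,\bm w_n)\in\mathscr{R}(\bm \xi_n)\}.
\end{equation*}

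Next I would compute $\lambda(\mathscr{B}_n)$ by Fubini's theorem, writing
\begin{equation*}
\lambda(\mathscr{B}_n)=\int_{\mathscr{R}(\bm X_n)}\left(\int_{\mathbb{R}^N}\mathbf{1}_{\mathscr{R}(\bm \xi_n)}\!\left(\bm y-\bm F(\bm x,\bm w_n)\right)d\bm y\right) d\bm x.
\end{equation*}
For each fixed $\bm x$, translation invariance of Lebesgue measure gives the inner integral the value $\lambda(\mathscr{R}(\bm \xi_n))$, independently of $\bm x$ and of the weights $\bm w_n$. Pulling this constant out of the outer integral yields $\lambda(\mathscr{B}_n)=\lambda(\mathscr{R}(\bm X_n))\,\lambda(\mathscr{R}(\bm \xi_n))$, which is exactly \eqref{measurepreservation}. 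Equivalently, one can view this as a change of variables $(\bm x,\bm y)\mapsto(\bm x,\bm y-\bm F(\bm x,\bm w_n))$ whose Jacobian is block-triangular with unit diagonal.

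For the stated bound, I would invoke Lemma~\ref{lemma:Lebesgue_measure_X_Omega}, which yields $\lambda(\mathscr{R}(\bm X_n))\le \lambda(\Omega_n)$, and observe that $\Omega_n$ (defined in \eqref{Omega_n+1}) depends only on the range of the noise $\bm \xi_{n-1}$ and not on any weights; likewise $\lambda(\mathscr{R}(\bm \xi_n))$ depends only on the noise distribution. Hence $\lambda(\mathscr{B}_n)\le \lambda(\Omega_n)\,\lambda(\mathscr{R}(\bm \xi_n))$, and the upper bound is indeed independent of the neural network weights.

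The only subtle point I foresee is the redundancy argument in the first step: one must be careful that the range $\mathscr{R}(\bm X_{n+1})$ really contains every sum $\bm F(\bm x,\bm w_n)+\bm \xi$ with $\bm x\in\mathscr{R}(\bm X_n)$ and $\bm \xi\in\mathscr{R}(\bm \xi_n)$. This is precisely where the independence of $\bm X_n$ and $\bm \xi_n$ (already assumed in the setup) is needed, so the argument is quick once this is recorded. Everything else is a routine application of Fubini and translation invariance.
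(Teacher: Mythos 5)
Your proof is correct and follows essentially the same route as the paper: write $\lambda(\mathscr{B}_n)$ as the integral of the shifted indicator of $\mathscr{R}(\bm \xi_n)$, use Fubini and translation invariance to evaluate the inner integral as $\lambda(\mathscr{R}(\bm \xi_n))$, and obtain the weight-independent bound from Lemma \ref{lemma:Lebesgue_measure_X_Omega}. The one place you go beyond the paper is in explicitly justifying that the constraint $\bm y\in\mathscr{R}(\bm X_{n+1})$ does not truncate the shifted copy of $\mathscr{R}(\bm \xi_n)$ — a step the paper's proof uses implicitly when it asserts the inner integral equals the full measure $\lambda(\mathscr{R}(\bm \xi_n))$ — and flagging this is a worthwhile clarification rather than a deviation.
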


\begin{proof}
Let $\chi_{n}$ be the indicator function of 
the set $\mathscr{R}(\bm\xi_{n})$, 
$\bm y\in \mathscr{R}(\bm X_{n+1})$ 
and $\bm x \in \mathscr{R}(\bm X_{n})$. 
Then 
\begin{align}
\lambda(\mathscr{B}_{n}) = &
\int_{\mathscr{R}(\bm X_{n+1})}
\int_{\mathscr{R}(\bm X_{n})}
\chi_{n}(\bm y-\bm F_n(\bm x,\bm w_{n}))
d\bm x d\bm y\nonumber\\
=&\lambda(\mathscr{R}(\bm\xi_{n}))
\int_{\mathscr{R}(\bm X_{n})} d\bm x\nonumber\\
=&\lambda(\mathscr{R}(\bm X_{n}))
\lambda(\mathscr{R}(\bm \xi_{n})).
\end{align}
By using Lemma \ref{lemma:Lebesgue_measure_X_Omega}
we conclude that $\lambda(\mathscr{B}_{n})$
is bounded from above by $\lambda(\mathscr{R}
(\Omega_{n}))
\lambda(\mathscr{R}(\bm \xi_{n}))$, which is 
independent of the neural network weights. 

\end{proof}

\vs
\noindent
{\bf Remark:} The equality \eqref{measurepreservation} 
has a nice geometrical interpretation in 
two dimensions. Consider a ruler of length 
$r=\lambda(\mathscr{R}(\xi_{n}))$ 
with endpoints that can leave markings if we slide 
the ruler on a rectangular table with side lengths 
$s_b=\lambda(\mathscr{R}( X_{n+1}))$ (horizontal sides)
$s_h=\lambda(\mathscr{R}(X_{n}))$ (vertical sides). 
If we slide the ruler from the top to the bottom of the 
table, while keeping it parallel 
to the horizontal side of the table 
(see Figure \ref{fig:condPDF}) then the area 
of the domain within the markings left by the 
endpoints of the ruler is always $r\times s_h$
independently of the way we slide the ruler -- provided 
the ruler never gets out of the table and never 
inverts its vertical motion.

\begin{lemma}
If the range of $\bm \xi_{n-1}$ is a bounded subset 
of $\mathbb{R}^N$ then the transition 
density $p_{n+1|n}(\bm y|\bm x)$ is an element of
$L^1\left(\mathscr{R}(\bm 
X_{n+1})\times\mathscr{R}(\bm 
X_{n})\right)$.
\end{lemma}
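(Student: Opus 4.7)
\medskip
\noindent
\textbf{Proof proposal.} The plan is to compute the $L^{1}$ norm of $p_{n+1|n}$ directly, using the explicit representation \eqref{TDadditive}, namely $p_{n+1|n}(\bm y|\bm x)=\rho_{n}(\bm y-\bm F(\bm x,\bm w_{n}))$. Since the integrand is non-negative, I may apply Tonelli's theorem freely and write
\begin{equation*}
\bigl\|p_{n+1|n}\bigr\|_{L^{1}(\mathscr{R}(\bm X_{n+1})\times \mathscr{R}(\bm X_{n}))}
=\int_{\mathscr{R}(\bm X_{n})}\!\!\left(\int_{\mathscr{R}(\bm X_{n+1})}\!\!\rho_{n}(\bm y-\bm F(\bm x,\bm w_{n}))\,d\bm y\right)d\bm x.
\end{equation*}

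The inner integral is handled by a translation in $\bm y$. For fixed $\bm x\in \mathscr{R}(\bm X_{n})$, set $\bm z = \bm y - \bm F(\bm x,\bm w_{n})$; because $\mathscr{R}(\bm X_{n+1})$ contains the conditional support $\bm F(\bm x,\bm w_{n})+\mathscr{R}(\bm \xi_{n})$ (this is how $\mathscr{R}(\bm X_{n+1})$ was constructed, cf.\ \eqref{Omega_n+1} and the inclusion $\mathscr{R}(\bm X_{n+1})\subseteq \Omega_{n+1}$), the translated integral extends over a set containing $\mathscr{R}(\bm \xi_{n})$, and $\rho_{n}$ vanishes outside $\mathscr{R}(\bm \xi_{n})$. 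Hence
\begin{equation*}
\int_{\mathscr{R}(\bm X_{n+1})}\!\!\rho_{n}(\bm y-\bm F(\bm x,\bm w_{n}))\,d\bm y
=\int_{\mathscr{R}(\bm \xi_{n})}\rho_{n}(\bm z)\,d\bm z = 1,
\end{equation*}
uniformly in $\bm x$, which is just the statement that $p_{n+1|n}(\cdot|\bm x)$ is a probability density.

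Substituting this back, the $L^{1}$ norm reduces to the Lebesgue measure of the range of $\bm X_{n}$:
\begin{equation*}
\bigl\|p_{n+1|n}\bigr\|_{L^{1}(\mathscr{R}(\bm X_{n+1})\times \mathscr{R}(\bm X_{n}))}=\lambda(\mathscr{R}(\bm X_{n})).
\end{equation*}
The final step invokes Lemma \ref{lemma43} with index shifted by one: since $\bm X_{n}=\bm F(\bm X_{n-1},\bm w_{n-1})+\bm \xi_{n-1}$, boundedness of $\mathscr{R}(\bm \xi_{n-1})$ together with the boundedness of the image of $\bm F$ implies $\mathscr{R}(\bm X_{n})\subseteq \Omega_{n}$ is bounded, and in particular $\lambda(\mathscr{R}(\bm X_{n}))<\infty$. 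This establishes $p_{n+1|n}\in L^{1}$.

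The only subtle point, and the one I would double-check carefully, is the justification of the inner equality to $1$: one must verify that for every $\bm x\in \mathscr{R}(\bm X_{n})$ the translated support $\bm F(\bm x,\bm w_{n})+\mathscr{R}(\bm \xi_{n})$ genuinely lies in the integration domain $\mathscr{R}(\bm X_{n+1})$. This follows because $\mathscr{R}(\bm X_{n+1})$ is, by definition, the range of $\bm F(\bm X_{n},\bm w_{n})+\bm \xi_{n}$ as $(\bm X_{n},\bm \xi_{n})$ ranges over $\mathscr{R}(\bm X_{n})\times \mathscr{R}(\bm \xi_{n})$, so the inclusion is automatic. Everything else is a routine application of Tonelli and translation invariance of Lebesgue measure.
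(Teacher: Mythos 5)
Your proposal is correct and follows essentially the same route as the paper: both reduce the double integral to $\lambda(\mathscr{R}(\bm X_{n}))$ (the inner integral over $\bm y$ being $1$ because $p_{n+1|n}(\cdot|\bm x)$ is a probability density) and then conclude finiteness from the boundedness of $\mathscr{R}(\bm \xi_{n-1})$ via the inclusion $\mathscr{R}(\bm X_{n})\subseteq \Omega_{n}$. You are in fact slightly more careful than the paper, which asserts the value of the integral without the translation argument and instead quotes the explicit ball bound on $\lambda(\Omega_{n})$; the substance is identical.
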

\begin{proof}
Note that 
\begin{equation}
\int_{\mathscr{R}(\bm 
X_{n+1})}\int_{\mathscr{R}(\bm 
X_{n})}p_{n+1|n}(\bm y|\bm x)d\bm y d\bm x = 
\lambda\left(\mathscr{R}(\bm 
X_{n})\right)\leq \lambda(\Omega_n).
\label{L1bound}
\end{equation}
The Lebesgue measure $\lambda(\Omega_n)$ 
can be bounded as (see Proposition \ref{prop:Lebesgue_bound})
\begin{equation}
\lambda(\Omega_{n}) \leq \left(\sqrt{N}+\left\|\bm \xi_{n-1} \right\|_{\infty}\right)^N\frac{\pi^{N/2}}{\Gamma(1+N/2)}.
\end{equation}
Since the range of $\bm \xi_{n-1}$ is bounded by hypothesis 
we have that there exists a finite real number $M>0$ such 
that $\left\|\bm \xi_{n-1} \right\|_{\infty}\leq M$. This 
implies that the integral in \eqref{L1bound} is finite, i.e., 
that the transition kernel $p_{n+1|n}(\bm y|\bm x)$ 
is in $L^1\left(\mathscr{R}(\bm 
X_{n+1})\times\mathscr{R}(\bm 
X_{n})\right)$.

\end{proof}

\begin{theorem}
\label{thm:boundedPDF}
Let $C_{\bm \xi_n}(\bm x)$ be the cumulative distribution 
function $\bm \xi_{n}$. If $C_{\bm \xi_n}(\bm x)$ 
is Lipschitz continuous on $\mathscr{R}(\bm \xi_n)$ and
the partial derivatives $\partial C_{\bm \xi_n}/\partial x_k$
($k=1,\ldots, N$) are Lipschitz continuous  in 
$x_1$, $x_2$, ..., $x_N$, respectively, then  
the joint probability density function of 
$\bm \xi_{n}$ is bounded on $\mathscr{R}(\bm \xi_n)$. 
\end{theorem}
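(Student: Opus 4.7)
The key observation is that the joint density $\rho_n(\bm x)$ coincides, wherever the mixed derivative exists, with the $N$-th order mixed partial of the cumulative distribution function:
\begin{equation*}
\rho_n(\bm x) \;=\; \frac{\partial^N C_{\bm \xi_n}(\bm x)}{\partial x_1\,\partial x_2 \cdots \partial x_N}.
\end{equation*}
The plan is to show that the two Lipschitz hypotheses together suffice to make this mixed derivative exist almost everywhere on $\mathscr{R}(\bm \xi_n)$ and to be essentially bounded there, with an explicit upper bound controlled by the Lipschitz constants appearing in the hypothesis.

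I would proceed by iterating Rademacher's theorem one variable at a time. The Lipschitz continuity of $C_{\bm \xi_n}$ gives, via Rademacher, the existence of the first-order partials $\partial C_{\bm \xi_n}/\partial x_k$ almost everywhere, together with the pointwise bound $\|\partial C_{\bm \xi_n}/\partial x_k\|_{L^\infty}\le \mathrm{Lip}(C_{\bm \xi_n})$ for each $k$. The hypothesis that each such first-order partial is Lipschitz in its respective variable permits a second application of Rademacher to produce bounded second-order partials, and iterating the same step $N$ times yields the full mixed partial as an essentially bounded measurable function. The monotonicity of the CDF guarantees that this mixed partial is nonnegative wherever it exists, so it must coincide with $\rho_n$; the final bound on $\rho_n$ is then obtained as a product (or iterated composition) of the successive Lipschitz constants.

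The main technical obstacle is the low-regularity treatment of mixed partials: in the Lipschitz (rather than $C^2$) setting, Clairaut's symmetry of mixed partials is not automatic, and one must either invoke an almost-everywhere Clairaut result for absolutely continuous functions on products of intervals, or read the hypothesis as requiring each intermediate partial to be Lipschitz in the next differentiation variable so that the iteration is well-defined regardless of the order chosen. A secondary technicality is that $\mathscr{R}(\bm \xi_n)$ need not be open; one therefore applies Rademacher's theorem on its interior and transfers the essential supremum to the closure using the uniformly continuous Lipschitz representatives. Once these technicalities are handled, the essential supremum of $\rho_n$ on $\mathscr{R}(\bm \xi_n)$ is finite, which is the conclusion of the theorem.
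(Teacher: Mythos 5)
Your proposal follows essentially the same route as the paper's proof: Rademacher's theorem gives the almost-everywhere existence and boundedness of the first-order partials of the Lipschitz CDF, and the Lipschitz hypothesis on those partials is then iterated to reach the bounded mixed partial $\partial^N C_{\bm \xi_n}/\partial x_1\cdots \partial x_N=\rho_n$. The only difference is that the paper delegates the mixed-partial/Clairaut step --- which you correctly single out as the main technical obstacle --- to a recursive application of a cited result (Theorem 9 of Minguzzi), whereas you sketch that step directly; the substance of the argument is the same.
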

\begin{proof}
By using  Rademacher's theorem we have that if 
$C_{\bm \xi_n}(\bm x)$ is Lipschitz  on 
$\mathscr{R}(\bm \xi_n)$ then it is differentiable almost 
everywhere on $\mathscr{R}(\bm \xi_n)$ (except on a set 
with zero Lebesgue measure). Therefore the partial derivatives 
$\partial C_{\bm \xi_n}/\partial x_k$ exist almost everywhere on 
$\mathscr{R}(\bm \xi_n)$. If, in 
addition, we assume that $\partial C_{\bm \xi_n}/\partial x_k$ are
 Lipschitz continuous with respect to $x_k$ (for all $k=1,\ldots,N$) 
then by applying  \cite[Theorem 9]{Minguzzi2014} recursively
we conclude that the joint probability density function 
of $\bm \xi_n$ is bounded.

\end{proof}

\begin{lemma}
\label{thrm:boundedCondPDF}
If $\rho_n$ is bounded on $\mathscr{R}(\bm \xi_n)$ then the 
conditional PDF $p_{n+1|n}(\bm y|\bm x)=\rho_n(\bm y-\bm F_n(\bm x, \bm w_n))$ 
is bounded on $\mathscr{R}(\bm 
X_{n+1})\times\mathscr{R}(\bm 
X_{n})$.
\end{lemma}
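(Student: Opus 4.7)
The plan is to reduce the statement to Theorem \ref{thm:boundedPDF} combined with the explicit form of the one-layer transition density derived in \eqref{TDadditive}. First, I would invoke Theorem \ref{thm:boundedPDF} to produce the finite constant
\begin{equation}
M_n \;:=\; \sup_{\bm z \in \mathscr{R}(\bm \xi_n)} \rho_n(\bm z) \;<\; \infty,
\end{equation}
since the Lipschitz assumptions on $C_{\bm \xi_n}$ and its partial derivatives guarantee the (essential) boundedness of the joint PDF $\rho_n$ on $\mathscr{R}(\bm \xi_n)$.

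Next, using the identity $p_{n+1|n}(\bm y|\bm x)=\rho_n(\bm y-\bm F(\bm x,\bm w_n))$ from \eqref{TDadditive}, I would split the domain $\mathscr{R}(\bm X_{n+1})\times\mathscr{R}(\bm X_n)$ into the set $\mathscr{B}_n$ of \eqref{setB}, where $\bm y-\bm F(\bm x,\bm w_n)\in\mathscr{R}(\bm \xi_n)$ and hence $\rho_n(\bm y-\bm F(\bm x,\bm w_n))\leq M_n$, and its complement, where $\bm y-\bm F(\bm x,\bm w_n)$ falls outside the support of $\rho_n$ so that $\rho_n(\bm y-\bm F(\bm x,\bm w_n))=0\leq M_n$. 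In either case we obtain $p_{n+1|n}(\bm y|\bm x)\leq M_n$ pointwise (almost everywhere), establishing the claim.

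The only subtlety worth flagging is that the bound $M_n$ depends only on the law of $\bm \xi_n$ and not on the activation $\bm F$ nor on the weights $\bm w_n$: the map $\bm x\mapsto \bm F(\bm x,\bm w_n)$ merely translates the argument of $\rho_n$, and translation invariance of the supremum makes the bound uniform over the parameters of the neural network. I do not anticipate any real obstacle beyond this book-keeping step, since the heavy lifting, i.e., promoting Lipschitz regularity of $C_{\bm \xi_n}$ to boundedness of $\rho_n$ via Rademacher's theorem and \cite[Theorem 9]{Minguzzi2014}, has already been carried out in Theorem \ref{thm:boundedPDF}.
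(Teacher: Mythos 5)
Your proposal is correct and follows essentially the same route as the paper: invoke Theorem \ref{thm:boundedPDF} to bound $\rho_n$, then observe that $p_{n+1|n}(\bm y|\bm x)=\rho_n(\bm y-\bm F(\bm x,\bm w_n))$ is just a (state-dependent) translation of the argument, which preserves the bound. The paper's proof is a two-line version of the same argument; your extra bookkeeping (splitting off $\mathscr{B}_n$ and noting the bound is weight-independent) is sound but not needed.
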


\begin{proof}
Theorem \ref{thm:boundedPDF} states that $\rho_n$ is 
a bounded function. This implies that the conditional 
density $p_{n+1|n}(\bm y|\bm x)=
\rho_n(\bm y-\bm F_n(\bm x, \bm w_n))$ 
is bounded on $\mathscr{R}(\bm 
X_{n+1})\times\mathscr{R}(\bm 
X_{n})$.

\end{proof}

\begin{proposition}
\label{prop:L2boundeness}
Let $\mathscr{R}(\bm \xi_n)$ and 
$\mathscr{R}(\bm \xi_{n-1})$ be bounded subsets 
of $\mathbb{R}^N$. {\color{r}If $\rho_n \in L^2(\mathscr{R}(\bm \xi_n))$}
then the composition and the transfer operators defined 
in \eqref{MN} are bounded in $L^2$.
\end{proposition}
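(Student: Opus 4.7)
The plan is to recognize both $\M(n,n+1)$ and $\N(n+1,n)$ as Hilbert--Schmidt integral operators with common kernel $p_{n+1|n}(\bm y|\bm x)=\rho_n(\bm y-\bm F(\bm x,\bm w_n))$, and then invoke the standard fact that any Hilbert--Schmidt operator on $L^2$ is bounded with operator norm dominated by the $L^2$ norm of its kernel.

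First I would collect the ingredients already available earlier in the appendix. By Lemma \ref{lemma43}, the boundedness of $\mathscr{R}(\bm \xi_{n-1})$ and $\mathscr{R}(\bm \xi_{n})$ implies that $\mathscr{R}(\bm X_n)$ and $\mathscr{R}(\bm X_{n+1})$ are bounded subsets of $\mathbb{R}^N$, so both have finite Lebesgue measure (bounded, in fact, by $\lambda(\Omega_n)$ and $\lambda(\Omega_{n+1})$ as in Lemma \ref{lemma:Lebesgue_measure_X_Omega}). Next, the assumptions of Theorem \ref{thm:boundedPDF} give that $\rho_n$ is bounded on $\mathscr{R}(\bm \xi_n)$, so by Lemma \ref{thrm:boundedCondPDF} the conditional transition density $p_{n+1|n}(\bm y|\bm x)$ is bounded on $\mathscr{R}(\bm X_{n+1})\times \mathscr{R}(\bm X_{n})$, say by a finite constant $C_n$.

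Having a bounded kernel on a product of sets of finite Lebesgue measure, the kernel lies in $L^2(\mathscr{R}(\bm X_{n+1})\times\mathscr{R}(\bm X_{n}))$. Then for any $v\in L^2(\mathscr{R}(\bm X_{n+1}))$, the Cauchy--Schwarz inequality yields the pointwise bound
\begin{equation}
|\M(n,n+1)v(\bm x)|^2 \leq \left(\int_{\mathscr{R}(\bm X_{n+1})} p_{n+1|n}(\bm y|\bm x)^2 d\bm y\right)\|v\|^2_{L^2(\mathscr{R}(\bm X_{n+1}))},
\end{equation}
and integrating in $\bm x$ gives
\begin{equation}
\|\M(n,n+1)\|^2 \leq \|p_{n+1|n}\|^2_{L^2(\mathscr{R}(\bm X_{n+1})\times\mathscr{R}(\bm X_n))} \leq C_n^2\,\lambda(\mathscr{B}_n),
\end{equation}
where I used \eqref{measurepreservation} to identify the measure of the support $\mathscr{B}_n$ of the kernel. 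The identical argument (exchanging the roles of the variables) bounds $\|\N(n+1,n)\|$ by the same quantity, since $\M$ and $\N$ share a kernel and are adjoint to one another by \eqref{adjointrelation}.

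The only nontrivial step is pinning down the pointwise bound on $\rho_n$, but this is exactly what Theorem \ref{thm:boundedPDF} supplies under the Lipschitz hypothesis on $C_{\bm \xi_n}$ and its partial derivatives. Everything else reduces to the elementary Hilbert--Schmidt bound together with the finite-measure conclusions already established for the ranges of $\bm X_n$ and $\bm X_{n+1}$.
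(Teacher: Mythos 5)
Your proposal is correct and follows essentially the same route as the paper: a Cauchy--Schwarz/Hilbert--Schmidt bound $\|\M(n,n+1)\|^2\leq \int\int p_{n+1|n}(\bm y|\bm x)^2\,d\bm y\,d\bm x$, with finiteness coming from the boundedness of the kernel (Lemma \ref{thrm:boundedCondPDF}) on the bounded ranges supplied by Lemma \ref{lemma43}, and the bound for $\N$ obtained either by symmetry or by adjointness. Your refinement of the constant to $C_n^2\,\lambda(\mathscr{B}_n)$ via \eqref{measurepreservation} is a harmless sharpening of the paper's $K_n$.
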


\begin{proof}
Let us first prove that $\M(n,n+1)$ is a 
bounded linear operator 
from $L^{2}(\mathscr{R}(\bm X_{n+1}))$ into 
$L^{2}(\mathscr{R}(\bm X_{n}))$. To this end, 
note that
\begin{align}
\left\|\M(n,n+1) v\right\|^2_{L^{2}(\mathscr{R}(\bm X_{n}))} 
= & \int_{\mathscr{R}(\bm X_{n})}
\left| \int_{\mathscr{R}(\bm X_{n+1})} v(\bm y)
p_{n+1|n}(\bm y|\bm x)d\bm y \right|^2d\bm x\nonumber\\
\leq & \left\|v\right\|^2_{L^{2}(\mathscr{R}(\bm X_{n+1}))} 
\underbrace{\int_{\mathscr{R}(\bm X_{n})}
\int_{\mathscr{R}(\bm X_{n+1})} 
p_{n+1|n}(\bm y|\bm x)^2 d\bm y d\bm x}_{K_n}\nonumber\\
=&K_n\left\|v\right\|^2_{L^{2}(\mathscr{R}(\bm X_{n+1}))}.
\label{s4}
\end{align}
Clearly, $K_n<\infty$ {\color{r} since 
$\rho_n\in L^2(\mathscr{R}(\bm \xi_n))$}. 
By following the same steps it 
is straightforward to show that the transfer operator 
$\N$ is a bounded linear operator, i.e.,  
\begin{equation}
\left\|\N(n+1,n)p\right\|^2_{L^{2}(\mathscr{R}(\bm X_{n+1}))}\leq K_n
\left\|p\right\|^2_{L^{2}(\mathscr{R}(\bm X_{n}))}.
\label{77}
\end{equation}
Alternatively, simply recall that $\N$ is the 
adjoint of $\M$ {\color{r} (see section \ref{sec:MandN})}, 
and the fact that the adjoint of a bounded linear operator is bounded. 

\end{proof}

\vspace{0.5cm}
\noindent
{\bf Remark:} The integrals
\begin{equation}
K_n=\int_{\mathscr{R}(\bm X_n)}
\int_{\mathscr{R}(\bm X_{n+1})}
p_{n+1|n}(\bm y|\bm x)^2d\bm yd\bm x
\label{integral}
\end{equation} 
can be computed by noting that 
\begin{equation}
p_{n+1|n}(\bm y|\bm x)=\rho_n(\bm y-\bm F_n(\bm x,\bm w_n))
\end{equation}
is essentially a {\em shift} of the PDF $\rho_n$ by a 
quantity $\bm F_n(\bm x,\bm w_n)$ that depends on 
$\bm x$ and $\bm w_n$ (see, e.g., Figure \ref{fig:condPDF}). 
Such a shift does not influence the integral 
with respect to $\bm y$, meaning that the 
integral of $p_{n+1|n}(\bm y|\bm x)$ or 
$p_{n+1|n}(\bm y|\bm x)^2$ with respect 
to $\bm y$ is the same for all $\bm x$. 
Hence, by changing variables we have that the 
integral \eqref{integral} is equivalent to 
\begin{equation}
K_n = \lambda(\mathscr{R}(\bm X_n)) 
\int_{\mathscr{R}(\bm \xi_n)}\rho_n(\bm x)^2d\bm x,
\label{Kn11}
\end{equation}
where $\lambda(\mathscr{R}(\bm X_n))$ is the Lebesgue 
measure of $\mathscr{R}(\bm X_n)$, and 
$\mathscr{R}(\bm \xi_n)$ is the range of $\bm \xi_n$. 
Note that $K_n$ depends on the neural net weights only 
through the Lebesgue measure of 
$\mathscr{R}(\bm X_n)$. Clearly, since the set 
$\Omega_{n}$ includes $\mathscr{R}(\bm X_n)$ we have by 
Lemma \ref{lemma:Lebesgue_measure_X_Omega} that 
$\lambda(\mathscr{R}(\bm X_n)) \leq \lambda(\Omega_n)$. 
This implies that
\begin{equation}
K_n\leq \lambda(\Omega_{n}) \int_{\mathscr{R}(\bm \xi_n)}
\rho_n(\bm x)^2d\bm x.
\end{equation} 
The upper bound here does not depend 
on the neural network weights. 
The following lemma summarizes all these remarks. 

\begin{proposition}
\label{lemma:operatorbounds}
{\color{r}Consider the neural network 
model \eqref{NNfunctional} and let $\mathscr{R}(\bm \xi_n)$ and 
$\mathscr{R}(\bm \xi_{n-1})$ be bounded subsets 
of $\mathbb{R}^N$. If $\rho_n \in L^2(\mathscr{R}(\bm \xi_n))$}
then the composition and the transfer operators 
defined in \eqref{MN} can be bounded as
\begin{equation}
\left\|\M(n,n+1)\right\|^2\leq K_n,\qquad 
\left\|\N(n+1,n)\right\|^2\leq K_n,
\end{equation}
where 
\begin{equation}
K_n=\lambda(\mathscr{R}(\bm X_n)) \int_{\mathscr{R}(\bm \xi_n)}\rho_n(\bm x)^2d\bm x.
\label{Kn0}
\end{equation}
Moreover, $K_n$ can be bounded as 
\begin{equation}
K_n\leq \lambda(\Omega_{n}) \int_{\mathscr{R}(\bm \xi_n)}
\rho_n(\bm x)^2d\bm x,
\label{Kn}
\end{equation}
where $\Omega_{n}$ is defined in \eqref{Omega_n+1} and 
$\rho_n$ is the PDF of $\bm \xi_n$. The upper bound 
in \eqref{Kn} does not depend on the neural network
weights and biases. 
\end{proposition}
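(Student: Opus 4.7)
The plan is to combine the operator-norm inequalities already derived inside Proposition \ref{prop:L2boundeness} (equations \eqref{s4} and \eqref{77}) with a change-of-variables computation on the kernel $p_{n+1|n}(\bm y|\bm x)=\rho_n(\bm y-\bm F(\bm x,\bm w_n))$. Since both $\M(n,n+1)$ and $\N(n+1,n)$ are Hilbert--Schmidt integral operators with the same kernel, their $L^2$ operator norms are bounded by the Hilbert--Schmidt norm of the kernel, which is exactly the constant denoted $K_n$ in the proof of Proposition \ref{prop:L2boundeness}. Hence the first claim of the proposition is essentially a re-reading of that proof.

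The main work is to re-express $K_n$ in the cleaner form \eqref{Kn0}. First, I would fix $\bm x\in \mathscr{R}(\bm X_n)$ and consider the inner integral
\begin{equation}
I(\bm x)=\int_{\mathscr{R}(\bm X_{n+1})}\rho_n(\bm y-\bm F(\bm x,\bm w_n))^2\,d\bm y.
\end{equation}
Under the substitution $\bm z=\bm y-\bm F(\bm x,\bm w_n)$, the domain of integration becomes $\mathscr{R}(\bm X_{n+1})-\bm F(\bm x,\bm w_n)$, which by the defining relation $\bm X_{n+1}=\bm F(\bm X_n,\bm w_n)+\bm \xi_n$ and the set definition \eqref{setB} contains (precisely covers, modulo a null set) the range $\mathscr{R}(\bm \xi_n)$ on which $\rho_n$ is supported. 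Thus $I(\bm x)=\int_{\mathscr{R}(\bm \xi_n)}\rho_n(\bm z)^2 d\bm z$, independent of $\bm x$. Integrating this constant over $\mathscr{R}(\bm X_n)$ yields \eqref{Kn0}.

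For the second bound, I would simply invoke Lemma \ref{lemma:Lebesgue_measure_X_Omega}, which gives $\lambda(\mathscr{R}(\bm X_n))\leq \lambda(\Omega_n)$, and note that $\Omega_n$ depends only on the range of the noise (see \eqref{Omega_n+1}) and not on the weights or biases. The integral $\int_{\mathscr{R}(\bm \xi_n)}\rho_n(\bm x)^2 d\bm x$ is finite under the hypotheses of Theorem \ref{thm:boundedPDF} (bounded $\rho_n$ over a bounded range), so the upper bound in \eqref{Kn} is finite and independent of $\{\bm w_n,\bm b_n\}$.

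The only delicate step is the change-of-variable argument for $I(\bm x)$: strictly speaking, the translated domain $\mathscr{R}(\bm X_{n+1})-\bm F(\bm x,\bm w_n)$ need not coincide exactly with $\mathscr{R}(\bm \xi_n)$, but the integrand vanishes outside $\mathscr{R}(\bm \xi_n)$, and any point $\bm z\in\mathscr{R}(\bm \xi_n)$ gives $\bm y=\bm F(\bm x,\bm w_n)+\bm z\in \mathscr{R}(\bm X_{n+1})$ by construction of the Markov recursion. Hence the two integrals agree and the identity for $K_n$ follows. This is the same measure-preservation phenomenon already observed in \eqref{measurepreservation}, which gives me confidence the step is rigorous.
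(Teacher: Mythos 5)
Your argument is correct and follows essentially the same route as the paper: the paper establishes the Hilbert--Schmidt bound in Proposition \ref{prop:L2boundeness}, then in the remark preceding the statement observes that $p_{n+1|n}(\bm y|\bm x)=\rho_n(\bm y-\bm F(\bm x,\bm w_n))$ is a mere shift of $\rho_n$, so the inner $\bm y$-integral is independent of $\bm x$ and equals $\int_{\mathscr{R}(\bm \xi_n)}\rho_n(\bm z)^2\,d\bm z$, after which Lemma \ref{lemma:Lebesgue_measure_X_Omega} yields the weight-independent bound. Your extra care in justifying the change of variables (the translated domain covering the support of $\rho_n$, as in \eqref{measurepreservation}) only tightens a step the paper states informally.
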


\noindent
Under additional assumptions on the PDF $\rho_n(\bm x)$ it 
is also possible to bound the integrals on the right hand side of 
\eqref{Kn0} and \eqref{Kn}. Specifically, we have the following 
sharp bound. 

\begin{lemma}
\label{lemma:boundonPDF}
{\color{r}Let $\mathscr{R}(\bm \xi_n)$ be a compact subset 
of $\mathbb{R}^N$, $\rho_n$ continuous on 
$\mathscr{R}(\bm \xi_n)$.} Denote by  
\begin{equation}
s_n =\inf_{\bm x\in \mathscr{R}(\bm \xi_n)}  {\rho_n}(\bm x),\qquad 
S_n =\sup_{\bm x\in \mathscr{R}(\bm \xi_n)}  {\rho_n}(\bm x).
\label{defMm}
\end{equation}
If $s_n>0$ then  
\begin{equation}
\left\|\rho_n \right\|^2_{L^2(\mathscr{R}(\bm \xi_n))}
\leq \frac{1}{\lambda(\mathscr{R}(\bm \xi_n))} 
\frac{(S_n+s_n)^2}{4S_ns_n}.
\label{boundonPDF}
\end{equation}
\end{lemma}

\begin{proof}
{\color{r}Let us first notice that if $\rho_n$ is continuous 
on the compact set $\mathscr{R}(\bm \xi_n)$ then it is 
necessarily bounded, i.e., $S_n$ is finite.}
By using the definition \eqref{defMm} we have 
\begin{equation}
(\rho_n(\bm x)-s_n)(S_n-\rho_n(\bm x))\geq 0 \quad 
\text{for all}\quad \bm x\in \mathscr{R}(\bm \xi_n).
\end{equation}
This implies 
\begin{equation}
\int_{\mathscr{R}(\bm \xi_n)}\rho_n(\bm x)^2d\bm x 
\leq (S_n+s_n)- S_ns_n\lambda(\mathscr{R}(\bm \xi_n)), 
\label{j10}
\end{equation}
where we used the fact that the PDF $\rho_n$ integrates 
to one over $\mathscr{R}(\bm \xi_n)$. Next, define
\begin{equation}
R_n = \frac{1}{\lambda(\mathscr{R}(\bm \xi_n))} 
\frac{(S_n+s_n)^2}{4S_ns_n}.
\end{equation}
Clearly,  
\begin{align}
R_n \left(1-\frac{2S_ns_n}{s_n+S_n}\lambda(\mathscr{R}(\bm \xi_n))\right)^2 = 
R_n - (S_n+s_n) + S_ns_n\lambda(\mathscr{R}(\bm \xi_n))\geq 0 
\end{align}
which implies that 
\begin{equation}
(S_n+s_n) - S_ns_n\lambda(\mathscr{R}(\bm \xi_n)) \leq R_n.
\label{j9}
\end{equation}
A substitution of  \eqref{j9} into \eqref{j10} yields \eqref{boundonPDF}.

\end{proof}

\vs
\noindent
{\em An example:} Let us demonstrate the definitions and theorems 
above with a simple example. To this end, let $X_0 \in \Omega_0=[-1,1]$ and consider
\begin{equation}
X_1=\tanh(X_0+3)+\xi_0, \qquad X_2=\tanh(2X_1-1)+\xi_1,
\end{equation}
where $\xi_0$ and $\xi_1$ are uniform random variables 
with range $\mathscr{R}(\xi_0)=\mathscr{R}(\xi_1)=[-2,2]$. 
In this setting, 
\begin{align}
\mathscr{R}(X_1) = &[\tanh(2)-2, \tanh(4)+2],\nonumber\\
\mathscr{R}(X_2) = &[\tanh(2\tanh(2)-5)-2, \tanh(2\tanh(4)+3)+2].
\nonumber
\end{align}
The conditional density of $X_1$ given $X_0$ is given by
\begin{equation}
p_{1|0}(x_1|x_0) = 
\begin{cases}
\displaystyle \frac{1}{4}, & \text{if $\left|x_1-\tanh(x_0+3)\right|\leq 2$},\\
0, & \text{otherwise}.
\end{cases}
\label{CPDF}
\end{equation}
This function is plotted in Figure \ref{fig:condPDF} together 
with the domain $\mathscr{R}(X_1)\times\mathscr{R}(X_0)$ 
(interior of the rectangle delimited by dashed red lines). 
\begin{figure}
\centerline{\includegraphics[height=6cm]{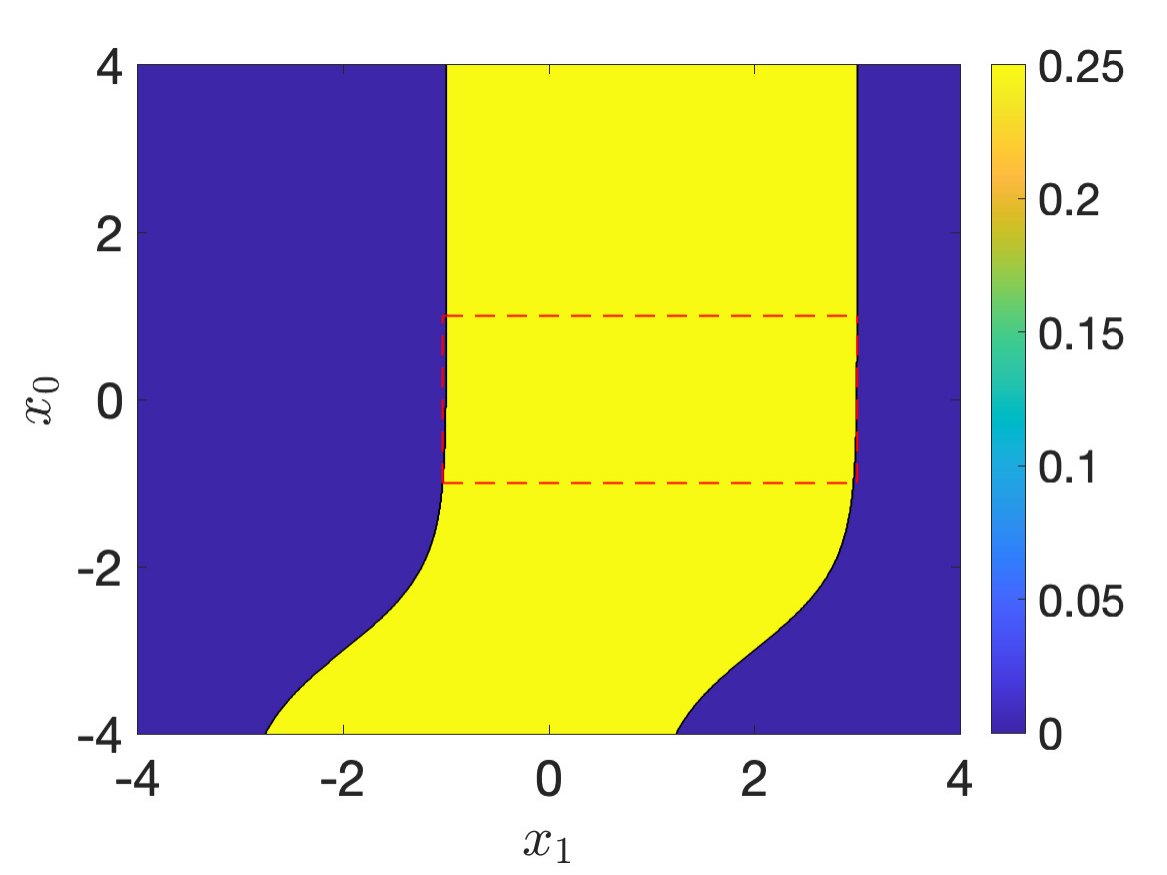}}
\caption{Conditional probability density function 
$p_{1|0}(x_1|x_0)$ defined in equation \eqref{CPDF}. 
The domain  $\mathscr{R}(X_1)\times\mathscr{R}(X_0)$ 
is the interior of the rectangle delimited by dashed 
red lines.}
\label{fig:condPDF}
\end{figure}
Clearly, the integral of the conditional PDF  \eqref{CPDF} is 
\begin{equation}
\int_{\mathscr{R}(X_0)}\int_{\mathscr{R}(X_1)}p_{1|0}(x_1|x_0)dx_1dx_0=\lambda(\mathscr{R}(X_0))=2,
\end{equation}
where $\lambda(\mathscr{R}(X_0))$ is the Lebesgue measure
of $\mathscr{R}(X_0)=[-1,1]$. The $L^2$ norm of the operators 
$\N$ and $\M$ is bounded by\footnote{For uniformly
distributed random variables we have that
\begin{equation}
\int_{\mathscr{R}(\bm \xi_n)} \rho_n(\bm x)^2 d \bm x = 
\frac{1}{\lambda(\mathscr{R}(\bm \xi_n))}. 
\end{equation}
Therefore equation \eqref{Kn0} yields 
\begin{equation}
K_n = \frac{\lambda\left(\mathscr{R}(\bm X_n)\right)}
{\lambda\left(\mathscr{R}(\bm \xi_n)\right)}\leq \frac{\lambda\left(\Omega_n\right)}
{\lambda\left(\mathscr{R}(\bm \xi_n)\right)}.
\label{Kn1}
\end{equation}
Depending on the ratio between the Lebesgue measure 
of $\mathscr{R}(\bm X_n)$ and $\mathscr{R}(\bm \xi_n)$ 
one can have $K_n$ smaller or larger than 1.
}

\begin{align}
K_0=&\int_{\mathscr{R}(X_0)}\int_{\mathscr{R}(X_1)}
p_{1|0}(x_1|x_0)^2dx_1dx_0= \frac{\lambda(\mathscr{R}(X_0))}
{\lambda(\mathscr{R}(\xi_0))}=\frac{1}{2}.
\end{align}
Hence, both operators $\N(1,0)$ and $\M(0,1)$ are 
contractions (Proposition \ref{lemma:operatorbounds}). 
On the other hand, 
\begin{align}
K_1=\frac{\lambda(\mathscr{R}(X_1))}
{\lambda(\mathscr{R}(\xi_1))} = 1+\frac{\tan(4)-\tan(2)}{4}>1.
\end{align}
Next, define $V$ as in Lemma \ref{lemma4.4}, 
i.e., $V=[-3,3]$. Clearly, both $\mathscr{R}(X_0)$ 
and $\mathscr{R}(X_1)$ are subsets of $V$. 
If we integrate the conditional PDF 
shown in Figure \ref{fig:condPDF} in $V\times V$ we obtain
\begin{equation}
\int_{V}\int_{V}
p_{1|0}(x_1|x_0)^2dx_1dx_0=
\frac{\lambda(V)}{\lambda(\mathscr{R}(\xi_0))}=
\frac{3}{2}.
\end{equation}

\subsection*{\color{r}Operator contractions induced by random noise}
In this section, we prove a result on neural 
networks models \eqref{NNfunctional} which states that it is possible to make both operators $\N$ and $\M$ in \eqref{MN}  
contractions\footnote{An linear operator is called a contraction if its operator norm is smaller than one.} if the noise is properly chosen. 
To this end, we begin with the following lemma. 
\begin{lemma}
\label{lemma:contractionsNM}
{\color{r}Let $\mathscr{R}(\bm \xi_n)$ and 
$\mathscr{R}(\bm \xi_{n-1})$ be bounded subsets 
of $\mathbb{R}^N$, $\rho_n \in L^2(\mathscr{R}(\bm \xi_n))$}.
If
\begin{equation}
\left\|\rho_n\right\|^2_{L^2(\mathscr{R}(\bm \xi_n))} 
\leq \frac{\kappa}{\lambda(\Omega_n)}\qquad 0\leq \kappa<1
\label{GNcont}
\end{equation}
then $\M(n,n+1)$ and $\N(n+1,n)$ are operator contractions. The upper bound in 
\eqref{GNcont} is independent of the neural network weights and biases.  
\end{lemma}

\begin{proof}
The proof follows immediately from equation \eqref{Kn}.

\end{proof}

\noindent
Hereafter, we specialize Lemma \ref{lemma:contractionsNM}
to neural network perturbed by uniformly distributed
random noise.

\begin{proposition}
\label{prop:impossibility_to_train}
Let $\{\bm \xi_0,\ldots,\bm \xi_{L-1}\}$ be independent 
random vectors. Suppose that the components of each 
$\bm \xi_n$ are zero-mean i.i.d. uniform random 
variables with range $[-b_n,b_n]$ ($b_n> 0$). 
If 
\begin{equation}
b_0 \geq\frac{1}{2}\left(\frac{\lambda(\Omega_0)}{\kappa}\right)^{1/N} 
\qquad \text{and}\qquad b_{n} \geq \frac{b_{n-1}+1}{\kappa^{1/N}}
\qquad n=1,\ldots, L-1, 
\label{noise_ltr}
\end{equation}
where $\Omega_0$ is the domain of the neural network input, 
$0\leq \kappa<1$, and $N$ is the number of neurons in 
each layer, then both operators $\M(n,n+1)$ and $\N(n+1,n)$ 
defined in \eqref{MN} are contractions for all $n=0,\ldots, L-1$, 
i.e., their norm can be bounded by a constant 
$K_n \leq \kappa$, independently of the weights and biases 
of the neural network. 
\end{proposition}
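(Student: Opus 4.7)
The plan is to reduce the proposition directly to Lemma \ref{lemma:contractionsNM}, which states that $\M(n,n+1)$ and $\N(n+1,n)$ are contractions with norm at most $\sqrt{\kappa}$ whenever
\begin{equation}
\left\|\rho_n\right\|^2_{L^2(\mathscr{R}(\bm \xi_n))} \leq \frac{\kappa}{\lambda(\Omega_n)}, \qquad 0\leq \kappa<1.
\label{target}
\end{equation}
So I would verify this inequality, for each $n=0,\ldots,L-1$, under the assumed lower bounds on $b_0,\ldots,b_{L-1}$.

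First, since the components of $\bm \xi_n$ are i.i.d.\ uniform on $[-b_n,b_n]$, the joint density is $\rho_n(\bm x) = (2b_n)^{-N}$ on $\mathscr{R}(\bm \xi_n)=[-b_n,b_n]^N$ and zero elsewhere, so a trivial computation gives
\begin{equation}
\left\|\rho_n\right\|^2_{L^2(\mathscr{R}(\bm \xi_n))} = \frac{1}{(2b_n)^N}.
\end{equation}
Next, I would compute $\lambda(\Omega_n)$. Recalling that $\Omega_{n+1}=[-1,1]^N+\mathscr{R}(\bm \xi_n)$ (definition \eqref{Omega_n+1}), Lemma \ref{lemma4.4} applied to $\bm \xi_{n-1}$ (components in $[-b_{n-1},b_{n-1}]$) yields, for $n\geq 1$,
\begin{equation}
\lambda(\Omega_n) = \left(2+2b_{n-1}\right)^N = 2^N(1+b_{n-1})^N,
\end{equation}
while for $n=0$ the quantity $\lambda(\Omega_0)$ is simply the prescribed Lebesgue measure of the input domain.

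Plugging these formulas into \eqref{target} reduces the task to two elementary algebraic inequalities. For $n=0$, the condition $(2b_0)^{-N}\leq \kappa/\lambda(\Omega_0)$ rearranges to $b_0\geq \tfrac{1}{2}(\lambda(\Omega_0)/\kappa)^{1/N}$, which is precisely the first hypothesis. For $n\geq 1$, the condition $(2b_n)^{-N} \leq \kappa/[2^N(1+b_{n-1})^N]$ rearranges to $b_n^N\geq (1+b_{n-1})^N/\kappa$, i.e.\ $b_n\geq (1+b_{n-1})/\kappa^{1/N}$, which matches the recursive hypothesis.

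There is no real obstacle here: the proof is essentially a direct substitution into Lemma \ref{lemma:contractionsNM} combined with the closed form of $\lambda(\Omega_n)$ from Lemma \ref{lemma4.4}. The only point requiring mild care is the index shift, namely that $\Omega_n$ depends on $\bm \xi_{n-1}$ while $\rho_n$ is the density of $\bm \xi_n$, so the recurrence \eqref{noise_ltr} must naturally couple consecutive noise amplitudes $b_{n-1}$ and $b_n$. Once that bookkeeping is done, the conclusion $\|\M(n,n+1)\|^2\leq \kappa$ and $\|\N(n+1,n)\|^2\leq \kappa$ follows immediately, and manifestly independently of the neural network weights (which enter neither $\|\rho_n\|^2$ nor the bound $\lambda(\Omega_n)$).
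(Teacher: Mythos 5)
Your proof is correct and follows essentially the same route as the paper: the paper substitutes the uniform density directly into the formula $K_n=\lambda(\mathscr{R}(\bm X_n))\int\rho_n^2$ from Proposition \ref{lemma:operatorbounds} and bounds $\lambda(\mathscr{R}(\bm X_n))\leq\lambda(\Omega_n)=2^N(1+b_{n-1})^N$, which is exactly the content of your reduction to Lemma \ref{lemma:contractionsNM}. The index bookkeeping ($\Omega_n$ depending on $b_{n-1}$, $\rho_n$ on $b_n$) and the separate treatment of $n=0$ match the paper's argument.
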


\begin{proof}
If $\bm \xi_n$ is uniformly distributed 
then from \eqref{Kn0} it follows  that 
\begin{equation}
K_n = \frac{\lambda\left(\mathscr{R}(\bm X_n)\right)}
{\lambda\left(\mathscr{R}(\bm \xi_n)\right)}. 
\label{Kn3}
\end{equation}
By using Lemma \ref{lemma4.4} we can bound $K_n$ as
\begin{equation}
K_n \leq \left(\frac{1+b_{n-1}}{b_n}\right)^N,
\end{equation}
where $N$ is the number of neurons in each layer of the 
neural network. Therefore, if $b_n \geq (b_{n-1}+1)/\kappa^{1/N}$ 
($n=1,\ldots,L-1$) 
we have that $K_n$ is bounded by a quantity $\kappa$ 
smaller than one. Regarding $b_0$, we notice that 
\begin{equation}
K_0= \frac{\lambda\left(\mathscr{R}(\bm X_0)\right)}
{\lambda\left(\mathscr{R}(\bm \xi_0)\right)} =
\frac{\lambda(\Omega_0)}{(2b_0)^N},
\end{equation} 
where $\Omega_0$ is the domain of the neural 
network input. Hence, if $b_0$ 
satisfies \eqref{noise_ltr} then $K_0\leq \kappa$.

\end{proof}

\noindent
One consequence of Proposition \ref{prop:impossibility_to_train} 
is that the $L^2$ norm of the neural network output decays 
with both the number of layers and the number of neurons 
if the noise amplitude from one layer to the next increases as in
\eqref{noise_ltr}. For example, if we represent the input-output 
map as a sequence of conditional expectations 
(see \eqref{composition1}), and set $u(\bm x)=\bm \alpha\cdot \bm x$ 
(linear output) then we have  
\begin{equation}
q_L(\bm x) = \M(0,1)\M(1,2)\cdots \M(L-1,L) (\bm \alpha\cdot\bm x).
\label{QL}
\end{equation}
By iterating the inequalities \eqref{noise_ltr} 
in Proposition \ref{prop:impossibility_to_train} we find that
\begin{equation}
b_{n}\geq\frac{1}{2\kappa^{n/N}}\left(\frac{\lambda(\Omega_0)}{\kappa}\right)^{1/N}+\sum_{j=1}^n\left(\frac{1}{\kappa}\right)^{j/N}
\qquad n=0,\ldots, L-1,  
\label{b0bn}
\end{equation}
In Figure \ref{fig:Gb} we plot the lower bound at the right 
hand side of \eqref{b0bn} for $\kappa=0.2$ 
and $\kappa=10^{-4}$ as a function of the number of 
neurons ($N$). 
\begin{figure}[t]
\centerline{\hspace{0.2cm}$\kappa=0.2$\hspace{6.2cm}$\kappa=0.2$}
\centerline{
\includegraphics[height=5.6cm]{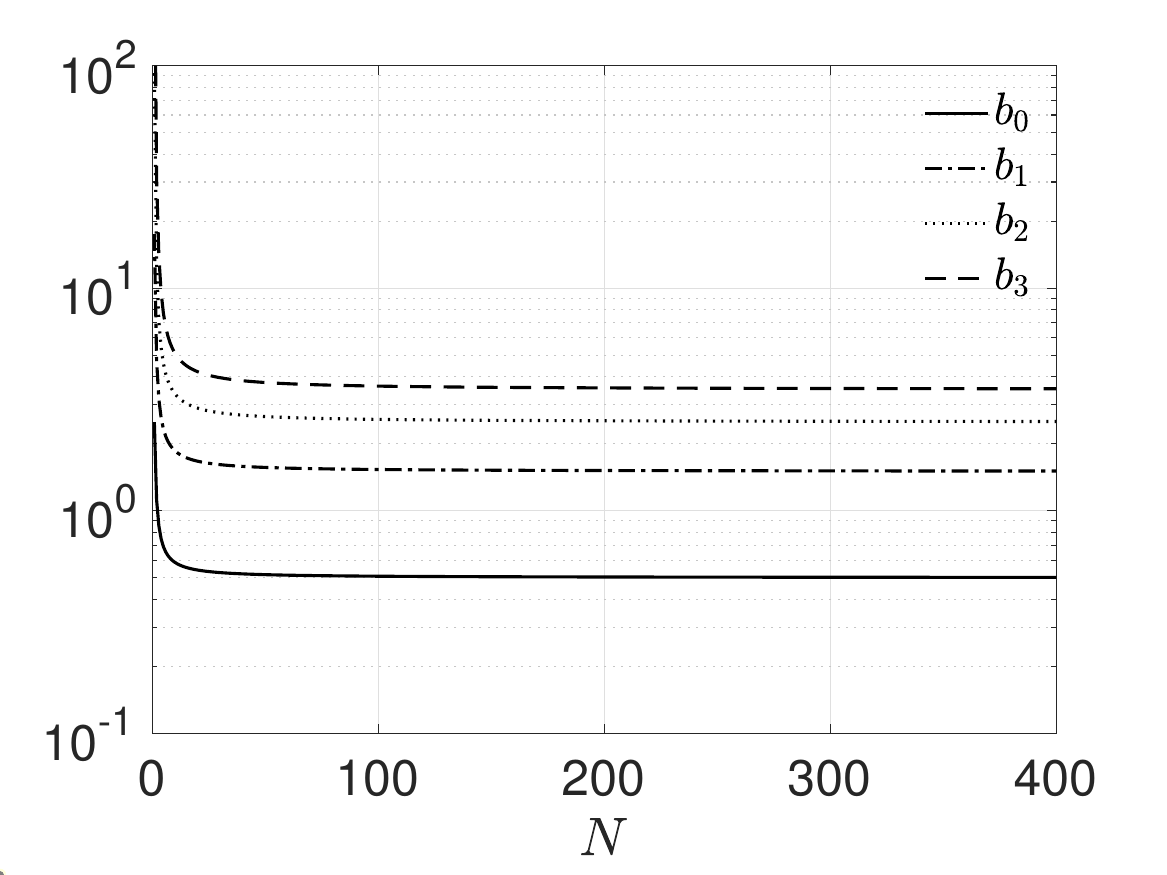}
\includegraphics[height=5.6cm]{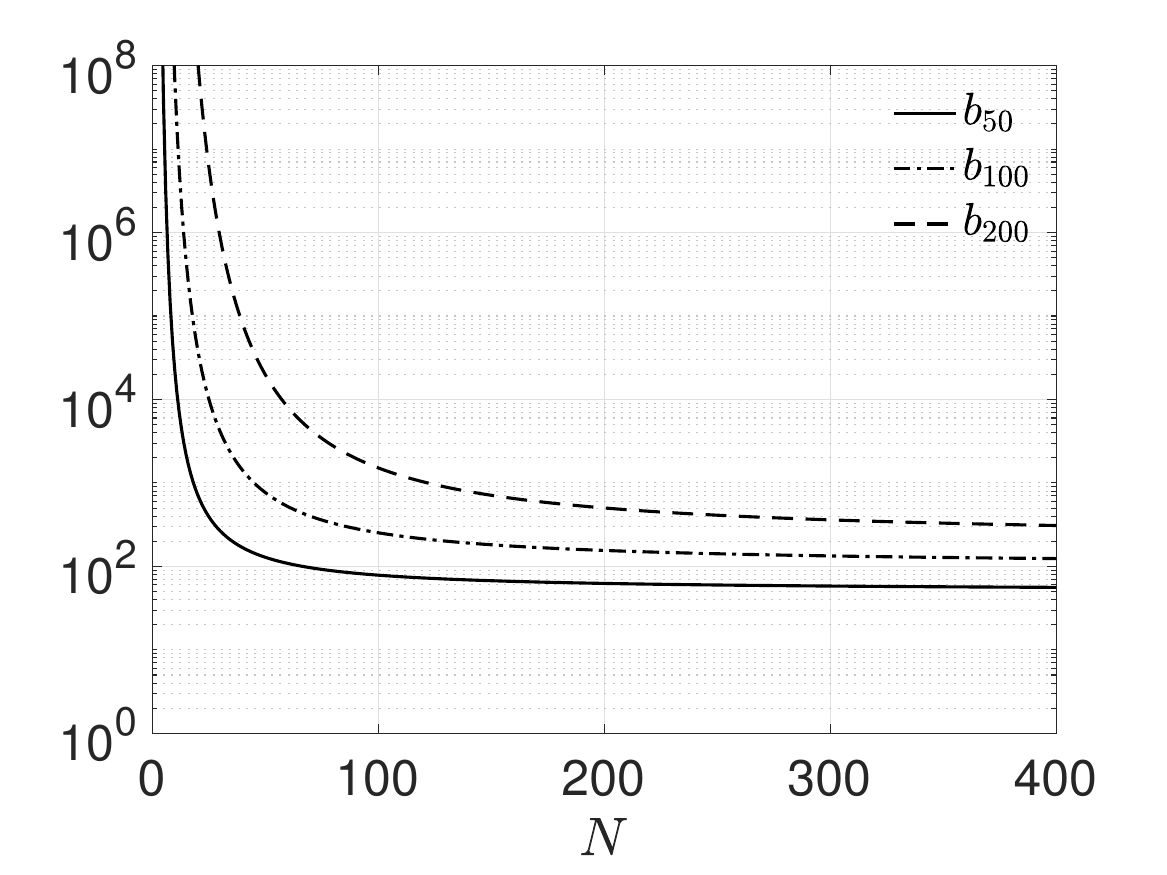}}
\centerline{\hspace{0.4cm}$\kappa=10^{-4}$\hspace{6.0cm}$
\kappa=10^{-4}$}
\centerline{
\includegraphics[height=5.6cm]{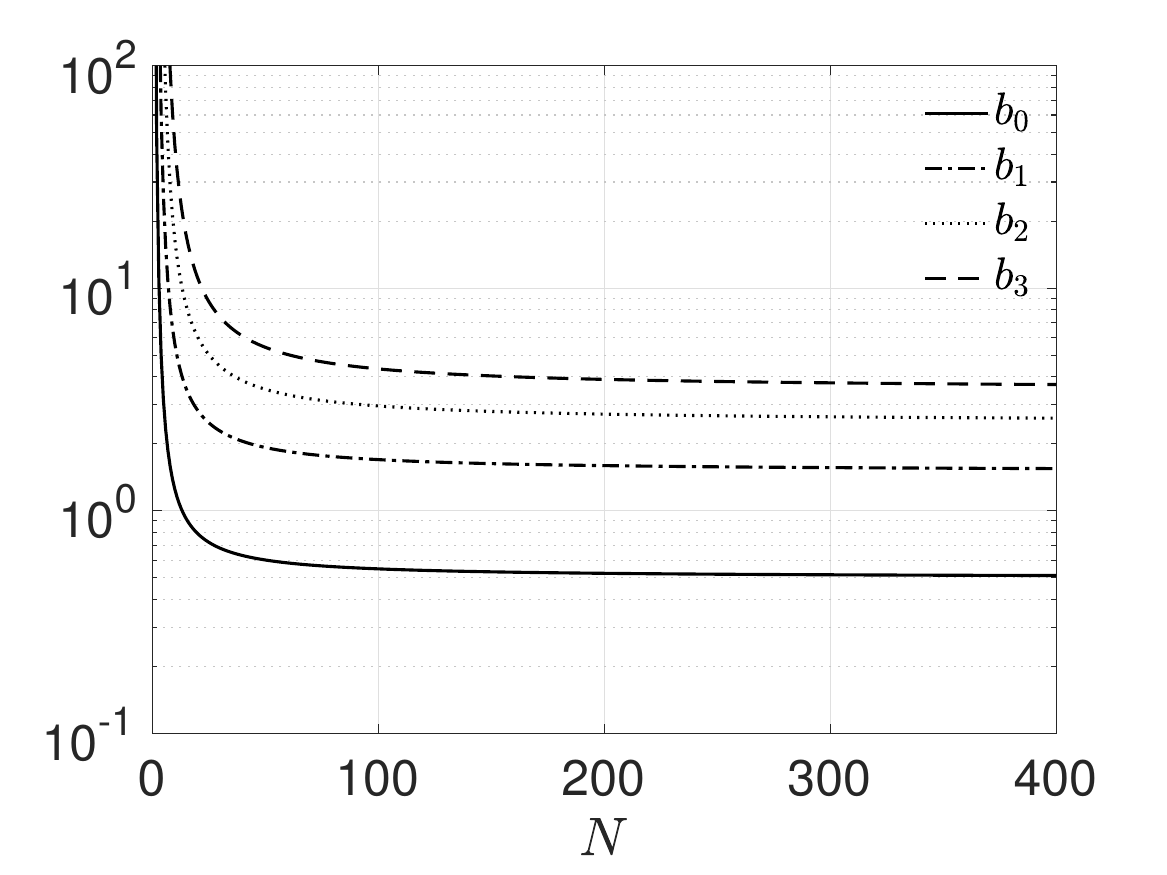}
\includegraphics[height=5.6cm]{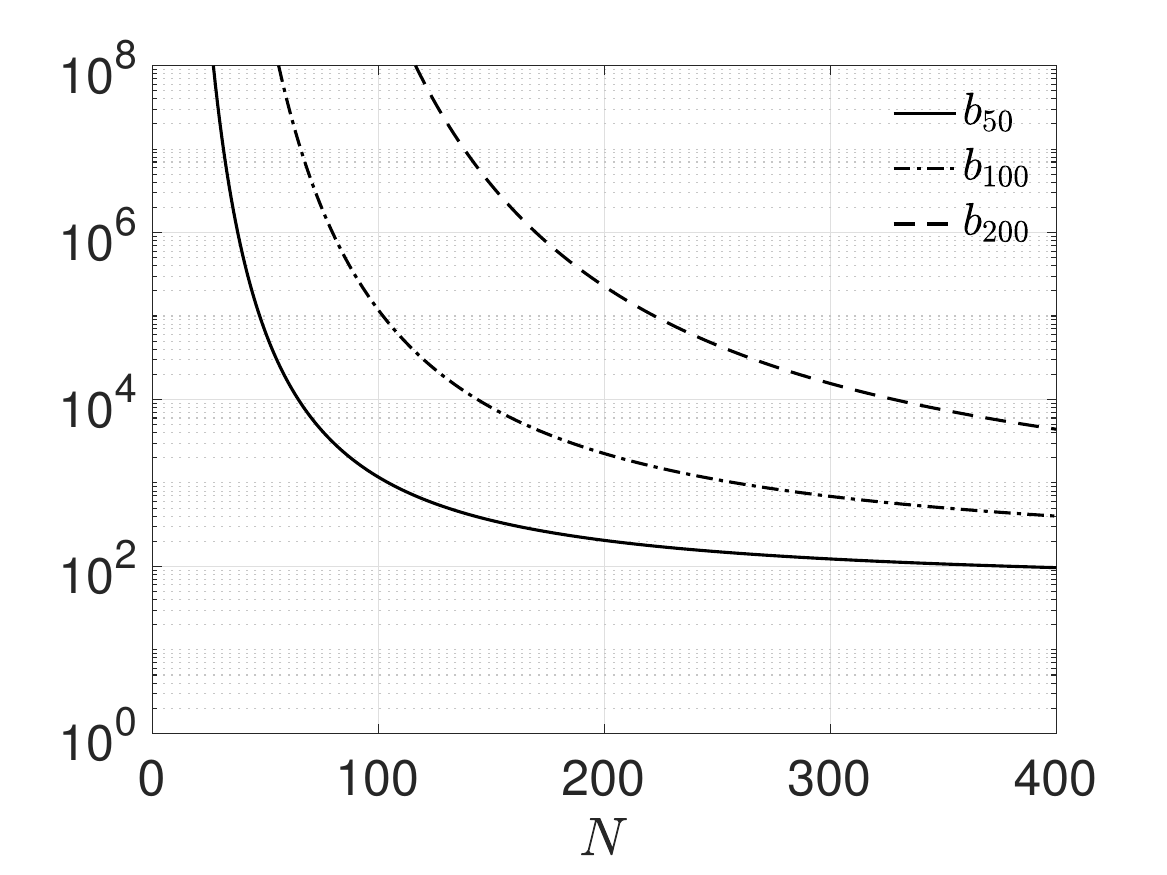}}
\caption{Lower bound on the coefficients $b_n$ 
defined in \eqref{b0bn} for 
$\lambda(\Omega_0)=1$ as a function of the number of 
neurons $N$ and the number of layers of the neural network. 
With such values of $b_n$ the operator
$\M(n,n+1)$ is a contraction satisfying 
$\left\|\M(n,n+1)\right\|^2\leq \kappa$.
Shown are results for $\kappa=0.2$ and $\kappa=10^{-4}$ 
(contraction index).}
\label{fig:Gb}
\end{figure}
With $b_n$ given in \eqref{b0bn} we have that 
the operator norms of $\M(n,n+1)$ and $\N(n+1,n)$ 
($n=0,\ldots,L-1$) are bounded exactly by 
$\kappa$ (see Lemma \ref{lemma:contractionsNM}). 
Hence, by taking the $L^2$ norm of \eqref{QL}, 
and recalling that  
$\left\|\M(n,n+1)\right\|^2\leq \kappa$ we obtain 
\begin{align}
\left\|q_L\right\|^2_{L^2(\Omega_0)}\leq& 
Z^2   \left\|\bm \alpha\right\|^2_2 \kappa^L,
\label{inequal}
\end{align}
where\footnote{In equation \eqref{inequal} we used 
the Cauchy-Schwarz inequality 
\begin{equation}
\left\| \bm \alpha\cdot\bm x\right\|^2_{L^2(\mathscr{R}(\bm X_L))}
\leq Z^2 \left\| \bm \alpha\right\|_2^2.
\end{equation}}
\begin{equation}
Z^2= \sum_{k=1}^N\int_{\mathscr{R}(\bm X_L)} x_k^2d\bm x
\qquad\text{and}
\qquad \left\|\bm \alpha\right\|^2_2=\sum_{k=1}^N \alpha_k^2.
\end{equation}
The inequality \eqref{inequal} shows that the 2-norm 
of the vector of weights $\bm \alpha$ must increase 
exponentially fast with the number of layers $L$ 
if we chose the noise amplitude as in \eqref{b0bn}. 
As shown in the following Lemma, the 
growth rate of $b_n$ that guarantees that 
both $\M$ and $\N$ are contractions is linear 
(asymptotically with the number of neurons).

\begin{lemma}
\label{lemma:asymptoticswithN}
Under the same assumptions in 
Proposition  \ref{prop:impossibility_to_train},
in the limit of an infinite number 
of neurons ($N\rightarrow\infty$), the noise 
amplitude \eqref{b0bn} satisfies 
\begin{equation}
\lim_{N\rightarrow \infty} b_n = \frac{1}{2}+n,
\end{equation}
independently of the contraction factor 
$\kappa$ and the domain $\Omega_0$. 
This means that for a finite number of neurons
the noise amplitude $b_n$ that 
guarantees that $\|\M(n,n+1)\|\leq\kappa$ 
is bounded from below ($\kappa<1$) 
or from above ($\kappa>1$) by a function 
that increases linearly with the number 
of layers.
\end{lemma}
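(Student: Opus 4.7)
The plan is to take the closed form \eqref{b0bn} and evaluate it term by term in the limit $N\to\infty$, using the elementary fact that $a^{1/N}\to 1$ as $N\to\infty$ for every fixed $a>0$.

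First I would briefly note that the closed form \eqref{b0bn} is indeed what one obtains by iterating the recursion \eqref{noise_ltr} with the sharp equality sign. Writing $r=\kappa^{-1/N}$ and $c=\tfrac{1}{2}(\lambda(\Omega_0)/\kappa)^{1/N}$, the recursion $b_0=c$, $b_n = r\,(b_{n-1}+1)$ unfolds telescopically to
\begin{equation}
b_n \;=\; c\,r^n + \sum_{k=1}^n r^k \;=\; \frac{1}{2\kappa^{n/N}}\left(\frac{\lambda(\Omega_0)}{\kappa}\right)^{1/N} + \sum_{k=1}^n \frac{1}{\kappa^{k/N}},
\end{equation}
which is exactly \eqref{b0bn}. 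This confirms the starting point; no further algebra is needed here.

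Next I would pass to the limit $N\to\infty$ term by term. For any fixed positive real $a$ one has $a^{1/N}=\exp(\ln(a)/N)\to 1$. Applying this with $a=\kappa$ gives $\kappa^{-k/N}\to 1$ for each fixed $k\in\{1,\ldots,n\}$; applying it with $a=\lambda(\Omega_0)$ and $a=\kappa$ gives $\lambda(\Omega_0)^{1/N}\to 1$ and $\kappa^{-(n+1)/N}\to 1$. Because the sum $\sum_{k=1}^n \kappa^{-k/N}$ has only finitely many terms (independent of $N$), we may pass to the limit inside the summation without any convergence concern, obtaining $n$ in the limit. The first term of \eqref{b0bn} therefore tends to $\tfrac{1}{2}$ and the sum to $n$, giving
\begin{equation}
\lim_{N\to\infty} b_n \;=\; \tfrac{1}{2}+n,
\end{equation}
independently of $\kappa$ and of $\lambda(\Omega_0)$, as claimed.

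There is essentially no hard step here: the only subtlety worth flagging is that the exponents $n/N$ and $k/N$ in \eqref{b0bn} vanish precisely because $n$ (the layer index) and $k$ (the summation index, $k\le n$) are held \emph{fixed} while $N$ (the width) grows; the asymptotic statement is pointwise in $n$, not uniform in $n$. The last sentence of the lemma then follows by continuity: for large but finite $N$, $b_n$ is close to $\tfrac{1}{2}+n$, so the sequence of lower bounds grows asymptotically linearly in the layer index $n$ at unit slope, which in turn bounds the required noise amplitude from below (when $\kappa<1$) or from above (when $\kappa>1$) by a linear function of $n$.
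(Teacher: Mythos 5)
Your proposal is correct and follows essentially the same route as the paper, whose proof is simply ``take the limit of \eqref{b0bn} for $N\rightarrow\infty$''; you fill in the elementary details ($a^{1/N}\to 1$ for fixed $a>0$, finitely many terms in the sum) that the paper leaves implicit. No issues.
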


\begin{proof}
The proof follows by taking the limit 
of \eqref{b0bn} for $N\rightarrow \infty$.
\end{proof}


{\color{r}
\section{Markovian neural networks}
\label{app:NN_characterization}

Consider the neural network model \eqref{discrete_dyn}, hereafter 
rewritten for convenience
\begin{equation}
\bm X_{n+1}=\bm H_n\left(\bm X_n,\bm w_n,\bm \xi_n\right)\quad n=0,\ldots,L-1.
\label{MP}
\end{equation}
In this Appendix we show that if the random vectors 
$\{\bm \xi_0,\ldots,\bm \xi_{L-1}\}$ are statistically 
independent, and if $\bm \xi_n$ is independent of 
past and current states, i.e., $\{\bm X_0,\ldots,\bm X_{n}\}$,
then \eqref{MP} defines a Markov process\footnote{Note that 
$\bm X_n$ depends on $\bm \xi_{n-1}$ via 
the recursion \eqref{MP}.}. To this end, we first 
notice that the full statistical information of 
the neural network is represented by the 
joint probability density function of 
$\{\bm X_L,\ldots, \bm X_0,\bm \xi_{L-1}, \ldots,  \bm \xi_0\}$.
Let us denote by 
$p\left(\bm x_L,\ldots, \bm x_0, \bm \xi_{L-1},\ldots,\bm \xi_0\right)$ such joint 
density function. By using well-known identities 
for conditional PDFs we can write 
\begin{align}
p\left(\bm x_L,\ldots, \bm x_0, \bm \xi_{L-1},\ldots,
\bm \xi_0\right) =&
p\left(\bm x_L|\bm x_{L-1},\ldots, \bm x_0, \bm \xi_{L-1},\ldots,\bm \xi_0\right)\times \nonumber\\
&p\left(\bm x_{L-1}|\bm x_{L-2},\ldots, \bm x_0, \bm \xi_{L-1},\ldots,\bm \xi_0\right)\times\cdots\nonumber\\
&p\left(\bm x_{0}|, \bm \xi_{L-1},\ldots,\bm \xi_0\right)p\left(\bm \xi_{L-1},\ldots,\bm \xi_0\right).
\label{prob_id}
\end{align}  
Clearly, from equation \eqref{MP} it follows that
\begin{equation}
p\left(\bm x_{n+1}|\bm x_{n},\ldots, \bm x_0, \bm \xi_{L-1},\ldots,\bm \xi_0\right)= p\left(\bm x_{n+1}|\bm x_{n}, \bm \xi_{n}\right). 
\end{equation}
This allows us to rewrite \eqref{prob_id} as 
\begin{align}
p\left(\bm x_L,\ldots, \bm x_0, \bm \xi_{L-1},\ldots,
\bm \xi_0\right) =&
p\left(\bm x_L|\bm x_{L-1},\bm \xi_{L-1}\right)p\left(\bm x_{L-1}|\bm x_{L-2},\bm \xi_{L-2}\right)\cdots p\left(\bm x_{1}|\bm x_{0},\bm \xi_{0}\right)p\left(\bm \xi_{L-1},\ldots,\bm \xi_0\right).
\label{prob_id1}
\end{align}  
If $\{\bm \xi_{L_1},\ldots,\bm \xi_0\}$ are 
statistically independent, and if $\bm \xi_k$ is independent 
of $\{\bm X_0,\ldots, \bm X_k\}$ then 
\begin{align}
p\left(\bm \xi_{L-1},\ldots,\bm \xi_0\right)= &p\left(\bm \xi_{L-1}\right)\cdots p\left(\bm \xi_0\right)\nonumber\\
= &p\left(\bm \xi_{L-1}|\bm x_{L-1}\right)\cdots p\left(\bm \xi_0|\bm x_{0}\right).
\label{noisepk}
\end{align}
Substituting \eqref{noisepk} into \eqref{prob_id1} and 
integrating over $\{\bm \xi_{L-1},\ldots,\bm \xi_0\}$ yields 
\begin{align}
p\left(\bm x_L,\ldots, \bm x_0\right) =&\underbrace{\left(\int
p\left(\bm x_L|\bm x_{L-1},\bm \xi_{L-1}\right) 
p\left(\bm \xi_{L-1}|\bm x_{L-1}\right)d\bm \xi_{L-1}\right)}_{p\left(\bm x_L|\bm x_{L-1}\right)}\times \cdots\nonumber\\
&\underbrace{\left(\int
p\left(\bm x_1|\bm x_{0},\bm \xi_{0}\right) 
p\left(\bm \xi_{0}|\bm x_{0}\right)d\bm \xi_{0}\right)}_{p\left(\bm x_1|\bm x_{0}\right)} p(\bm x_0)\nonumber\\
=&p\left(\bm x_L|\bm x_{L-1}\right)p\left(\bm x_{L-1}|\bm x_{L-2}\right)\cdots p\left(\bm x_1|\bm x_0\right)p(\bm x_0),
\label{prob_id2}
\end{align} 
which clearly represents the joint PDF of a Markov process. 
Note that the Markovian property of the process $\{\bm X_n\}$
representing the neural network states relies heavily on the fact that the joint PDF of the random vectors $\{\bm \xi_{L-1},\ldots,\bm \xi_0\}$  can be factorized as a product of conditional densities as in \eqref{noisepk}, i.e., that 
the random vectors are statistically independent, and also that $\bm \xi_n$ 
is independent of past and current states, i.e., $\{\bm X_0,\ldots,\bm X_{n}\}$.}

\bibliographystyle{plain}
\bibliography{nnet}

\end{document}